\documentclass{article}
\usepackage{microtype}
\usepackage{graphicx}
\usepackage{subcaption}
\usepackage{booktabs}
\usepackage{hyperref}
\usepackage{algorithm}
\usepackage{algorithmic}

\newcommand{\R}{\mathbb{R}}
\newcommand{\Sspace}{\mathbb{S}}
\newcommand{\manifold}{\mathcal{M}}
\newcommand{\metric}{g}
\usepackage[accepted]{icml2026}
\usepackage{enumitem}
\usepackage{amsmath}
\usepackage{amssymb}
\usepackage{mathrsfs}
\newcommand{\SO}{\mathsf{SO}}
\usepackage{mathtools}
\usepackage{etoc}
\usepackage{amsthm}
\usepackage{colortbl}
\usepackage{xcolor}
\usepackage{booktabs}
\usepackage{float}
\usepackage{multirow}
\usepackage[capitalize,noabbrev]{cleveref}

\theoremstyle{plain}
\newtheorem{theorem}{Theorem}[section]
\newtheorem{proposition}[theorem]{Proposition}

\newtheorem{corollary}[theorem]{Corollary}
\theoremstyle{definition}
\newtheorem{definition}[theorem]{Definition}

\theoremstyle{remark}

\newcommand{\modelname}{\texttt{Polaris}}

\icmltitlerunning{\modelname: Coupled Orbital Polar Embeddings for Hierarchical Concept Learning}

\begin{document}

\twocolumn[
  \icmltitle{\modelname{}: Coupled Orbital Polar Embeddings \\for Hierarchical Concept Learning}

  \icmlsetsymbol{equal}{*}

  \begin{icmlauthorlist}
    \icmlauthor{Sahil Mishra}{equal,yyy}
    \icmlauthor{Srinitish Srinivasan}{equal,yyy}
    \icmlauthor{Sourish Dasgupta}{xxx}
    \icmlauthor{Tanmoy Chakraborty}{yyy,zzz}
  \end{icmlauthorlist}

  \icmlaffiliation{yyy}{Indian Institute of Technology Delhi, New Delhi, India}
    \icmlaffiliation{zzz}{Indian Institute of Technology Delhi, Abu Dhabi, UAE}
  \icmlaffiliation{xxx}{KDM Lab, Dhirubhai Ambani University Gandhinagar, Gujarat, India}
  \icmlcorrespondingauthor{Sahil Mishra}{sahil.mishra@ee.iitd.ac.in}

  \icmlkeywords{Machine Learning, ICML}

  \vskip 0.3in
]

\printAffiliationsAndNotice{\icmlEqualContribution}

\begin{abstract}
Real-world knowledge is often organized as hierarchies such as product taxonomies, medical ontologies, and label trees, yet learning hierarchical representations is challenging due to asymmetric structure and noisy semantics. We introduce {\modelname}, a polar hyperspherical embedding framework that separates semanticity from hierarchy using angular geometry and radius, enabling the learning of meaning and structure without interference. To map a latent representation onto the sphere, we project it to the tangent space at the North pole, apply the exponential map, and learn unit-norm representations using spherical linear layers. \modelname\ then combines robust local constraints, global regularization that prevents geometric collapse, and uncertainty-aware asymmetric objectives that encourage directional containment. At inference time, \modelname\ uses structure-guided retrieval to efficiently narrow down candidate parents before final ranking. We evaluate \modelname\ on different settings of taxonomy expansion -- spanning trees, multi-parent DAGs, and multimodal hierarchies, showing consistent improvements of up to $\sim$19 points in top-$K$ retrieval and up to $\sim$60\% reduction in mean rank over fourteen strong baselines.
\end{abstract}

\section{Introduction and Prior Works}
\label{sec:intro}
In many real-world systems, knowledge is organized into hierarchies, where broad concepts are gradually refined into more specific ones. Specifically, in today's web-scale systems, these hierarchies provide a compact, reusable backbone for many applications, including query understanding \cite{probase}, content browsing \cite{yang-2012-constructing}, personalized recommendation \cite{zhang2014taxonomyrec,hypertaxorec}, and web search \cite{gabrilovich2009queryclass}. For example, large online retailers such as Amazon \cite{mao2020octet, cheng-etal-2024-e} and eBay \cite{ebay} organize products into categories at different levels of detail, so customers can quickly navigate from broad departments to fine-grained types when searching for an item \cite{mahabal2023producing}. Similarly, search engines can use category structure to better interpret user intent and improve the quality of retrieval and ranking \cite{ding-etal-2013-improving}.

\begin{figure}
    \centering
\includegraphics[width=0.9\columnwidth]{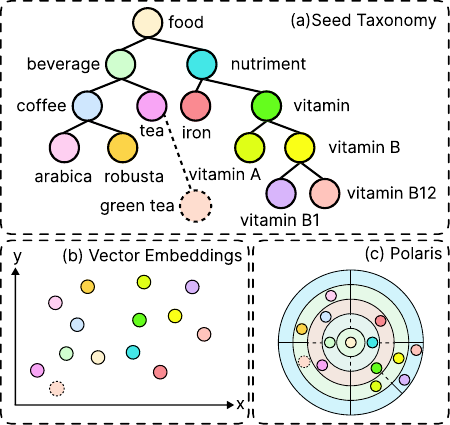}
    \caption{An overview of hierarchical structure and the comparison of \modelname\ with vector embeddings.}
    \label{fig:overview}
\end{figure}

These hierarchical knowledge systems are mostly curated by human experts or crowdsourced \cite{wikidata, lipscomb2000medical}. A core requirement in these settings is to represent \emph{asymmetric} relations: a child should be semantically compatible with its parent while also being \emph{contained} by it in the hierarchy \cite{sastry2025global}. Such manual curations are time-consuming, labor-intensive, and rarely complete. New products, entities, and concepts appear continuously, creating a persistent need for methods to expand hierarchies so new concepts can be added to an existing hierarchy with the correct parent relationships, redefining the task as {\em taxonomy expansion}. To scale this process, we need methods that can learn the structural patterns already present in the seed hierarchy or taxonomy and use them to place new nodes automatically, with minimal human intervention.

\textbf{Prior studies and limitations.} Many taxonomy expansion methods embed each concept as a single point in Euclidean space and score parent-child links via symmetric distance or dot-product similarity \cite{taxoexpan, taxoenrich, steam}, which mismatches the inherently asymmetric nature of hierarchies. Hyperbolic embeddings address this by exploiting exponential volume growth with radius, providing high capacity at deeper levels and alleviating crowding in tree-like structures \cite{poincare, hyperminer, ma-etal-2021-hyperexpan-taxonomy}. Cone-based order embeddings instead encode asymmetry explicitly via directional containment, requiring a child to lie within its parent’s entailment cone \cite{ganea2018cone}. Related containment-based approaches use regions such as boxes \cite{li2018smoothing,vilnis-etal-2018-probabilistic, boxtaxo} and probabilistic variants like Gaussian boxes \cite{mishra2026taxobell}. Despite making partial order more direct, these methods can still struggle in practice because taxonomy expansion must jointly balance semantic relatedness with correct hierarchical level; under noisy descriptions or non-tree structure, this coupling can destabilize optimization and amplify small semantic errors into large placement mistakes.

\textbf{Need for polar representation.} Polar embeddings are appealing for hierarchies because they can decouple semantic direction from hierarchical level, but they remain underused in practice due to unstable angular optimization. Prior polar methods often rely on ad hoc stabilizers such as modulus-based angle wrapping \cite{iwamoto2021polar, Zhang_Cai_Zhang_Wang_2020}, sector-based partitions with specialized losses \cite{zhu2025sector}, or sigmoid rescaling to restrict angles \cite{atzeni-etal-2023-polar}, which can compromise manifold continuity and yield inconsistent geometry. These issues are amplified under weak supervision common in real taxonomies, where noise and incompleteness manifest as high-variance angular drift, hindering reliable separation of semantics from depth. More related work is discussed in Appendix~\ref{app:related-works}.

\textbf{Our proposed method.} This motivates a representation that separates these two signals instead of forcing them into the same geometry. We propose \modelname, a polar hyperspherical framework that decouples semantic similarity from hierarchical position by modeling meaning as direction on the unit sphere and hierarchical level as a separate orbital potential derived from the existing hierarchy. To map pretrained encoder features onto the sphere without breaking continuity, we project them to the tangent space at the North pole, apply the exponential map, and learn unit-norm representations using spherical linear layers. \modelname\ then combines robust local training objectives, global regularization that prevents geometric collapse, and uncertainty-aware asymmetric learning to better handle noisy semantics during expansion. At inference time, \modelname\ uses the orbital potential to prune candidates across levels before final angular ranking, making expansion both efficient and accurate in large hierarchies.


\modelname\ consists of four components. First, we introduce manifold-consistent spherical encoding. Starting from an existing latent representation, we map each node to the hypersphere by projecting its representation to the tangent space at the North pole and applying the Riemannian exponential map. We then enforce unit-norm features using spherical linear layers with explicit re-projection, so all transformations remain closed on the hyperspherical manifold, and avoid discontinuous tricks such as angle wrapping or modulus constraints. Second, \modelname\ performs robust local structure learning using a continuous geodesic triplet objective on intrinsic angular distance, which learns parent–child relations under noisy semantics without relying on multiple hard sector losses to define angular bounds. Third, to prevent high-dimensional collapse, such as equator concentration, we add global geometric regularization via an anisotropic spherical SVGD term with a von Mises–Fisher kernel and a pole-favoring score, encouraging stable, well-spread representations. Fourth, \modelname\ is uncertainty-aware, modeling each node as a vMF distribution and optimizing an asymmetric vMF–KL objective that keeps parent distributions directionally aligned while ensuring they are higher-entropy than their children, providing a principled notion of containment under ambiguity. Finally, during inference, \modelname\ performs coupled orbital retrieval, where the hierarchy-derived orbital potential defines a dynamic cosine decision boundary to gate candidates level-wise before angular re-ranking, reducing the search space while preserving accuracy and aligning retrieval with hierarchical organization. Experimental results show that \modelname\ achieves consistent gains across {single-parent}, {multi-parent}, and {multimodal} hierarchies. On benchmarks, it improves top-$K$ recovery by up to $\sim$19 points, while reducing mean rank by up to $\sim$60\%. Ablations further verify the need of objectives and design choices.

\noindent{Our contributions are summarized as follows.}\footnote{Code: \url{https://github.com/sahilmishra0012/Polaris}}
\begin{itemize}[leftmargin=*, noitemsep]
  \item We propose \modelname{}, a polar hyperspherical framework for hierarchy expansion that decouples semantic similarity (direction) from hierarchical position (orbital potential) while remaining stable under noisy semantics.

  \item We develop a four-component framework to separate asymmetry and hierarchy --- manifold-consistent spherical encoding, robust local structure learning, global geometric regularization to prevent collapse, and uncertainty-aware asymmetric learning. At inference time, coupled orbital retrieval gates candidates across levels before angular re-ranking to efficiently place them.

  \item We evaluate \modelname\ on hierarchy expansion benchmarks -- single-parent trees, multi-parent DAGs, and multimodal hierarchies, and obtain consistent gains of $9$, $6$, $4$ points, respectively, over baselines across all benchmarks. Our results are supplemented by detailed ablations to quantify the contribution of individual components.
\end{itemize}



\section{\modelname}
\label{sec:polaris}

\subsection{The Unit Hyperspherical Manifold}
\label{subsec:hypersphere}

\textbf{Embedding space.}
We model each concept as a point on the unit hypersphere
$\mathcal{M}=\mathbb{S}^{d-1}=\{\mathbf{z}\in\mathbb{R}^d:\|\mathbf{z}\|_2=1\}$,
equipped with the canonical round metric induced by the Euclidean inner product.
To strictly enforce this topological constraint during learning, we apply a differentiable $\ell_2$-normalization operator $\phi(\mathbf{x})=\mathbf{x}/(\|\mathbf{x}\|_2+\epsilon)$ with a small $\epsilon>0$, to all embedding vectors and projection weights.

\begin{proposition}[Geodesic distance on $\mathbb{S}^{d-1}$]
\label{prop:geodesic_sphere}
Let $\manifold = (\Sspace^{d-1}, g)$ be the Riemannian manifold where $g$ denotes the canonical round metric induced by the Euclidean inner product $\langle \cdot, \cdot \rangle_{\mathbb{R}^d}$. For any $\mathbf{z}_i,\mathbf{z}_j\in \mathbb{S}^{d-1}$, the geodesic distance under the round metric represents the length of the shortest path (great circle arc) between these points and is given as,
\begin{equation}
d_{\mathcal{M}}(\mathbf{z}_i,\mathbf{z}_j)=\arccos\!\big(\langle \mathbf{z}_i,\mathbf{z}_j\rangle\big)\in[0,\pi].
\end{equation}
\end{proposition}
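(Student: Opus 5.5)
The proof reduces the problem to a two-dimensional one and compares two bounds. We show that the great-circle arc joining $\mathbf{z}_i$ and $\mathbf{z}_j$ has length $\theta:=\arccos\langle\mathbf{z}_i,\mathbf{z}_j\rangle$, and that no piecewise $C^1$ curve in $\mathbb{S}^{d-1}$ joining the two points is shorter. Since $g$ is induced from $\mathbb{R}^d$, the length of a curve $\gamma:[0,1]\to\mathbb{S}^{d-1}$ is its Euclidean length $\int_0^1\|\dot\gamma(t)\|\,dt$, and $d_{\mathcal{M}}$ is the infimum of these lengths. Cauchy--Schwarz gives $\langle\mathbf{z}_i,\mathbf{z}_j\rangle\in[-1,1]$, so $\theta\in[0,\pi]$ is well defined.

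\textbf{Upper bound.} If $\theta\in\{0,\pi\}$ the points coincide or are antipodal. In the antipodal case we pick any unit $\mathbf{u}\perp\mathbf{z}_i$ (possible since $d\ge 2$). Otherwise we set $\mathbf{u}=(\mathbf{z}_j-\cos\theta\,\mathbf{z}_i)/\sin\theta$, which is a unit vector orthogonal to $\mathbf{z}_i$. The curve $\gamma(t)=\cos(t\theta)\mathbf{z}_i+\sin(t\theta)\mathbf{u}$ stays on the sphere, joins $\mathbf{z}_i$ to $\mathbf{z}_j$, and has constant speed $\theta$, so its length is $\theta$. This gives $d_{\mathcal{M}}\le\theta$.

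\textbf{Lower bound.} This is the main step. Let $\gamma$ be any admissible curve from $\mathbf{z}_i$ to $\mathbf{z}_j$, and define $f(t)=\arccos\langle\gamma(t),\mathbf{z}_i\rangle$, the angle to the starting point. We claim $|\dot f(t)|\le\|\dot\gamma(t)\|$ wherever $f\in(0,\pi)$. To see this, decompose $\gamma=\cos f\,\mathbf{z}_i+\sin f\,\mathbf{v}$ with $\mathbf{v}\perp\mathbf{z}_i$ a unit vector. Differentiating gives $\dot\gamma=\dot f(-\sin f\,\mathbf{z}_i+\cos f\,\mathbf{v})+\sin f\,\dot{\mathbf{v}}$. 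Here $\dot{\mathbf{v}}\perp\mathbf{v}$ and $\dot{\mathbf{v}}\perp\mathbf{z}_i$, so the two summands are orthogonal, and therefore $\|\dot\gamma\|^2=\dot f^2+\sin^2 f\,\|\dot{\mathbf{v}}\|^2\ge\dot f^2$.

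The obstacle is regularity: $f$ fails to be differentiable where $\gamma$ passes through $\pm\mathbf{z}_i$. We avoid this by restricting to the last time $t_0$ with $\gamma(t_0)=\mathbf{z}_i$. On $(t_0,1]$ the angle $f$ is positive, and we stop at the first time $f$ reaches $\theta$ (before any visit to $-\mathbf{z}_i$ if $\theta<\pi$). There $f$ is smooth, and integrating the inequality gives $\mathrm{length}(\gamma)\ge\int|\dot f|\ge f(1)-f(t_0)=\theta$. The case $\theta=\pi$ follows by a limiting argument or by the same integration up to the first hitting time of $-\mathbf{z}_i$. Alternatively, one can note that $f$ is $1$-Lipschitz with respect to arclength, since the chordal-to-angle relation $\|\mathbf{a}-\mathbf{b}\|=2\sin(\angle/2)\le\angle$ holds with asymptotic equality, and so obtain the bound without differentiating.

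Combining the two bounds gives $d_{\mathcal{M}}(\mathbf{z}_i,\mathbf{z}_j)=\theta$. The infimum is attained by the great-circle arc, which identifies it as the shortest path.
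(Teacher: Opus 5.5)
Your proof is correct, and it is more complete than the paper's. The paper's argument (Appendix, proof of Proposition~\ref{prop:geodesic_sphere}) amounts to your upper bound and nothing more. It notes that $\mathbf{z}_i$, $\mathbf{z}_j$ and the origin span a plane meeting $\Sspace^{d-1}$ in a unit great circle, computes the arc length as $L=r\theta=\theta$, and identifies $\theta$ through $\langle\mathbf{z}_i,\mathbf{z}_j\rangle=\cos\theta$. The minimality of that arc is simply asserted as the known fact that shortest paths on a sphere run along great circles. Your lower bound supplies exactly this missing ingredient. You write $\gamma=\cos f\,\mathbf{z}_i+\sin f\,\mathbf{v}$ and derive $\|\dot\gamma\|^2=\dot f^2+\sin^2 f\,\|\dot{\mathbf{v}}\|^2\ge\dot f^2$, so you actually prove that the infimum of curve lengths equals $\arccos\langle\mathbf{z}_i,\mathbf{z}_j\rangle$ rather than assuming it. You also handle the degenerate cases correctly. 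When $\mathbf{z}_j=\pm\mathbf{z}_i$ the two vectors span only a line, so the paper's claim that they span a two-dimensional plane fails there, and in the antipodal case the great circle is not even unique. Your choice of an arbitrary unit $\mathbf{u}\perp\mathbf{z}_i$ covers this. The paper's route gains brevity at the cost of citing the key fact; yours is self-contained. Two small remarks. First, the integrated bound should read $f(t_1)-f(t_0)$, with $t_1$ the first hitting time of $\theta$; the value is still $\theta$. Note also that $|\dot f|\le\|\dot\gamma\|$ itself keeps $\dot f$ bounded near $t_0$, where $\arccos$ is not differentiable, so the integration is legitimate. Second, your alternative Lipschitz argument tacitly uses the triangle inequality for the angular distance, which is an additional fact requiring proof, so the derivative-based argument is the one to keep.
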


The proof is provided in Appendix \ref{proof:isometry}. As $\arccos(\cdot)$ is a smooth, strictly decreasing bijection on $[-1,1]$, minimizing the geodesic distance $d_{\mathcal{M}}(\mathbf{z}_i,\mathbf{z}_j)$ is equivalent in \emph{ordering} to maximizing the inner product $\langle \mathbf{z}_i,\mathbf{z}_j\rangle$ for unit-norm embeddings. This lets us optimize with inner products while still respecting intrinsic spherical geometry, serving as a valid and exact surrogate for optimizing the intrinsic Riemannian distance on the manifold. In contrast, explicitly parameterizing polar angles $(\theta,\psi)$ and enforcing periodicity via discontinuous modular constraints such as $\theta\leftarrow \theta \bmod 2\pi$ introduces discontinuities and coordinate singularities, resulting in a topological mismatch  (i.e., they implicitly model the embedding space as a flat cylindrical manifold with zero Gaussian curvature, rather than a constant-curvature hypersphere) that can destabilize optimization and break manifold-consistent learning. This leads to distorted geodesic metrics and optimization instability, analyzed in Appendix~\ref{app:learning_angles}.

\subsection{Projection onto the Unit Hypersphere}
\label{subsec:projection}
We map latent representations in Euclidean space onto the unit hypersphere $\mathbb{S}^{d-1}$ using tangent-space projection at the North pole of the sphere, followed by the exponential mapping to the unit hypersphere along the geodesic path. Given a concept $l$ (text or image) and its latent representation $\mathbf{e}\in\mathbb{R}^{d_{\text{plm}}}$, we project $\mathbf{e}\in\mathbb{R}^{d_{\text{plm}}}$ to tangent space $T_{\mathbf{p}_N}\mathbb{S}^{d-1}$,
\begin{equation}
\mathbf{v} = \mathrm{proj}_{\mathbf{p}_N}({\mathbf{e}})
= {\mathbf{e}} - \langle {\mathbf{e}}, \mathbf{p}_N \rangle \mathbf{p}_N,
\qquad \mathbf{v}\in T_{\mathbf{p}_N}\mathbb{S}^{d-1},
\end{equation}
where $\mathbf{p}_N\in\mathbb{S}^{d-1}$ is the basis vector $[0,...,0,1]^\intercal$, representing the North pole. Further, we map it onto the sphere via the exponential map,
\begin{equation}
\mathbf{z}_0 = \exp_{\mathbf{p}_N}(\mathbf{v})
= \cos(\|\mathbf{v}\|_2)\,\mathbf{p}_N
+ \sin(\|\mathbf{v}\|_2)\,\frac{\mathbf{v}}{\|\mathbf{v}\|_2+\epsilon}.
\end{equation}

\textbf{Spherical linear layers.}
Standard affine transformations do not preserve the unit-norm constraint of the hypersphere, and the addition of a bias term shifts the representation space away from the origin. We, therefore, use spherical linear layers with three modifications -- \textbf{(a) Manifold Parameter Constraint.} Each row $\mathbf{w}_i$ of $\mathbf{W}\in\mathbb{R}^{d_{\text{out}}\times d_{\text{in}}}$
is constrained to $\|\mathbf{w}_i\|_2=1$ by $\ell_2$-normalizing rows at initialization and after each Riemannian gradient update. \textbf{(b) Bias Elimination.} We remove the bias term to avoid translations that are incompatible with origin-centered hyperspherical geometry. \textbf{(c) Output re-projection.} We re-project outputs to ensure closure on the manifold as, $\mathbf{y}=\frac{\mathbf{W}\mathbf{x}}{\|\mathbf{W}\mathbf{x}\|_2+\epsilon}$.

From the final spherical layer, we obtain the geometric embedding $\mathbf{z}=f_{\text{sphere}}(\mathbf{z}_0)$, where $f_{\text{sphere}}$ is a neural network composed of spherical linear layers.

\subsection{Learning on the Hypersphere}
\label{subsec:learning}
We learn embeddings directly in Cartesian coordinates under the constraint $\mathbf{z}\in\mathbb{S}^{d-1}$, without explicitly optimizing angular coordinates.

\subsubsection{Geometric Learning}
\label{subsec:geom_learning}
We learn local parent--child structure using a robust metric-learning objective on the hypersphere. Given unit-norm embeddings $\mathbf{z}\in\Sspace^{d-1}$, we measure dissimilarity by the intrinsic geodesic angle $\theta_{ij}=\arccos(\langle \mathbf{z}_i,\mathbf{z}_j\rangle)$. Because this angle depends only on inner products, any objective built from $\theta_{ij}$ should be independent of the global coordinate system. We state this invariance explicitly, since it clarifies that our loss captures \emph{relative} hierarchical geometry rather than an arbitrary choice of orientation on the sphere.

\begin{definition}[Rotational Invariance of the Geodesic Welsch Loss]
\label{thm:welsch_invariance}
Let $\theta_{ij} = \arccos(\langle \mathbf{z}_i, \mathbf{z}_j \rangle)$ be the geodesic angle between two concept embeddings $\mathbf{z}_i, \mathbf{z}_j \in \Sspace^{d-1}$. Let the objective function be the Welsch M-estimator applied to this angular distance:
\begin{equation}
    \mathcal{L}_{\text{Welsch}}(\mathbf{z}_i, \mathbf{z}_j) = 1 - \exp\left( -\frac{\theta_{ij}^2}{2\sigma^2} \right),
\end{equation}
where $\sigma > 0$ is a kernel width parameter. This loss function is invariant under the diagonal action of the Special Orthogonal Group $\SO(d)$. That is, for any rotation $\mathbf{R} \in \SO(d)$:
\begin{equation}
    \mathcal{L}_{\text{Welsch}}(\mathbf{Rz}_i, \mathbf{Rz}_j) = \mathcal{L}_{\text{Welsch}}(\mathbf{z}_i, \mathbf{z}_j).
\end{equation}
\end{definition}
Definition~\ref{thm:welsch_invariance} implies that the loss depends only on relative geometry on $\Sspace^{d-1}$, and therefore does not privilege any particular global axis or coordinate chart. This is important in our setting because the sphere admits many equivalent parameterizations, and orientation-specific objectives can introduce spurious biases.

\textbf{Robust geodesic triplet objective.}
We learn local hierarchy using a rotation-invariant objective on the unit hypersphere. Each concept is represented by a unit-norm embedding $\mathbf{z}\in\Sspace^{d-1}$, and we measure semantic dissimilarity using the intrinsic geodesic angle $\theta$ between embeddings. To make training robust to noisy semantics and outliers, we pass this angular distance through a bounded Welsch M-estimator. Concretely, we encourage true parent–child pairs to have small Welsch-penalized angles and push the child away from negative candidates, so that intrinsic spherical distance aligns with hierarchical compatibility. Formally, for a triplet of embeddings ${\mathbf{z}_p,\mathbf{z}_c,\mathbf{z}_n}\subset\Sspace^{d-1}$, we define the geometric learning objective as follows:
\begin{subequations}
\begin{align}
    \theta_{ij} &= \arccos(\langle \mathbf{z}_i, \mathbf{z}_j \rangle) \label{eq:geodesic} \\
    \mathcal{W}(\theta_{ij}) &= 1 - \exp\left( -\frac{\theta_{ij}^2}{2c^2} \right) \label{eq:welsch} \\
    \mathcal{L}_{\text{geom}} &= \max\left( 0, \gamma_\text{geom} + \mathcal{W}(\theta_{cp}) - \mathcal{W}(\theta_{cn}) \right) \label{eq:triplet}
\end{align}
\end{subequations}
where $c$ controls robustness, and $\gamma_\text{geom}$ is the angular margin. Eq.~\eqref{eq:triplet} pulls the child closer to its parent than to negatives in intrinsic angular distance, while the bounded Welsch penalty in Eq.~\eqref{eq:welsch} reduces sensitivity to outliers.

\subsubsection{Anisotropic Spherical SVGD }
\label{subsec:svgd}

Our geometric losses depend only on intrinsic angles (inner products), and are, therefore, invariant to a global rotation of the embedding set (Definition~\ref{thm:welsch_invariance}). As a result, the objective provides no global preference for where embeddings should lie on the sphere. In high dimensions, this lack of global guidance interacts with the concentration-of-measure phenomenon: random unit vectors concentrate near the equator of any fixed axis, and their projections onto that axis vanish as the dimension increases. We capture this effect with the following standard bounds.

\begin{theorem}[Concentration of Measure on $\Sspace^{d-1}$]
\label{thm:concentration}
Let $\mathbf{z}$ be a random vector distributed uniformly on the unit hypersphere $\Sspace^{d-1}$ equipped with the uniform surface probability measure $\sigma$. For any fixed reference axis $\mathbf{u} \in \Sspace^{d-1}$ (e.g., the North pole), let $h(\mathbf{z}) = \langle \mathbf{z}, \mathbf{u} \rangle$ be the projection of $\mathbf{z}$ onto that axis. For any $\epsilon > 0$, the probability that $\mathbf{z}$ resides in the "polar cap" defined by a distance $\epsilon$ from the equator is bounded by:
\begin{equation}
    \sigma\left( \left\{ \mathbf{z} \in \Sspace^{d-1} : |\langle \mathbf{z}, \mathbf{u} \rangle| \ge \epsilon \right\} \right) \le 2 \exp\left( -\frac{d \epsilon^2}{2} \right)
\end{equation}
\end{theorem}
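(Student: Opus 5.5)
The plan is to use the classical volumetric cone argument from Ball's lecture notes on convex geometry: compare the measure of a spherical cap to the volume of a Euclidean ball that contains the cone over that cap. First, by symmetry $\mathbf{z}\mapsto-\mathbf{z}$, which preserves $\sigma$, we have
\begin{equation*}
\sigma\big(\{|\langle\mathbf{z},\mathbf{u}\rangle|\ge\epsilon\}\big)\le 2\,\sigma(C_\epsilon),\qquad C_\epsilon=\{\mathbf{z}\in\Sspace^{d-1}:\langle\mathbf{z},\mathbf{u}\rangle\ge\epsilon\}.
\end{equation*}
It therefore suffices to show $\sigma(C_\epsilon)\le \exp(-d\epsilon^2/2)$. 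For $\epsilon>1$ the cap is empty, so we may assume $0<\epsilon\le 1$.

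Next we use the standard identity that, for a measurable $A\subseteq\Sspace^{d-1}$, $\sigma(A)=\mathrm{vol}_d(K_A)/\mathrm{vol}_d(B^d)$, where $K_A=\{t\mathbf{z}: t\in[0,1],\ \mathbf{z}\in A\}$ is the cone over $A$ and $B^d$ is the unit ball. This follows from polar coordinates together with the rotation invariance and uniqueness of $\sigma$. The task then reduces to finding a Euclidean ball of radius $r$ that contains $K_{C_\epsilon}$, which gives $\sigma(C_\epsilon)\le r^d$.

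\textbf{Case $\epsilon\le 1/\sqrt2$.} We claim $K_{C_\epsilon}\subseteq B(\epsilon\mathbf{u},\sqrt{1-\epsilon^2})$. The ball is convex and contains the origin, since $\epsilon\le\sqrt{1-\epsilon^2}$ exactly when $\epsilon\le1/\sqrt2$. So it suffices to check that it contains the cap points. For $\mathbf{z}\in C_\epsilon$ with $s=\langle\mathbf{z},\mathbf{u}\rangle\ge\epsilon$,
\begin{equation*}
\|\mathbf{z}-\epsilon\mathbf{u}\|^2=1-2\epsilon s+\epsilon^2\le 1-\epsilon^2.
\end{equation*}
Hence $\sigma(C_\epsilon)\le(1-\epsilon^2)^{d/2}\le e^{-d\epsilon^2/2}$, using $1-x\le e^{-x}$.

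\textbf{Case $1/\sqrt2<\epsilon\le1$.} We use instead the ball centered at $\mathbf{u}/(2\epsilon)$ with radius $1/(2\epsilon)$, whose boundary passes through the origin. For a cap point,
\begin{equation*}
\|\mathbf{z}-\mathbf{u}/(2\epsilon)\|^2=1-s/\epsilon+1/(4\epsilon^2)\le 1/(4\epsilon^2).
\end{equation*}
By convexity the ball then contains the whole cone, so $\sigma(C_\epsilon)\le(2\epsilon)^{-d}$. It remains to verify the scalar inequality $(2\epsilon)^{-1}\le e^{-\epsilon^2/2}$, that is, $g(\epsilon)=2\epsilon e^{-\epsilon^2/2}\ge1$ on $[1/\sqrt2,1]$. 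Since $g'(\epsilon)=2(1-\epsilon^2)e^{-\epsilon^2/2}\ge0$ there, $g$ is nondecreasing, and $g(1/\sqrt2)=\sqrt2\,e^{-1/4}\approx1.10>1$. Combining the two cases with the factor $2$ from the symmetry step proves the theorem.

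The main obstacle is the second case. The naive ball $B(\epsilon\mathbf{u},\sqrt{1-\epsilon^2})$ no longer contains the origin once $\epsilon>1/\sqrt2$, so it does not contain the cone, and a different enclosing ball is needed. This is exactly where the monotonicity check on $g$ is required. The cone-volume identity for $\sigma$ is standard and will only be cited.
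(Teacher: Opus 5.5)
Your proof is correct, and it takes a genuinely different route from the paper's. The paper works with the one-dimensional marginal of $h(\mathbf{z})$. It slices the sphere by hyperplanes orthogonal to $\mathbf{u}$, obtains the density $P(t)\propto(1-t^2)^{(d-3)/2}$, compares this with a Gaussian, writes the tail via $\operatorname{erf}$, and finishes with $\operatorname{erfc}(x)\le e^{-x^2}$. You never compute the marginal. You use the cone-volume characterization of $\sigma$ and Ball's two enclosing balls: $B(\epsilon\mathbf{u},\sqrt{1-\epsilon^2})$ for $\epsilon\le 1/\sqrt2$, and $B(\mathbf{u}/(2\epsilon),1/(2\epsilon))$ beyond that, where monotonicity of $2\epsilon e^{-\epsilon^2/2}$ (with $\sqrt2\,e^{-1/4}\approx 1.10$) closes the second case. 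Both containment computations check out. The symmetry step is exact, since the two caps are disjoint and $\sigma$ is invariant under $\mathbf{z}\mapsto-\mathbf{z}$.

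Your route gives a fully rigorous, non-asymptotic inequality with exactly the stated constant, for every $d\ge 1$ and every $\epsilon>0$. The paper's density route is more informative. It exhibits the $\sin^{d-2}\psi$ area element reused in the SVGD appendix and the approximate $\mathcal{N}(0,1/d)$ law behind Theorem~\ref{thm:wlln_collapse}. As written, however, it only reaches the bound heuristically:
\begin{itemize}
\item Replacing the exponent $(d-3)/2$ by $d/2$ lowers the numerator integrand, which is the wrong direction for an upper bound.
\item The inequality $(1-t^2)^{d/2}\le e^{-dt^2/2}$ is also applied to the denominator, which bounds the ratio from below, not above.
\item The step $\operatorname{erf}(\sqrt{d/2})\to 1$ is only a limit.
\end{itemize}
Making that route rigorous would require a lower bound on the normalizer $\int_{-1}^{1}(1-t^2)^{(d-3)/2}\,dt$ plus some constant bookkeeping, which your geometric argument avoids entirely.
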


\begin{theorem}[Asymptotic Orthogonality via The Weak Law]
\label{thm:wlln_collapse}
Let $\mathbf{x}\sim\mathcal{N}(\mathbf{0},\mathbf{I}_d)$ and $\mathbf{z}=\mathbf{x}/\|\mathbf{x}\|_2$, so that $\mathbf{z}$ is uniform on $\Sspace^{d-1}$. By the Weak Law of Large Numbers (WLLN), for any fixed axis $\mathbf{u}\in\Sspace^{d-1}$, $\langle \mathbf{z},\mathbf{u}\rangle \xrightarrow{P} 0 \qquad \text{as } d\to\infty$.

\end{theorem}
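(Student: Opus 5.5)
The plan is to write the projection explicitly as a ratio of two sample averages and let the Weak Law of Large Numbers act on the denominator. By rotational invariance of $\mathcal{N}(\mathbf{0},\mathbf{I}_d)$ (the same $\SO(d)$ invariance behind Definition~\ref{thm:welsch_invariance}), we may take $\mathbf{u}=\mathbf{e}_1$ without loss of generality: for $\mathbf{R}\in\SO(d)$ with $\mathbf{R}\mathbf{u}=\mathbf{e}_1$, $\mathbf{R}\mathbf{x}\overset{d}{=}\mathbf{x}$ and $\langle \mathbf{z},\mathbf{u}\rangle=\langle \mathbf{R}\mathbf{z},\mathbf{e}_1\rangle$. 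Then
\begin{equation}
\langle \mathbf{z},\mathbf{u}\rangle \overset{d}{=} \frac{x_1}{\|\mathbf{x}\|_2}=\frac{x_1/\sqrt{d}}{\sqrt{\tfrac{1}{d}\sum_{k=1}^d x_k^2}} .
\end{equation}
The fact that $\mathbf{z}$ is uniform on $\Sspace^{d-1}$ is itself standard, again by rotational invariance together with $\|\mathbf{x}\|_2>0$ almost surely.

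\textbf{Denominator.} The $x_k^2$ are i.i.d.\ with mean $1$ and finite variance $2$. By the WLLN, $\frac1d\sum_k x_k^2\xrightarrow{P}1$. Because $t\mapsto\sqrt t$ is continuous at $1$, the continuous mapping theorem gives that the denominator converges in probability to $1$.

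\textbf{Numerator.} The variable $x_1\sim\mathcal{N}(0,1)$ has a law that does not depend on $d$, so $x_1/\sqrt d\xrightarrow{P}0$. Concretely, Chebyshev gives $\Pr(|x_1|/\sqrt d\ge\epsilon)\le 1/(d\epsilon^2)$.

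\textbf{Combining.} Slutsky's theorem, via the continuous map $(a,b)\mapsto a/b$ at the point $(0,1)$ where $b\ne0$, yields $\langle\mathbf{z},\mathbf{u}\rangle\xrightarrow{P}0/1=0$.

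One technical point needs care. The sequence $\mathbf{x}=\mathbf{x}^{(d)}$ changes dimension with $d$, so the WLLN should be applied to a single i.i.d.\ sequence $x_1,x_2,\dots$ with $\mathbf{x}^{(d)}=(x_1,\dots,x_d)$. Convergence in probability concerns only the marginal law at each $d$, so this coupling is harmless.

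The statement can also be cross-checked against Theorem~\ref{thm:concentration}, which already gives $\Pr(|\langle\mathbf{z},\mathbf{u}\rangle|\ge\epsilon)\le 2e^{-d\epsilon^2/2}\to0$ and hence the claim directly. The proof itself will present the WLLN route, as the statement's title requires, and mention the concentration bound only as a quantitative confirmation.
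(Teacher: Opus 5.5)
Your proposal is correct and follows the paper's proof essentially step for step. You write $\langle\mathbf{z},\mathbf{u}\rangle$ as $(N/\sqrt{d})/\sqrt{\tfrac{1}{d}\|\mathbf{x}\|_2^2}$, send the numerator to $0$ because $N$ has a $d$-independent $\mathcal{N}(0,1)$ law, send the denominator to $1$ by the WLLN and continuous mapping, and conclude with Slutsky. The differences are cosmetic: the paper gets $N=\langle\mathbf{x},\mathbf{u}\rangle\sim\mathcal{N}(0,1)$ directly as a linear combination of independent Gaussians with $\|\mathbf{u}\|_2=1$ instead of rotating $\mathbf{u}$ to $\mathbf{e}_1$, and it leaves implicit the coupling of dimensions through a single i.i.d.\ sequence that you make explicit.
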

The proofs of Theorems \ref{thm:concentration} and \ref{thm:wlln_collapse} are provided in Appendices \ref{proof:concentration} and \ref{proof:wlln_collapse}, respectively. Theorems~\ref{thm:concentration} and~\ref{thm:wlln_collapse} imply that, in high dimensions, randomly initialized embeddings concentrate near the equator of any fixed axis and become nearly orthogonal to that axis. Since our geometric objectives depend only on pairwise angles, they are rotationally invariant (Definition~\ref{thm:welsch_invariance}) and, therefore, provide no gradient signal that would correct this global placement. In practice, this leads to an undesirable \emph{equator concentration} effect and reduces the effective use of the hyperspherical manifold. Therefore, to make full use of the latitudinal axis instead of letting embeddings concentrate at the equator, particles must be pushed toward the poles, while kernel repulsion enforces uniform spread along the remaining angular directions. To achieve this, we use Stein Variational Gradient Descent (SVGD), specifically with a vMF kernel and an additional score function to drive points toward the poles. SVGD minimizes the KL divergence $\text{KL}(q\| p)$ by transporting embeddings along with the velocity field $\phi(z)$ that lies in the unit ball of a Reproducing Kernel Hilbert Space (RKHS). On a Riemannian manifold, the optimal update direction is given by the Stein operator as,
\begin{equation}
\label{eq:svgd}
    \phi(\mathbf{z}) = \mathbb{E}_{\mathbf{z'} \sim q} \left[ k(\mathbf{z}', \mathbf{z}) \nabla_{\mathbf{z'}} \log p(\mathbf{z'}) + \nabla_{\mathbf{z'}} k(\mathbf{z'}, \mathbf{z}) \right]
\end{equation}
where $k(z',z)$ is the vMF kernel given by $k(z', z) =\exp\!\left( \kappa\, \mathbf{z}'^{\top} \mathbf{z} \right)$ so that interactions respect angular similarity on the sphere. The first term in Eq.~\eqref{eq:svgd} (drift) moves embeddings toward high-density regions of the target $p$, while the second term (repulsion) pushes embeddings apart to preserve diversity.

\textbf{Target score decomposition.} We design $\nabla\log p(\mathbf{z})$ to combine (a) a structural component that discourages equator concentration relative to the reference axis $\mathbf{p}_N$ (Section~\ref{subsec:projection}), and (b) an alignment component that keeps each embedding within the basin of its semantic anchor $\boldsymbol{\mu}$,
\begin{align}
&\nabla_{\mathbf{z}} \log p(\mathbf{z})
=
\nabla_{\mathbf{z}} \log p_{\text{struct}}(\mathbf{z})
+
\nabla_{\mathbf{z}} \log p_{\text{align}}(\mathbf{z}),\\
&\nabla_{\mathbf{z}} \log p_{\text{struct}}(\mathbf{z})
=
\left[ 0, \dots, 0, \frac{z_d}{1 - z_d^2 + \epsilon} \right]^\top,\\
&\nabla_{\mathbf{z}} \log p_{\text{align}}(\mathbf{z})
=
\kappa_{\text{align}} \,\boldsymbol{\mu}.
\end{align}
Intuitively, the structural term repels particles from the equator ($z_d\approx 0$) and attracts them toward the poles ($z_d\to 1$), improving latitude coverage, while the alignment term preserves semantic consistency (details in Appendix~\ref{app:decomposed_score}).

\textbf{Repulsive interaction.} For the repulsion term, we use a vMF kernel with temperature $\kappa_{\text{repel}}$:
\begin{equation}
\label{eq:repulsive_k}
k_{\text{repel}}(\mathbf{z}',\mathbf{z})
=
\exp\!\left(\kappa_{\text{repel}}\,\mathbf{z}'^{\top}\mathbf{z}\right),
\end{equation}
where $\kappa_{\text{repel}}$ controls the strength of repulsion.

\textbf{Manifold-consistent direction.} Since $\phi(\mathbf{z})\in\mathbb{R}^d$, we project it onto the tangent space $T_{\mathbf{z}}\Sspace^{d-1}$ to ensure the update remains valid on the manifold,
\begin{equation}
\hat\phi(\mathbf{z})
=
\mathrm{proj}_{T_{\mathbf{z}}}\!\big(\phi(\mathbf{z})\big)
=
\phi(\mathbf{z})
-
\langle \phi(\mathbf{z}), \mathbf{z} \rangle \mathbf{z}.
\end{equation}
Details on the impact of Spherical SVGD are in Appendix \ref{app:analysis_distributions}. 

\subsubsection{Probabilistic Learning}
\label{subsec:prob_learning}
Since the geometric objectives treat each concept embedding as a single point on $\Sspace^{d-1}$, they inherently fail to capture the varying semantic volume of concepts. Consequently, distance-based metrics cannot distinguish between a broad category and a specific instance, limiting their ability to represent asymmetric hierarchical entailment. Moreover, the SVGD regularizer in Section~\ref{subsec:svgd} controls \emph{global} coverage of the sphere, but does not, by itself, model the hierarchy of individual concepts. We, therefore, introduce a probabilistic objective by representing each concept as a von Mises--Fisher (vMF) distribution, which is a standard directional distribution on $\Sspace^{d-1}$. A vMF distribution in $d$ dimensions is parameterized by a mean direction $\boldsymbol{\mu}\in\Sspace^{d-1}$ and a concentration $\kappa>0$:
\begin{equation}
f(\mathbf{x}\mid \boldsymbol{\mu},\kappa) =
C_d(\kappa)\exp\!\big(\kappa\langle \mathbf{x},\boldsymbol{\mu}\rangle\big),
\qquad \mathbf{x}\in\Sspace^{d-1},
\end{equation}
where the normalizing constant is,
\begin{equation}
C_d(\kappa) =
\frac{\kappa^{\frac{d}{2}-1}}{(2\pi)^{\frac{d}{2}} I_{\frac{d}{2}-1}(\kappa)}.
\end{equation}
We predict node-specific parameters $(\boldsymbol{\mu}_i,\kappa_i)$ from the point embedding $\mathbf{z}_i$,
\begin{align}
\boldsymbol{\mu}_i &= f_{\text{sphere}}(\mathbf{z}_i;\Theta_{\mu}),
\qquad \|\boldsymbol{\mu}_i\|_2=1,\\
\kappa_i &= \mathrm{Softplus}\!\left(\mathbf{w}_{\kappa}^{\top}\mathbf{z}_i + b_{\kappa}\right),
\end{align}
where $\Theta_\mu,\mathbf{w}_\kappa,b_\kappa$ are learnable parameters of the operators governing $\mu$ and $\kappa$ respectively. To quantify the hierarchical relationship between a child and its parent, we use the KL divergence between their vMF distributions, which can be approximated as,
\begin{equation}
  \label{eq:vmf_kl}
  \begin{split}
      D_{\mathrm{KL}}(\mathrm{vMF}_c \,\Vert\, \mathrm{vMF}_p)
      &= \log C_d(\kappa_c) - \log C_d(\kappa_p) \\
      &\quad + \mathcal{A}_d(\kappa_c) \Bigl( \kappa_c - \kappa_p\, \boldsymbol{\mu}_c^{\top}\boldsymbol{\mu}_p \Bigr),
  \end{split}
  \end{equation}
where $\mathcal{A}_d(\kappa)=\frac{I_{d/2}(\kappa)}{I_{d/2-1}(\kappa)}$ is the ratio of modified Bessel functions of the first kind and the derivation is provided in Appendix~\ref{app:derivation_kl}. This objective is asymmetric, as minimizing Eq. \eqref{eq:vmf_kl} forces the parent distribution to be directionally aligned with the child while having a lower concentration, i.e., $\kappa_p<\kappa_c$, encoding the fact that parents have higher entropy than children. The probabilistic triplet objective for $\{p,c,n\}$ is,
\begin{equation}
\begin{split}
    \mathcal{L}_{\mathrm{vMF}} = \max\Bigl( & 0, \; \gamma_\text{prob} + D_{\mathrm{KL}}(\mathrm{vMF}_c \,\Vert\, \mathrm{vMF}_p) \\
    & - D_{\mathrm{KL}}(\mathrm{vMF}_c \,\Vert\, \mathrm{vMF}_n) \Bigr).
\end{split}
\end{equation}
where $\gamma_\text{prob}$ is the margin of the triplet loss.
\subsubsection{Optimization on the Sphere}

We optimize hyperspherical parameters using Riemannian Adam, since standard Euclidean updates do not preserve the unit-norm constraint due to their additive nature. Let $\mathbf{z}_t\in\mathcal{M}=\Sspace^{d-1}$ denote a parameter at iteration $t$, and let $\nabla_{\mathbf{z}}\mathcal{L}$ be the Euclidean gradient computed in the ambient space $\mathbb{R}^d$. Each Riemannian Adam step consists of (i) projecting the gradient onto the tangent space, (ii) updating first and second moments (with parallel transport for the momentum), and (iii) mapping the update back to the manifold.

\textbf{Tangent projection.}
We obtain the Riemannian gradient by projecting $\nabla_{\mathbf{z}}\mathcal{L}$ onto the tangent space
$T_{\mathbf{z}_t}\mathcal{M}$:
\begin{equation}
\mathbf{g}_t
=
\mathrm{proj}_{T_{\mathbf{z}_t}}\!\big(\nabla_{\mathbf{z}}\mathcal{L}\big)
=
\nabla_{\mathbf{z}}\mathcal{L}
-
\langle \nabla_{\mathbf{z}}\mathcal{L}, \mathbf{z}_t\rangle\,\mathbf{z}_t.
\end{equation}

\textbf{Riemannian Adam moments.}
Because tangent spaces vary across the manifold, the momentum must be transported from
$T_{\mathbf{z}_{t-1}}\mathcal{M}$ to $T_{\mathbf{z}_t}\mathcal{M}$ before accumulation:
\begin{align}
\mathbf{m}_t
&=
\beta_1\,{\mathbf{z}_{t-1}\to \mathbf{z}_t}(\mathbf{m}_{t-1})
+
(1-\beta_1)\mathbf{g}_t,\\
v_t
&=
\beta_2\,v_{t-1}
+
(1-\beta_2)\|\mathbf{g}_t\|_2^2,
\end{align}
where $v_t$ is scalar-valued and requires no transport.

\textbf{Exponential-map update.}
After bias correction, $\hat{\mathbf{m}}_t=\frac{\mathbf{m}_t}{1-\beta_1^t}$ and $\hat{{v}}_t=\frac{v_t}{1-\beta_2^t},$ we update $\mathbf{z}_t$ along the manifold using the exponential map,
\begin{equation}
\mathbf{z}_{t+1}
=
\exp_{\mathbf{z}_t}\!\left(
-\eta\,\frac{\hat{\mathbf{m}}_t}{\sqrt{\hat{{v}}_t}+\epsilon}
\right),
\end{equation}
which preserves $\|\mathbf{z}_{t+1}\|_2=1$ by construction. Parameters not constrained to $\Sspace^{d-1}$
(e.g., Euclidean weights in auxiliary heads) are optimized using standard Adam.
\begin{table*}[t]
\centering
\caption{
\textbf{Comprehensive performance comparison of \modelname\ against all baselines on single-parent hierarchies.}
Results are reported as $mean^{std}$ over five independent runs.
The \colorbox{orange!45}{first}, \colorbox{orange!25}{second}, and \colorbox{yellow!35}{third} best results per column are highlighted.
Lower values ($\downarrow$) are better for MR.
}
\label{tab:single_parent}

\resizebox{\textwidth}{!}{

\small
\setlength{\tabcolsep}{1.0pt}
\renewcommand{\arraystretch}{1.12}

\begin{tabular}{l ccccc ccccc ccccc}

\toprule

\multirow{2}{*}{\textbf{Method}} &
\multicolumn{5}{c}{\textbf{Science}} &
\multicolumn{5}{c}{\textbf{WordNet}} &
\multicolumn{5}{c}{\textbf{Environment}} \\

\cmidrule(lr){2-6}
\cmidrule(lr){7-11}
\cmidrule(lr){12-16}

& \textbf{R@1}
& \textbf{R@5}
& \textbf{Wu\&P}
& \textbf{MR}$\downarrow$
& \textbf{MRR}

& \textbf{R@1}
& \textbf{R@5}
& \textbf{Wu\&P}
& \textbf{MR}$\downarrow$
& \textbf{MRR}

& \textbf{R@1}
& \textbf{R@5}
& \textbf{Wu\&P}
& \textbf{MR}$\downarrow$
& \textbf{MRR} \\

\midrule

TransE
& $6.3^{1.1}$ & $18.7^{2.0}$ & $40.2^{1.6}$ & $312.4^{29.1}$ & $12.1^{1.3}$
& $4.7^{0.9}$ & $17.2^{2.1}$ & $36.8^{1.4}$ & $428.6^{71.3}$ & $9.7^{1.1}$
& $5.9^{1.2}$ & $19.4^{2.3}$ & $41.7^{3.0}$ & $108.3^{10.4}$ & $11.6^{1.4}$ \\

RotatE
& $7.8^{1.2}$ & $20.9^{2.1}$ & $42.1^{1.5}$ & $286.9^{26.4}$ & $13.7^{1.4}$
& $5.6^{1.0}$ & $19.1^{2.2}$ & $38.0^{1.3}$ & $401.7^{68.5}$ & $10.8^{1.2}$
& $6.8^{1.3}$ & $21.6^{2.4}$ & $43.4^{3.1}$ & $97.6^{9.7}$ & $12.8^{1.5}$ \\

HAKE
& $12.4^{1.8}$ & $31.6^{2.7}$ & $51.6^{1.7}$ & $182.8^{19.6}$ & $22.4^{1.9}$
& $9.3^{1.6}$ & $29.8^{2.8}$ & $47.9^{1.5}$ & $246.2^{44.1}$ & $18.9^{1.6}$
& $13.6^{2.2}$ & $33.9^{2.6}$ & $54.3^{3.2}$ & $63.8^{6.9}$ & $23.1^{2.0}$ \\

BERT+MLP
& $13.1^{4.1}$ & $27.3^{3.3}$ & $45.7^{1.7}$ & $241.6^{23.1}$ & $21.3^{3.7}$
& $9.1^{2.5}$ & $38.1^{3.7}$ & $41.4^{1.2}$ & $314.6^{62.0}$ & $16.8^{2.1}$
& $11.1^{3.0}$ & $31.8^{2.1}$ & $48.0^{4.0}$ & $74.2^{7.2}$ & $21.4^{2.6}$ \\

HyperExpan
& $22.6^{3.1}$ & $45.8^{3.0}$ & $59.3^{1.6}$ & $92.7^{12.4}$ & $36.1^{2.7}$
& $16.8^{2.7}$ & $41.2^{3.1}$ & $52.7^{1.4}$ & $112.4^{18.2}$ & $28.7^{2.3}$
& $23.9^{3.6}$ & $44.3^{2.9}$ & $58.8^{2.2}$ & $41.6^{6.1}$ & $35.3^{2.6}$ \\

ConE
& $24.3^{3.4}$ & $47.6^{3.2}$ & $61.7^{1.5}$ & $78.9^{9.8}$ & $39.4^{2.8}$
& $18.1^{3.1}$ & $43.6^{3.3}$ & $54.9^{1.3}$ & $98.7^{13.9}$ & $31.2^{2.5}$
& $26.7^{4.1}$ & $48.2^{3.1}$ & $60.9^{2.1}$ & $36.9^{5.4}$ & $38.6^{2.7}$ \\

Box
& $25.3^{4.5}$ & $49.2^{3.1}$ & $63.1^{1.7}$ & $67.7^{11.3}$ & $43.0^{3.8}$
& $22.3^{4.2}$ & $45.7^{3.6}$ & \cellcolor{yellow!35}$58.1^{1.7}$ & $77.1^{8.8}$ & $34.1^{3.2}$
& $32.3^{6.2}$ & $51.4^{3.5}$ & $65.1^{1.4}$ & $34.1^{7.3}$ & $41.6^{4.9}$ \\

Gumbel Box
& $29.1^{3.9}$ & $52.8^{3.4}$ & \cellcolor{yellow!35}$66.8^{1.4}$ & $60.4^{8.7}$ & $44.1^{3.0}$
& $21.6^{3.6}$ & $48.9^{3.5}$ & $57.8^{1.2}$ & $76.4^{10.6}$ & $35.0^{2.6}$
& $33.6^{4.8}$ & \cellcolor{yellow!35}$53.3^{3.6}$ & \cellcolor{orange!25}$67.5^{1.9}$ & \cellcolor{yellow!35}$28.7^{4.9}$ & $42.5^{2.9}$ \\

Arborist
& $26.5^{4.4}$ & $51.5^{3.5}$ & $61.2^{1.4}$ & $83.1^{7.2}$ & $41.2^{3.1}$
& $20.3^{5.6}$ & $43.9^{2.8}$ & $53.5^{1.3}$ & $92.6^{3.6}$ & $33.7^{2.7}$
& $24.9^{5.9}$ & $45.2^{2.1}$ & $52.5^{1.3}$ & $39.1^{5.8}$ & $33.7^{4.7}$ \\

TMN
& \cellcolor{yellow!35}$31.5^{3.8}$ & \cellcolor{yellow!35}$53.7^{1.9}$ & $65.7^{1.3}$ & \cellcolor{yellow!35}$54.2^{5.1}$ & \cellcolor{yellow!35}$45.5^{2.5}$
& \cellcolor{yellow!35}$23.7^{3.2}$ & \cellcolor{yellow!35}$49.0^{2.6}$ & $56.6^{0.8}$ & \cellcolor{yellow!35}$73.9^{4.9}$ & \cellcolor{yellow!35}$35.8^{2.7}$
& \cellcolor{orange!25}$34.7^{3.7}$ & $51.1^{3.2}$ & $63.9^{2.0}$ & $31.3^{3.5}$ & \cellcolor{yellow!35}$43.8^{2.1}$ \\

TaxoExpan
& $24.6^{3.9}$ & $41.8^{3.1}$ & $55.7^{1.2}$ & $117.6^{15.7}$ & $40.1^{2.8}$
& $17.1^{3.5}$ & $38.1^{6.6}$ & $49.7^{1.8}$ & $141.6^{17.0}$ & $31.1^{2.2}$
& $12.9^{5.7}$ & $37.1^{2.3}$ & $49.8^{1.0}$ & $56.1^{5.3}$ & $28.4^{3.1}$ \\

STEAM
& \cellcolor{orange!25}$34.8^{4.9}$ & \cellcolor{orange!25}$59.7^{4.1}$ & \cellcolor{orange!25}$72.2^{1.3}$ & \cellcolor{orange!25}$31.7^{3.3}$ & \cellcolor{orange!25}$50.7^{3.5}$
& \cellcolor{orange!25}$24.9^{3.9}$ & \cellcolor{orange!25}$54.5^{1.7}$ & \cellcolor{orange!25}$59.2^{1.2}$ & \cellcolor{orange!25}$61.1^{2.4}$ & \cellcolor{orange!25}$37.3^{2.1}$
& \cellcolor{yellow!35}$34.1^{3.4}$ & \cellcolor{orange!25}$55.6^{2.9}$ & \cellcolor{yellow!35}$65.2^{1.7}$ & \cellcolor{orange!25}$27.1^{2.8}$ & \cellcolor{orange!25}$44.2^{2.7}$ \\

\midrule

\rowcolor{orange!45}

\textbf{\modelname}
& $\mathbf{46.18^{2.5}}$
& $\mathbf{69.70^{1.1}}$
& $\mathbf{80.88^{1.3}}$
& $\mathbf{16.19^{1.9}}$
& $\mathbf{55.31^{2.0}}$

& $\mathbf{25.23^{0.9}}$
& $\mathbf{70.1^{4.2}}$
& $\mathbf{59.5^{0.1}}$
& $\mathbf{35.67^{2.9}}$
& $\mathbf{43.10^{0.8}}$

& $\mathbf{47.60^{2.4}}$
& $\mathbf{82.69^{3.1}}$
& $\mathbf{83.03^{1.1}}$
& $\mathbf{6.48^{0.8}}$
& $\mathbf{60.44^{1.0}}$ \\

\bottomrule

\end{tabular}
}
\end{table*}

\subsection{Inference}
\label{subsec:inference}
During inference, we estimate each node's orbital potential by leveraging the structure of the hierarchy. Instead of searching the entire semantic space, we combine a structure-guided radius computation with directional re-ranking. Algorithm~\ref{alg:coupled_inference} contains the inference pseudocode.
\subsubsection{Radius}
\label{subsec:radius}

We assign each node an \emph{orbital potential} (radius) derived from the observed hierarchy. Following the spirit of polar representations \cite{iwamoto2021polar}, we treat radial position as a structural signal
that reflects both depth and the extent to which a node subsumes the hierarchy. For a node $e$, we define a raw radial magnitude using its depth $D(e)$ and its subtree size $N_{\text{desc}}(e)$:
\begin{equation}
R_{\text{raw}}(e)
=
1 + D(e) + \frac{\log\!\big(1 + N_{\text{desc}}(e)\big)}{\log 2}.
\end{equation}
We then normalize $R_{\text{raw}}(e)$ to obtain an orbital potential $r(e)\in[0,1]$ via min--max scaling,
and invert the scale for the root to have maximal potential while leaves lie near the periphery:
\begin{equation}
r(e)
=
1 - ({R_{\text{raw}}(e) - R_{\min}})/({R_{\max} - R_{\min}}),
\end{equation}
where $R_{\min}$ and $R_{\max}$ are computed over all nodes in the given hierarchy. This normalization
stabilizes the subsequent coupling rule and yields a consistent radial signal across hierarchies of
different sizes and branching factors.

\subsubsection{Decision Boundary}
\label{subsec:decision_boundary}
Given a query node $q$ and a candidate $c$, let $\Delta r = |r_q - r_c|$ denote the difference in orbital potential. We couple structure and semantics via a simple validity gate that adapts the required angular similarity to the expected mismatch level. Using cosine similarity on the sphere, $\mathcal{S}_{\text{cos}}(\mathbf{z}_q,\mathbf{z}_c)=\langle \mathbf{z}_q,\mathbf{z}_c\rangle$, we accept a candidate if
\begin{equation}
\label{eq:coupled_gate}
\mathcal{S}_{\text{cos}}(\mathbf{z}_q,\mathbf{z}_c) \;>\; 1 - \gamma(\Delta r)^2,
\end{equation}
where $\gamma$ controls the strength of the radial coupling. Eq.~\eqref{eq:coupled_gate} defines a parabolic decision boundary whose threshold tightens for candidates at similar orbital potential and loosens as $\Delta r$ grows; candidates that fail this threshold are filtered out, while retained candidates are re-ranked by cosine similarity. When $\Delta r\to 0$, the threshold approaches $1$, enforcing strong semantic alignment among nearby levels (e.g., siblings). As $\Delta r$ grows, the boundary relaxes quadratically, but still penalizes large level mismatches, effectively filtering candidates from incompatible hierarchical regions without requiring discrete depth rules.

\subsubsection{Score Function}
\label{subsec:score}

We rank candidates by combining the structure-aware validity gate with continuous angular similarity. Let $\mathbb{I}_{\text{valid}}(q,c)$ denote the indicator that the coupled boundary in Section~\ref{subsec:decision_boundary} is satisfied. Our final score is,
\begin{equation}
\label{eq:score}
\mathrm{Score}(q,c)
=
\mathbb{I}\!\left[\mathcal{S}_{\text{cos}}(\mathbf{z}_q,\mathbf{z}_c) > 1 - \gamma(\Delta r)^2
\right]\cdot
\mathcal{S}_{\text{cos}}(\mathbf{z}_q,\mathbf{z}_c),
\end{equation}
where $\Delta r = |r_q-r_c|$. This form enforces a hard structural filter via the gate while preserving fine-grained semantic ordering among the retained candidates through cosine similarity, ensuring compatibility in both intrinsic spherical geometry and structure-derived orbital potential. 

\textit{\textbf{For further understanding, we present Frequently Asked Questions (FAQs) related to our method in Appendix \ref{app:faq}.
}}

\section{Experiments}
\label{sec:experiments}
We evaluate \modelname\ on three hierarchical inference settings: (i) \textbf{single-parent} taxonomy expansion on tree-structured taxonomies, (ii) \textbf{multi-parent} taxonomy expansion on directed acyclic graph (DAG) hierarchies, and (iii) \textbf{multimodal} hierarchical classification where concepts are represented by images. Across all settings, we adopt the standard \emph{attach-to-seed} protocol from prior taxonomy expansion work \cite{rank_chunk, jiang2022taxoenrich, taxoexpan, ma2021hyperexpan}. We construct a \emph{seed} hierarchy by withholding a subset of leaf nodes as the test set (20\%), train using only relations observed in the seed, and at test time attach each query node in the test set by ranking candidate parents drawn from the seed taxonomy. Additional ablation studies analyzing the contribution of individual components are explained in detail in Appendix~\ref{app:ablation}. The hyperparameter analysis appears in Appendix~\ref{app:hyperparameter_analysis}, while the time complexity analysis and robustness to incomplete taxonomies are presented in Appendix~\ref{app:time} and Appendix~\ref{app:hierarchy_noise}, respectively.

\begin{table}[!t]
\centering
\caption{
\textbf{Comprehensive performance comparison of \modelname\ against baselines on multi-parent hierarchies.} Each entry reports the $mean^{std}$ over five independent runs. The \colorbox{orange!45}{first}, \colorbox{orange!25}{second}, and \colorbox{yellow!35}{third} best results per column are highlighted. For Mean Rank (MR), lower values ($\downarrow$) indicate superior performance and better structural alignment within the latent manifold.
}
\label{tab:multi_parent}

\resizebox{\columnwidth}{!}{

\small
\setlength{\tabcolsep}{1.2pt}
\renewcommand{\arraystretch}{1.1}

\begin{tabular}{l l cc cc cc}

\toprule

& \textbf{Method}
& \multicolumn{2}{c}{\textbf{Hit@K}}
& \multicolumn{2}{c}{\textbf{Recall@K}}
& \multicolumn{2}{c}{\textbf{Ranking}} \\

\cmidrule(lr){3-4}
\cmidrule(lr){5-6}
\cmidrule(lr){7-8}

&
& \textbf{H@1}$\uparrow$
& \textbf{H@5}$\uparrow$
& \textbf{R@1}$\uparrow$
& \textbf{R@5}$\uparrow$
& \textbf{MR}$\downarrow$
& \textbf{MRR}$\uparrow$ \\

\midrule

\multirow{13}{*}{\rotatebox[origin=c]{90}{\textbf{MeSH}}}

& TransE
& $0.8^{0.2}$ & $2.2^{0.3}$
& $0.3^{0.1}$ & $1.5^{0.3}$
& $2018^{63}$ & $3.6^{0.5}$ \\

& RotatE
& $1.1^{0.2}$ & $2.9^{0.4}$
& $0.5^{0.1}$ & $2.1^{0.3}$
& $1812^{57}$ & $4.3^{0.6}$ \\

& HAKE
& $2.1^{0.3}$ & $5.7^{0.6}$
& $1.6^{0.3}$ & $4.6^{0.6}$
& $1196^{48}$ & $7.9^{0.8}$ \\

& HyperExpan
& $6.2^{0.7}$ & $13.7^{0.9}$
& $5.1^{0.6}$ & $12.2^{0.9}$
& $652.8^{33}$ & $15.6^{1.3}$ \\

& ConE
& $9.7^{0.9}$ & $18.4^{1.1}$
& $8.2^{0.8}$ & $16.7^{1.0}$
& $521.6^{26}$ & $18.9^{1.1}$ \\

& Box
& $16.5^{1.2}$ & $31.7^{1.1}$
& $17.1^{1.1}$ & $30.2^{0.6}$
& $620.2^{22}$ & $21.5^{2.9}$ \\

& Gumbel Box
& $12.1^{1.0}$ & $22.9^{1.1}$
& $10.6^{0.9}$ & $20.6^{1.1}$
& $471.8^{23}$ & $20.6^{1.0}$ \\

& BERT+MLP
& $3.7^{0.3}$ & $7.2^{0.3}$
& $1.1^{0.6}$ & $5.2^{0.9}$
& $1352^{57}$ & $9.3^{1.7}$ \\

& TaxoExpan
& $7.3^{0.5}$ & $15.8^{0.1}$
& $2.9^{0.8}$ & $10.3^{0.9}$
& $891.1^{31}$ & $17.1^{2.1}$ \\

& Arborist
& \cellcolor{yellow!35}$16.7^{1.4}$
& $29.4^{0.9}$
& \cellcolor{orange!25}$19.0^{1.7}$
& $31.1^{1.3}$
& $553.6^{26}$
& $21.3^{2.3}$ \\

& TMN
& $16.6^{1.8}$
& \cellcolor{yellow!35}$31.8^{0.2}$
& \cellcolor{yellow!35}$18.1^{0.6}$
& \cellcolor{yellow!35}$33.2^{0.9}$
& \cellcolor{yellow!35}$433.7^{16}$
& \cellcolor{yellow!35}$23.5^{1.2}$ \\

& STEAM
& \cellcolor{orange!25}$18.2^{1.8}$
& \cellcolor{orange!25}$38.2^{1.5}$
& $17.7^{2.7}$
& \cellcolor{orange!25}$35.1^{1.0}$
& \cellcolor{orange!25}$372.6^{16}$
& \cellcolor{orange!25}$25.1^{3.1}$ \\

\rowcolor{orange!45}

& \textbf{\modelname}
& $\mathbf{33.56^{0.1}}$
& $\mathbf{62.67^{0.4}}$
& $\mathbf{27.04^{0.1}}$
& $\mathbf{53.69^{1.2}}$
& $\mathbf{300.40^{8}}$
& $\mathbf{38.61^{0.3}}$ \\

\midrule

\multirow{13}{*}{\rotatebox[origin=c]{90}{\textbf{Verb}}}

& TransE
& $0.9^{0.2}$ & $2.6^{0.4}$
& $0.8^{0.2}$ & $2.4^{0.3}$
& $965.3^{54}$ & $4.8^{0.7}$ \\

& RotatE
& $1.3^{0.3}$ & $3.4^{0.5}$
& $1.1^{0.2}$ & $3.1^{0.4}$
& $892.6^{49}$ & $5.6^{0.8}$ \\

& HAKE
& $3.8^{0.5}$ & $9.6^{0.8}$
& $3.6^{0.5}$ & $9.1^{0.7}$
& $712.4^{44}$ & $9.8^{1.0}$ \\

& HyperExpan
& $7.9^{0.8}$ & $19.2^{1.3}$
& $7.4^{0.7}$ & $18.3^{1.2}$
& $598.1^{40}$ & $14.6^{1.3}$ \\

& ConE
& $9.8^{1.0}$ & $23.4^{1.5}$
& $9.3^{0.9}$ & $22.8^{1.4}$
& $545.7^{36}$ & $16.8^{1.4}$ \\

& Box
& \cellcolor{yellow!35}$10.9^{0.9}$
& $26.8^{1.6}$
& \cellcolor{yellow!35}$10.6^{0.8}$
& $26.2^{1.5}$
& $522.9^{33}$
& $18.2^{1.3}$ \\

& Gumbel Box
& $10.3^{0.9}$ & $25.1^{1.5}$
& $10.0^{0.8}$ & $24.6^{1.4}$
& $506.4^{32}$ & $17.6^{1.2}$ \\

& BERT+MLP
& $4.6^{0.6}$ & $12.4^{1.0}$
& $4.4^{0.6}$ & $12.1^{1.0}$
& $781.2^{46}$ & $10.9^{1.1}$ \\

& TaxoExpan
& $6.4^{0.7}$ & $16.2^{1.2}$
& $6.0^{0.7}$ & $15.8^{1.1}$
& $638.5^{41}$ & $13.1^{1.2}$ \\

& Arborist
& $9.4^{0.9}$ & $24.7^{1.6}$
& $9.2^{0.9}$ & $24.1^{1.5}$
& $524.8^{35}$ & $18.7^{1.4}$ \\

& TMN
& \cellcolor{yellow!35}$10.8^{1.0}$
& \cellcolor{yellow!35}$28.9^{1.8}$
& \cellcolor{yellow!35}$10.6^{1.0}$
& \cellcolor{yellow!35}$28.4^{1.7}$
& \cellcolor{yellow!35}$479.6^{31}$
& \cellcolor{yellow!35}$20.9^{1.5}$ \\

& STEAM
& \cellcolor{orange!25}$12.6^{1.2}$
& \cellcolor{orange!25}$33.8^{2.1}$
& \cellcolor{orange!25}$12.3^{1.1}$
& \cellcolor{orange!25}$33.3^{2.0}$
& \cellcolor{orange!25}$444.9^{29}$
& \cellcolor{orange!25}$23.1^{1.7}$ \\

\rowcolor{orange!45}

& \textbf{\modelname}
& $\mathbf{14.43^{1.8}}$
& $\mathbf{37.58^{2.9}}$
& $\mathbf{14.38^{1.6}}$
& $\mathbf{37.55^{2.9}}$
& $\mathbf{414.12^{22}}$
& $\mathbf{24.99^{2.2}}$ \\

\midrule

\multirow{13}{*}{\rotatebox[origin=c]{90}{\textbf{Food}}}

& TransE
& $2.1^{0.4}$ & $6.3^{0.8}$
& $1.6^{0.3}$ & $5.2^{0.7}$
& $905.4^{41}$ & $8.4^{1.1}$ \\

& RotatE
& $2.7^{0.5}$ & $7.9^{0.9}$
& $2.1^{0.4}$ & $6.6^{0.8}$
& $821.7^{36}$ & $10.2^{1.0}$ \\

& HAKE
& $7.3^{0.8}$ & $16.8^{1.2}$
& $6.2^{0.7}$ & $15.4^{1.1}$
& $523.9^{29}$ & $18.7^{1.6}$ \\

& HyperExpan
& $19.4^{1.5}$ & $33.1^{1.7}$
& $18.2^{1.4}$ & $30.6^{1.6}$
& $312.6^{21}$ & $32.4^{2.2}$ \\

& ConE
& $24.1^{1.7}$ & $37.5^{1.6}$
& $22.4^{1.6}$ & $34.7^{1.8}$
& $261.5^{18}$ & $36.3^{2.0}$ \\

& Box
& \cellcolor{yellow!35}$29.1^{1.8}$
& \cellcolor{yellow!35}$41.1^{0.5}$
& $27.3^{2.3}$
& $36.4^{2.1}$
& $363.2^{12}$
& $45.2^{1.6}$ \\

& Gumbel Box
& $27.5^{1.8}$
& \cellcolor{yellow!35}$41.2^{1.7}$
& $26.3^{1.7}$
& $38.6^{1.9}$
& $232.4^{17}$
& $40.1^{2.1}$ \\

& BERT+MLP
& $13.8^{1.2}$ & $25.7^{0.2}$
& $12.3^{2.7}$ & $26.1^{2.9}$
& $508.8^{35}$
& \cellcolor{yellow!35}$47.1^{2.1}$ \\

& TaxoExpan
& $14.2^{0.7}$ & $27.3^{0.9}$
& $25.1^{2.2}$ & $32.8^{2.3}$
& $343.8^{17}$ & $41.3^{1.7}$ \\

& Arborist
& $21.5^{2.7}$ & $37.2^{1.4}$
& \cellcolor{yellow!35}$29.3^{2.1}$
& $37.7^{2.2}$
& $247.9^{21}$
& $45.3^{1.9}$ \\

& TMN
& \cellcolor{yellow!35}$25.3^{2.1}$
& $40.9^{1.1}$
& \cellcolor{yellow!35}$34.1^{2.6}$
& \cellcolor{yellow!35}$43.2^{1.9}$
& \cellcolor{yellow!35}$192.6^{16}$
& \cellcolor{orange!25}$51.8^{2.1}$ \\

& STEAM
& \cellcolor{orange!25}$34.4^{2.3}$
& \cellcolor{orange!25}$58.7^{0.4}$
& \cellcolor{orange!45}$\mathbf{39.1^{3.1}}$
& \cellcolor{orange!25}$51.3^{1.7}$
& \cellcolor{orange!25}$155.9^{14}$
& \cellcolor{orange!45}$\mathbf{53.8^{2.5}}$ \\

\rowcolor{orange!45}

& \textbf{\modelname}
& $\mathbf{36.77^{0.9}}$
& \cellcolor{orange!45}$\mathbf{61.35^{2.3}}$
& \cellcolor{orange!25}$35.83^{1.2}$
& $\mathbf{60.66^{2.3}}$
& $\mathbf{74.43^{6}}$
& \cellcolor{yellow!35}$46.28^{2.1}$ \\

\bottomrule

\end{tabular}
}
\end{table}

\begin{table}[ht]
\centering

\caption{
\textbf{Comprehensive performance comparison of \modelname\ against baselines on the Caltech--UCSD Birds--200--2011 dataset.}
Each entry reports the $mean^{std}$ over multiple runs.
The \colorbox{orange!45}{first}, \colorbox{orange!25}{second}, and \colorbox{yellow!35}{third} best results per column are highlighted.
For Mean Rank (MR), lower values ($\downarrow$) indicate superior retrieval quality.
}

\label{tab:image_similarity}

\resizebox{\columnwidth}{!}{

\small
\setlength{\tabcolsep}{9pt}
\renewcommand{\arraystretch}{1.15}

\begin{tabular}{lccc}

\toprule

\textbf{Method}
& \textbf{Precision@1}$\uparrow$
& \textbf{MR}$\downarrow$
& \textbf{MRR}$\uparrow$ \\

\midrule

CLIP-1
& $5.13^{0.4}$
& $10.57^{1.6}$
& $17.50^{6.2}$ \\

CLIP-2
& $11.54^{5.1}$
& $8.24^{1.1}$
& $31.66^{2.4}$ \\

TransE
& $21.8^{1.0}$
& $7.4^{1.2}$
& $42.5^{3.0}$ \\

RotatE
& $25.6^{1.1}$
& $6.3^{1.1}$
& $47.0^{3.1}$ \\

HAKE
& $28.4^{1.0}$
& $6.1^{1.0}$
& \cellcolor{yellow!35}$53.3^{2.8}$ \\

HyperExpan
& $28.7^{3.8}$
& \cellcolor{yellow!35}$5.6^{0.7}$
& $49.8^{4.6}$ \\

ConE
& \cellcolor{yellow!35}$33.5^{4.1}$
& \cellcolor{orange!25}$5.1^{0.6}$
& \cellcolor{orange!25}$54.2^{4.2}$ \\

Box
& \cellcolor{orange!25}$73.54^{2.4}$
& $1.61^{0.3}$
& $84.79^{3.4}$ \\

Gumbel Box
& \cellcolor{yellow!35}$73.93^{3.5}$
& \cellcolor{yellow!35}$1.51^{0.4}$
& \cellcolor{yellow!35}$84.87^{1.1}$ \\

\midrule

\rowcolor{orange!45}

\textbf{\modelname}
& $\mathbf{78.91^{6.6}}$
& $\mathbf{1.44^{0.2}}$
& $\mathbf{86.94^{3.9}}$ \\

\bottomrule

\end{tabular}
}
\end{table}

\subsection{Single-Parent Hierarchies}
\label{subsec:exp_single_parent}


\textbf{Datasets.} We evaluate single-parent taxonomy expansion on three public benchmarks. Following prior work, we use the {Environment} (EN) and {Science} (SCI) taxonomies from SemEval-2016 Task 13 on Taxonomy Extraction Evaluation (TExEval-2) \cite{bordea-etal-2016-semeval}, and the {WordNet} taxonomy from \citet{bansal-etal-2014-structured}. Each benchmark provides a human-curated tree-structured hierarchy in which each node (except the root) has exactly one parent. Appendix~\ref{app:dataset} and Table~\ref{tab:dataset} summarize the dataset statistics.

\textbf{Evaluation metrics.} We follow the same evaluation protocol as SemEval-2016 Task 13 on Taxonomy Extraction Evaluation (TExEval-2) \cite{bordea-etal-2016-semeval} and prior works by \citet{liu2021temp, manzoor2020expanding, mishra2026taxobell, jiang2022taxoenrich, mishra2026quantaxo, mishra2024flame} and \citet{yu_steam_2020} to evaluate the single-parent taxonomy expansion using Recall@K (R@K), Mean Rank (MR), Mean Reciprocal Rank (MRR), and Wu \& Palmer (Wu\&P) similarity. These metrics are discussed in detail in Appendix \ref{app:eval_metrics}.

\textbf{Baselines.} We compare \modelname\ against three broad classes of baselines, namely (i) \textbf{flat-space point-embedding and KGE models} that score parent--child relations in $\mathbb{R}^d$ (or its complex extension) using translation/rotation-style operators, including TransE~\cite{transE}, RotatE~\cite{sun2019rotate}, and hierarchy-aware KGE models such as HAKE~\cite{hake}, along with a strong text-only encoder baseline (BERT+MLP), (ii) \textbf{non-Euclidean embeddings} that encode directionality through curved geometry or explicit partial-order constraints, including HyperExpan~\cite{ma2021hyperexpan} and entailment cones (ConE)~\cite{ganea2018cone}, and (iii) \textbf{containment- and seed-structure-aware taxonomy expansion systems} that represent concepts as regions or explicitly learn from the observed seed hierarchy, including box~\cite{boxtaxo, boxneurips} and gumbel box~\cite{gumbel} embeddings, TaxoExpan~\cite{taxoexpan}, STEAM~\cite{steam}, TMN~\cite{zhang2021taxonomy}, and Arborist~\cite{manzoor2020expanding}.
Baselines are discussed in detail in Appendix~\ref{app:baselines}.

\textbf{Results.}
Table~\ref{tab:single_parent} shows that \modelname\ performs strongly and consistently on single-parent taxonomy expansion across {Science}, {WordNet}, and {Environment}, leading on all reported metrics. On {Science}, \modelname\ improves R@1 by $\sim\!11$ points and R@5 by $\sim\!10$ points over the strongest baseline, while reducing Mean Rank by nearly half and lifting MRR by $\sim\!4.6$ points, indicating that the correct parent is recovered more frequently and ranked substantially earlier in the candidate list. On {WordNet}, the gains concentrate in higher-recall measures: R@5 improves by $\sim\!15$ points and MR drops by $\sim\!42\%$, showing that the orbital-potential gate effectively prunes mid-ranked distractors even on the fine-grained lexical sub-taxonomies, while top-heavy measures (R@1, MRR, Wu\&P) also improve modestly over the strongest baseline. On {Environment}, \modelname\ delivers the largest absolute gains, with R@1 of $47.60$ and R@5 of $82.69$ ($+27.1$ over the next best), alongside a sharp $\sim\!4\times$ reduction in MR (from $27.1$ to $6.48$) and a Wu\&P score that exceeds prior methods by over $15$ points, indicating that even imperfect predictions remain within the correct branch of the taxonomy. We present a detailed statistical analysis in Appendix~\ref{app:statistical_analysis}.

\subsection{Multi-Parent Hierarchies}
\label{subsec:exp_multi_parent}
\textbf{Dataset.} We evaluate multi-parent taxonomy expansion on three public benchmarks, namely SemEval-Verb (Verb), SemEval-Food (Food) and Medical Subject Headings (MeSH). {Verb} is derived from WordNet 3.0 of SemEval-2016 Task 14 \cite{jurgens-pilehvar-2016-semeval}, containing \emph{verbs} and semantic relations. {Food} is taken from the SemEval taxonomy extraction/evaluation benchmarks \cite{bordea-etal-2015-semeval}. {MeSH} is built from the Medical Subject Headings controlled vocabulary \cite{lipscomb2000medical}, a curated biomedical concept hierarchy in which nodes can have multiple parents. Dataset statistics are in Appendix~\ref{app:dataset}; Table~\ref{tab:dataset}.

\textbf{Evaluation Metrics.} The evaluation protocol follows the single-parent settings. In addition, we report Hit@K (H@K), which checks if at least one gold parent appears in the top-$K$ predictions (defined in Appendix \ref{app:eval_metrics}).

\textbf{Baselines.} We compare against the same set of baseline methods as in the single-parent setting, spanning (i) relational KGE models, (ii) non-Euclidean embedding approaches, and (iii) containment and taxonomy-expansion methods. Baselines are discussed in detail in Appendix~\ref{app:baselines}.

\textbf{Results.}
Table~\ref{tab:multi_parent} summarizes multi-parent expansion results on {MeSH}, {Verb}, and {Food} under retrieval (H@$K$/R@$K$) and ranking (MR/MRR) metrics, where each query may have multiple gold parents. On {MeSH}, \modelname\ delivers the largest gains across the three benchmarks, with H@1 of $33.56$ and H@5 of $62.67$ ($+15.4$ and $+24.5$ points over the strongest baseline, STEAM), alongside R@1 of $27.04$ and R@5 of $53.69$, indicating that a substantially higher fraction of the gold parent set is recovered at both shallow and deeper cutoffs. Mean Rank drops from $372.6$ to $300.40$ and MRR rises from $25.1$ to $38.61$, showing that correct parents are not only retrieved more often but also placed considerably earlier in the ranked list. On {SemEval-Verb}, which exhibits weaker semantic signal due to its verb-only vocabulary, \modelname\ remains the strongest method on every metric, improving H@1/H@5 by $\sim\!1.8/\!\sim\!3.8$ points and reducing MR by $\sim\!31$ relative to STEAM, demonstrating that the radial gate maintains discriminative power even when surface-form similarity is limited. On {Food}, \modelname\ achieves the best H@5 ($61.35$) and R@5 ($60.66$) along with the lowest MR ($74.43$, $\sim\!2\times$ lower than the next best), confirming that gold parents are localized to a much tighter candidate region; STEAM slightly edges \modelname\ on R@1 and MRR, reflecting Food's relatively shallow ontology where top-heavy accuracy is dominated by direct surface-form overlap. We present a detailed statistical analysis in Appendix~\ref{app:statistical_analysis}.

\subsection{Multimodal Hierarchies}
\label{subsec:exp_multi_modal}
\textbf{Dataset.} We evaluate the multimodal setting on the Caltech--UCSD Birds-200-2011 (CUB-200-2011) \cite{Daroya_2024_CVPR} benchmark, a fine-grained bird image dataset with 200 categories. Following prior work, we use an image-to-label ranking setup, where each image query is evaluated by ranking candidate class labels. Appendix~\ref{app:dataset} summarizes the detailed preprocessing and split protocol.

\textbf{Evaluation metrics.} We report standard ranking metrics for image-to-label retrieval, including Precision@K, MR, and MRR. These metrics are discussed in detail in Appendix \ref{app:eval_metrics}.

\textbf{Baselines.} We compare \modelname\ against a unified set of baselines spanning both foundation-model similarity and structured geometric representations. For the multimodal image-to-label ranking setting, we include two CLIP-based baselines \cite{pmlr-v139-radford21a}: {CLIP1}, which concatenates the latent representations of an image and a candidate label and trains a binary classifier with logistic loss, and {CLIP2}, which uses a margin-based ranking loss to optimize semantic alignment between image and label embeddings. To ensure consistency, we additionally report standard relational KGE baselines (TransE, RotatE, HAKE), hierarchy-aware methods (HyperExpan, ConE), and geometric containment baselines (Box, Gumbel Box). The baselines are discussed in Appendix \ref{app:baselines}.

\textbf{Results.}
Table~\ref{tab:image_similarity} reports image-to-label ranking performance on the CUB-200-2011 benchmark, where each image query is scored against candidate class-label embeddings produced by a frozen CLIP text encoder. \modelname\ achieves the strongest overall performance, with Precision@1 of $78.91$, MR of $1.44$, and MRR of $86.94$, leading on all three reported metrics. Compared to the strongest structured baselines, \modelname\ improves Precision@1 by $\sim\!5$ points over Gumbel Box ($73.93$) and Box ($73.54$), while also reducing MR and lifting MRR by roughly $2$ points, indicating that the correct class label is recovered more consistently and placed higher in the ranked list. The gains over CLIP-based similarity baselines (CLIP-1, CLIP-2) and the relational or hierarchy-oriented embedding methods (TransE, RotatE, HAKE, HyperExpan, ConE) are substantially larger, with Precision@1 gaps exceeding $45$ points in several cases. This indicates that direct CLIP similarity and flat-space relational embeddings struggle to capture the fine-grained visual hierarchy among bird species, whereas \modelname's coupled orbital geometry separates coarse taxonomic groups along the radial axis while preserving fine angular distinctions among siblings. Performance on the entire Caltech–UCSD Birds-200-2011 dataset (i.e., all $200$ classes) against the same baselines is detailed in Appendix~\ref{app:full_birds}, where the relative ordering and gains are preserved at scale.

\section{Conclusion}
\label{sec:conclusion}
We presented \modelname, a manifold-consistent polar hyperspherical framework for hierarchical reasoning that supports both single-parent (tree) and multi-parent (DAG) settings, and transfers to a multimodal hierarchy instantiated from image representations. The key idea is to decouple semantic affinity from hierarchical level by modeling direction on the hypersphere (angular geometry) separately from a radial coordinate (depth), while enforcing unit-norm structure through spherical parameterizations and stabilizing training with global SVGD regularization that prevents equatorial collapse in high dimensions. Across multiple benchmarks spanning text-only taxonomies and image-to-label hierarchies, \modelname\ improves both retrieval and ranking quality as it recovers correct parents/targets more frequently at top-$K$ and ranks them earlier, and it remains effective beyond text-only taxonomies, indicating that the geometric separation is representation-agnostic. Overall, \modelname\ provides a practical and interpretable route to learning hierarchical structure when supervision is weak and semantics are noisy, while retaining a coherent geometric interpretation grounded in intrinsic manifold geometry.
\section{Limitations}
\modelname\ inherits several limitations that we expect to motivate follow-up work. The radial coordinate is computed deterministically from the seed hierarchy, which assumes reasonably reliable supervision: under noisy or incomplete trees, the depth and subtree-size statistics that drive $r$ become unreliable, and additional pre-processing steps such as edge imputation can propagate errors into the orbital gate at inference. Our probabilistic component models each concept as an \emph{isotropic} vMF distribution, which is suboptimal for multi-faceted concepts (e.g., interdisciplinary categories or classes with structurally distinct sub-clusters) whose true directional spread is anisotropic; relaxing this would require richer directional distributions such as Kent or matrix Bingham at the cost of more involved KL approximations. Performance also depends on balancing the semantic alignment and kernel repulsion strengths ($\kappa_{\text{align}}$ vs. $\kappa_{\text{repel}}$), and although our sensitivity analysis (Appendix~\ref{app:hyperparameter_analysis}) shows stability across a reasonable range, the optimal ratio appears to depend on the branching factor and depth distribution of the hierarchy and currently requires manual tuning. Finally, structure-guided retrieval inherits the connectivity of the seed hierarchy, so query nodes whose true parent lies in a sparsely connected or poorly populated region of the seed graph can have their correct parent filtered out before angular re-ranking; a hybrid retrieval scheme that falls back to pure angular similarity when the gate is overly restrictive would mitigate this without sacrificing efficiency in the dense regime.
\section*{Acknowledgements}
The authors acknowledge the support of the IBM-IITD AI Horizons network, Prime Minister Research Fellowship (PMRF), and Google GCP Grant. T. Chakraborty acknowledges the support of the Rajiv Khemani Young Faculty Chair Professorship in AI.

\section*{Impact Statement}
This work introduces a geometrically principled framework for hierarchical concept learning on hyperspherical manifolds, enabling modeling of structured knowledge in domains such as scientific taxonomies, biomedical ontologies and multimodal hierarchies. By correcting geometric biases inherent in high-dimensional spherical spaces and improving the capacity of hierarchical representation, the proposed method has the potential to enhance downstream applications such as ontology expansion and semantic retrieval. At the same time, improved hierarchical modeling may amplify biases present in underlying datasets or ontologies if those structures encode incomplete or skewed knowledge. Care should therefore be taken when deploying such models in sensitive domains, particularly where automated hierarchy construction may influence decision-making or information access.

\newpage

\bibliographystyle{icml2026}
\bibliography{example_paper}

\newpage
\appendix
\onecolumn
\begin{center}
    {\LARGE\bfseries Appendix}

\end{center}

\section*{Structure of the Appendix}

For clarity and ease of navigation, the appendix is organized into the following sections:

\begin{itemize}[leftmargin=1.2em,itemsep=0.45em]

    \item \textbf{\hyperref[app:faq]{Appendix \ref{app:faq}: Frequently Asked Questions (FAQs)}}\\
    Discussion of common questions regarding the motivation, design principles, and modeling choices underlying \modelname.

    \item \textbf{\hyperref[app:dataset]{Appendix \ref{app:dataset}: Benchmark Datasets}}\\
    Detailed statistics, preprocessing pipelines, and construction protocols for all benchmark datasets used in our experiments.

    \item \textbf{\hyperref[app:eval_metrics]{Appendix \ref{app:eval_metrics}: Evaluation Metrics}}\\
    Formal definitions and interpretations of all evaluation metrics reported throughout the paper.

    \item \textbf{\hyperref[app:baselines]{Appendix \ref{app:baselines}: Baselines}}\\
    Architectural and implementation details of all competing methods used for comparison with \modelname.

    \item \textbf{\hyperref[app:ablation]{Appendix \ref{app:ablation}: Ablation Studies}}\\
    Detailed ablation configurations, including expert-consortium variants and temperature settings.

    \item \textbf{\hyperref[app:analysis_distributions]{Appendix \ref{app:analysis_distributions}: Analysis of $\theta$ and $\psi$ Distributions}}\\
    Empirical and theoretical analysis of the learned angular distributions and their geometric implications.

    \item \textbf{\hyperref[app:related-works]{Appendix \ref{app:related-works}: Detailed Related Works}}\\
    Extended discussion of prior literature and the positioning of \modelname\ within existing research.

    \item \textbf{\hyperref[app:cartesian_angular_mapping]{Appendix \ref{app:cartesian_angular_mapping}: Cartesian-to-Angular Mapping}}\\
    Explicit derivations for converting a $d$-dimensional Cartesian vector $\mathbf{z}$ on the hypersphere into hyperspherical angular coordinates.

    \item \textbf{\hyperref[app:learning_angles]{Appendix \ref{app:learning_angles}: Theoretical Limitations of Angular Parameterization}}\\
    Theoretical discussion of optimization challenges and representational pitfalls in angular embedding spaces.

    \item \textbf{\hyperref[app:proofs]{Appendix \ref{app:proofs}: Proofs}}\\
    Complete proofs of all propositions, lemmas, and theorems presented in the main paper.

    \item \textbf{\hyperref[app:decomposed_score]{Appendix \ref{app:decomposed_score}: Additional Discussion on SVGD}}\\
    Further analysis of the decomposed target score function, including a physical interpretation of its force-field dynamics.

    \item \textbf{\hyperref[app:derivation_kl]{Appendix \ref{app:derivation_kl}: Derivation of KL Divergence for vMF Distributions}}\\
    Step-by-step derivation of the KL divergence between two von Mises--Fisher distributions forming the basis of our probabilistic formulation.

    \item \textbf{\hyperref[app:inference]{Appendix \ref{app:inference}: Algorithms for Training and Coupled Orbital Inference}}\\
    Pseudocode and implementation details for training and the coupled orbital retrieval procedure at inference time.

    \item \textbf{\hyperref[app:hyperparameter_analysis]{Appendix \ref{app:hyperparameter_analysis}: Hyperparameter Analysis and Reproducibility}}\\
    Sensitivity analysis over key hyperparameters including embedding dimension $d$ and Welsch loss scale parameter $c$.

    \item \textbf{\hyperref[app:hierarchy_noise]{Appendix \ref{app:hierarchy_noise}: Robustness to Hierarchy Noise}}\\
    Evaluation under edge-removal perturbations to assess robustness against incomplete or noisy taxonomies.

    \item \textbf{\hyperref[app:time]{Appendix \ref{app:time}: Time Complexity Analysis}}\\
    Theoretical computational complexity analysis alongside empirical wall-clock runtime comparisons.

    \item \textbf{\hyperref[app:statistical_analysis]{Appendix \ref{app:statistical_analysis}: Statistical Analysis}}\\
    Statistical significance analysis across multiple random seeds and datasets.

    \item \textbf{\hyperref[app:derive_area]{Appendix \ref{app:derive_area}: Additional Derivations Required for Proofs}}\\
    Detailed derivations of intermediate mathematical steps omitted from the main text for brevity.
\end{itemize}

\section{Frequently Asked Questions}
\label{app:faq}
Following the discussion in Section \ref{sec:polaris}, we address potential reviewer concerns regarding the implementation, theoretical analysis, and evaluation of \modelname.

\subsection{The Expectation direction of the vMF distribution does not lie on the surface of the sphere; it is represented inside the unit ball. Does this violate the consistency of the hyperspherical manifold?}
This concern conflates two distinct objects: the random variable $\mathbf{z}\in\Sspace^{d-1}$, which is the actual embedding used by \modelname, and the expectation $\mathbb{E}[\mathbf{z}]$, which is a summary statistic of the distribution from which $\mathbf{z}$ is drawn. Manifold consistency is a property of the embeddings, not of their statistical summaries, so the location of $\mathbb{E}[\mathbf{z}]$ inside the unit ball has no bearing on whether $\mathbf{z}$ itself respects the hyperspherical geometry.  The expectation lying strictly inside the ball is in fact a \emph{necessary} consequence of having a non-degenerate distribution on the sphere. From the derivation in Appendix~\ref{app:derivation_kl}, we have $\mathbb{E}[\mathbf{z}] = \mathcal{A}_d(\kappa)\boldsymbol{\mu}$, where $\mathcal{A}_d(\kappa) = I_{d/2}(\kappa) / I_{d/2-1}(\kappa)$ is the ratio of modified Bessel functions of the first kind. For any finite concentration $\kappa < \infty$, this ratio satisfies $0 < \mathcal{A}_d(\kappa) < 1$, which immediately yields $\|\mathbb{E}[\mathbf{z}]\| < 1$. Geometrically, this is a direct consequence of Jensen's inequality applied to the convex hull of the sphere: averaging unit vectors that point in even slightly different directions produces a vector with strictly smaller norm. The only way to obtain $\|\mathbb{E}[\mathbf{z}]\| = 1$ is for the distribution to collapse to a Dirac delta concentrated at a single direction, i.e., the limit $\kappa \to \infty$, which corresponds to zero directional uncertainty. A vMF distribution with $\|\mathbb{E}[\mathbf{z}]\| = 1$ would therefore not be a probabilistic model at all but a deterministic point estimate, defeating the very purpose of introducing $\kappa$ as a learnable concentration parameter to model semantic granularity. The strict inequality $\|\mathbb{E}[\mathbf{z}]\| < 1$ is thus not a violation of the hyperspherical structure but rather an indicator of the model's expressiveness: it quantifies how much directional uncertainty the distribution assigns to a concept, with broader parents naturally yielding smaller $\mathcal{A}_d(\kappa)$ and more specific children yielding values closer to one. This is precisely the asymmetry that the vMF--KL objective in Eq.~\eqref{eq:vmf_kl} exploits to encode containment between parent and child concepts.  Crucially, manifold consistency in \modelname\ is enforced operationally on $\mathbf{z}$, not on $\mathbb{E}[\mathbf{z}]$. All geometric operations act on points that lie on the sphere by construction: (i) embeddings are updated via the Riemannian exponential map, which maps tangent-space updates to geodesics on $\Sspace^{d-1}$ and preserves unit norm exactly; (ii) distances are measured geodesically through $\arccos(\langle \mathbf{z}_i,\mathbf{z}_j\rangle)$, which is well-defined only on the sphere; and (iii) SVGD transport fields $\phi(\mathbf{z})$ are projected onto the tangent space $T_{\mathbf{z}}\Sspace^{d-1}$ before being applied. The expectation $\mathbb{E}[\mathbf{z}]$ never enters any of these update rules; it appears only as an intermediate quantity inside the closed-form KL divergence, where its position within the unit ball is mathematically required for the divergence to be finite and informative. Hence, the location of $\mathbb{E}[\mathbf{z}]$ is consistent with both the geometry of $\Sspace^{d-1}$ and the probabilistic role it plays in the objective.

\subsection{In the structural component of the score function in SVGD, the gradient will be close to $\vec{0}$ at the Equator. How will embeddings/parameters spread out now?}
This concern arises from inspecting the structural score in isolation. The component $\nabla_{\mathbf{z}}\log p_{\text{struct}}(\mathbf{z}) = [0,\dots,0,\,z_d/(1-z_d^2+\epsilon)]^\top$ does indeed vanish at $z_d=0$, where embeddings would in principle have no incentive to move along the polar axis. However, the equator is a \emph{saddle point} of this potential rather than a stable attractor: the structural prior $p_{\text{struct}}(z_d) \propto (1-z_d^2)^{-k/2}$ is minimized at $z_d=0$ and grows in both directions toward the poles, so any displacement away from the equator is reinforced rather than restored. The relevant question is therefore not whether the equator is an equilibrium of the structural drift in isolation, but whether the \emph{full} optimization dynamics escape it.  In \modelname, the update direction is given by the full Stein operator, \[ \phi(\mathbf{z}) = \mathbb{E}_{\mathbf{z}' \sim q} \left[ \underbrace{k(\mathbf{z}', \mathbf{z})\,\nabla_{\mathbf{z}'}\log p(\mathbf{z}')}_{\text{Drift (vanishes at equator)}} + \underbrace{\nabla_{\mathbf{z}'} k(\mathbf{z}', \mathbf{z})}_{ \text{Repulsion (non-zero)}} \right], \] which superposes two qualitatively different forces. The drift term encodes the structural and semantic alignment signals and may locally vanish for individual particles at $z_d=0$. The kernel gradient $\nabla_{\mathbf{z}'} k(\mathbf{z}', \mathbf{z})$, by contrast, depends only on \emph{pairwise interactions} between particles and remains active regardless of any single particle's position. It functions as a diffusive pressure that pushes nearby embeddings apart, ensuring that the configuration cannot remain in a degenerate, spatially uniform state.  For the equator to act as a stable attractor, the contributions of all particles to the kernel gradient at any given $\mathbf{z}$ would need to cancel exactly, requiring perfectly symmetric pairwise placement on $\Sspace^{d-1}$. This configuration has measure zero under realistic conditions: random initialization, the heterogeneity introduced by the per-node semantic anchors $\boldsymbol{\mu}$ in $\nabla_{\mathbf{z}}\log p_{\text{align}}$, and the asymmetric distribution of parent-child triplets in the training batch all ensure that the pairwise repulsion is generically non-zero. Any such imbalance produces an infinitesimal perturbation that displaces the embedding off the exact equator, $|z_d| > \epsilon$ for some small $\epsilon > 0$.  Once this symmetry is broken, the structural drift becomes self-amplifying. At any non-equatorial position, the gradient $z_d/(1-z_d^2+\epsilon)$ is non-zero, and crucially its sign matches that of $z_d$: in the northern hemisphere it pushes the embedding further north, and in the southern hemisphere further south. The denominator $1-z_d^2+\epsilon$ also shrinks as $|z_d|$ increases, so the magnitude of the drift grows monotonically along the path toward each pole. The equator therefore behaves as an unstable saddle from which any small kernel-induced displacement triggers exponentially accelerating motion toward one of the two attractors at $z_d \to \pm 1$.  This interplay is by design: the kernel repulsion guarantees that \emph{some} particle is always perturbed off the equator, and the structural drift then carries it efficiently toward the poles without the need for an explicit warm-start or curriculum. The resulting dynamics produce the bimodal latitudinal distribution and uniform longitudinal coverage observed in Figure~\ref{fig:angle_distributions}, in stark contrast to the equatorial collapse seen without SVGD in Figure~\ref{fig:no_svgd_distribution}.

\subsection{What is the specific geometric role of the longitudinal coordinate $\theta$, and why is it useful for hierarchical modeling?}

Each coordinate in \modelname{} plays a distinct and complementary role in the embedding geometry. The radial coordinate $r$, derived from the orbital potential, governs hierarchical depth and acts as a discrete level separator across the taxonomy. The latitudinal angle $\boldsymbol{\psi}$ captures coarse directional polarity along the polar axis, encoding the parent-to-child orientation that drives geodesic alignment between hierarchically related concepts. The longitudinal coordinate $\theta$ plays a different role: rather than encoding hierarchy directly, it provides an additional angular degree of freedom that enlarges the representational capacity available to sibling nodes at any given depth. In short, $r$ controls \emph{where} along the hierarchy a node sits, $\boldsymbol{\psi}$ controls \emph{which direction} it lies relative to the poles, and $\theta$ controls \emph{how it is laterally distributed} on the shell formed by all nodes sharing the same depth and polarity.  To understand this geometrically, consider the set of all nodes at a fixed radius $r$. These nodes lie on a hyperspherical shell whose available representational capacity is determined entirely by its angular dimensions. Hierarchical representations require not only radial separation between levels but also sufficient angular space to place multiple sibling nodes without excessive overlap. The maximum number of points that can be arranged on the $(d-1)$-dimensional hypersphere with minimum pairwise angular separation $\delta$ scales approximately as \[ N(\delta) \sim \mathcal{O}\!\left(\delta^{-(d-2)}\right), \] which follows directly from the metric entropy of $\Sspace^{d-1}$. The exponent $d-2$ is significant: it shows that the packing capacity grows polynomially with the available angular dimensions. Removing even a single angular degree of freedom therefore reduces capacity multiplicatively, not additively, and can substantially compress the space available for organizing sibling configurations.  Within this geometry, $\theta$ functions as an auxiliary azimuthal coordinate that contributes one such angular dimension. Intuitively, if $r$ determines how far a node lies from the origin, and $\boldsymbol{\psi}$ determines its coarse polarity along the polar axis, then $\theta$ allows nodes with similar depth and polarity to spread laterally around the shell. Two siblings of the same parent share approximately the same $r$ (because their depth and subtree sizes are similar) and approximately the same $\boldsymbol{\psi}$ (because they inherit the parent's polar orientation under the geometric loss). The remaining degree of freedom along which they can be distinguished is precisely the longitudinal axis: $\theta$ allows the model to place them on opposite sides of the shell, on neighboring meridians, or anywhere in between, depending on the fine semantic distinctions induced by the encoder. This lateral spread is exactly what prevents geometric crowding among large families of siblings.  It is worth emphasizing that $\theta$ is not the load-bearing coordinate for hierarchy formation. Meaningful hierarchical structure can still emerge without it, since $r$ and $\boldsymbol{\psi}$ together already encode depth and polarity. However, constraining or removing $\theta$ reduces the effective angular volume available to siblings. In such cases, nodes at similar depths become increasingly compressed, and the model is forced to absorb sibling-level distinctions into either the radial coordinate $r$ or the polar coordinate $\boldsymbol{\psi}$. Both choices are undesirable: perturbing $r$ blurs the level structure that the orbital gate relies on at inference, while perturbing $\boldsymbol{\psi}$ disrupts parent-child geodesic alignment. $\theta$ therefore acts as a relief valve that absorbs sibling diversity without disturbing the coordinates that carry the hierarchical signal.  This separation of roles is also what makes the SVGD regularizer effective in our setting. The structural score acts only along the latitudinal axis (Section~\ref{subsec:svgd}), pushing embeddings toward the poles to encode depth, while the kernel-based repulsion distributes them uniformly along the orthogonal angular subspace including $\theta$ (Appendix~\ref{app:analysis_distributions}). Geometrically, $\theta$ thus improves the flexibility and packing efficiency of the embedding manifold: it provides the additional angular capacity needed to accommodate complex taxonomies with high branching factors, while allowing the radial hierarchy and latitudinal polarity to remain comparatively stable and interpretable.

\subsection{Doesn't SVGD bias particles toward the poles? Is the natural concentration of measure on the sphere not desirable?}
This question rests on a subtle but important distinction between two notions of uniformity on $\Sspace^{d-1}$: uniformity with respect to the natural surface measure, and uniformity with respect to representational capacity. The natural measure favors the equator overwhelmingly in high dimensions, but this concentration is a geometric artifact of the coordinate system, not a property that aids hierarchical representation. \modelname's SVGD scheme deliberately reshapes this distribution along the latitudinal axis to encode depth, while preserving uniformity along the remaining angular directions to maximize sibling capacity. We unpack each of these claims below.

\paragraph{Why the natural measure is undesirable.} Consider the Riemannian line element on $\Sspace^{d-1}$ in hyperspherical coordinates, \[ ds^2 = d\psi_1^2 + \sin^2\psi_1\!\left(d\psi_2^2 + \sin^2\psi_2 \left(\cdots + \sin^2\psi_{d-2}\,d\theta^2\right)\right). \] A value of $\psi_1$ defines a slice cut perpendicular to the polar axis, and the $(d-2)$-dimensional surface area of this slice scales as $A(\psi_1) \propto \sin^{d-2}(\psi_1)$. In high dimensions ($d \gg 3$), this term peaks sharply at the equator ($\psi_1 \approx \pi/2$) and decays exponentially toward the poles, so the overwhelming majority of the surface measure is concentrated in a thin equatorial band. Under purely angle-invariant losses (geodesic distance, cosine similarity), there is no force opposing this concentration, and randomly initialized embeddings remain trapped near the equator throughout training. Empirically, this produces the equatorial collapse shown in Figure~\ref{fig:no_svgd_distribution}, where the latitudinal distribution is sharply peaked and the polar regions are essentially unused. From a representation standpoint, this collapse is doubly harmful: depth cannot be encoded along $\psi_1$ because there is no spread along that axis, and the longitudinal coordinate $\theta$ also fails to populate uniformly because all embeddings cluster near the same latitudinal band where they compete for the same azimuthal region.

\paragraph{What SVGD actually does.} The SVGD mechanism in \modelname\ does not push particles toward the poles indiscriminately; it imposes a structural target density that acts only along the latitudinal differential $d\psi_1$. Specifically, the structural score $\nabla_{\mathbf{z}}\log p_{\text{struct}}(\mathbf{z})$ has support only on the polar component $z_d$ (Section~\ref{subsec:svgd}), so the drift it induces lies entirely along the latitudinal axis. The kernel-based repulsive gradients $\nabla_{\mathbf{z}'}k(\mathbf{z}', \mathbf{z})$, by contrast, act isotropically on the manifold and are projected onto the tangent space $T_{\mathbf{z}}\Sspace^{d-1}$ before being applied. Because the structural drift saturates the latitudinal direction, the residual component of the repulsion lies almost entirely along the $(d-2)$-dimensional orthogonal subspace, including the longitudinal coordinate $\theta$. The two forces therefore act on disjoint axes by design: structural drift along $d\psi_1$, kernel repulsion along the orthogonal $d\theta$ subspace.

\paragraph{Why this preserves capacity.} The crucial property of this decomposition is that it ensures $p(\theta\mid\psi_1)$ is uniform on every latitudinal slice, regardless of the slice's physical circumference. Near the poles, the slice radius shrinks as $\sin(\psi_1) \to 0$, but the kernel repulsion still spreads particles uniformly around it; near the equator, the slice is large, but the same uniformity is enforced. Latitudinally, particles are pushed away from the equator and toward the poles to encode hierarchical depth, but longitudinally no direction is privileged over any other. This is exactly the distribution observed in Figure~\ref{fig:angle_distributions}: a sharply structured latitudinal distribution that mirrors the depth profile of the hierarchy, combined with a flat longitudinal distribution that indicates full angular utilization.

\paragraph{No loss of degrees of freedom.} A reasonable concern is whether constraining one degree of freedom (latitude) to encode depth diminishes the model's representational power. It does not. The embedding manifold $\Sspace^{d-1}$ has $d-1$ angular degrees of freedom, of which only one is regulated by the structural score; the remaining $d-2$ remain fully available for encoding semantic distinctions among siblings. In effect, SVGD \emph{orthogonalizes} the use of the manifold: one axis carries ordinal information (depth), and the orthogonal subspace carries nominal information (sibling identity). The natural concentration of measure conflates these two, forcing the model to express both with the same equatorial coordinates and producing the collapse above. \modelname's SVGD scheme separates them, replacing a geometrically unbiased but representationally degenerate distribution with a geometrically structured but representationally complete one.

\paragraph{Summary.} Concentration of measure on $\Sspace^{d-1}$ is undesirable not because the equator is geometrically special but because it concentrates capacity in a region that cannot encode depth and forces siblings to compete for limited azimuthal space. SVGD introduces a structural prior that acts on a single axis to break this collapse, while kernel repulsion ensures that the orthogonal subspace remains uniformly populated. The result is a manifold whose latitudinal axis encodes hierarchical depth and whose longitudinal subspace encodes semantic distinctions, with no loss of degrees of freedom and no implicit bias toward any longitudinal direction.

\subsection{Does SVGD distort the spherical line element $ds^2=d\psi^2+\sin^2\psi\,d\theta^2$?}
No. The spherical line element is a property of the manifold $\Sspace^{d-1}$ itself, and \modelname's SVGD updates are constructed to respect this geometry exactly at every step. The concern that SVGD might implicitly induce a different metric typically arises from the fact that the raw Stein operator $\phi(\mathbf{z})$ is computed in the ambient Euclidean space $\mathbb{R}^d$, where the natural distance is chordal rather than geodesic. If this update were applied additively, it would indeed push embeddings off the sphere and, after re-normalization, distort the intrinsic distances between particles. \modelname\ avoids this through two mechanisms that together preserve manifold consistency.  First, the raw transport field $\phi(\mathbf{z}) \in \mathbb{R}^d$ is projected onto the tangent space $T_{\mathbf{z}}\Sspace^{d-1}$ before any update is applied, \[ \hat{\phi}(\mathbf{z}) = \phi(\mathbf{z}) - \langle \phi(\mathbf{z}), \mathbf{z}\rangle\,\mathbf{z}. \] This projection removes the radial component of the update, ensuring that the displacement lies entirely along directions that preserve unit norm to first order. Second, the projected update is integrated via the Riemannian exponential map $\exp_{\mathbf{z}}(\hat{\phi}(\mathbf{z}))$, which maps tangent vectors to geodesics on the sphere rather than to Euclidean rays. The exponential map preserves $\|\mathbf{z}\| = 1$ exactly and respects the intrinsic curvature of the manifold, so the trajectory followed by each particle is a great-circle arc rather than a straight line in the ambient space.  Because both the projection and the exponential map are defined relative to the round metric $g$ on $\Sspace^{d-1}$, the line element $ds^2 = d\psi^2 + \sin^2\psi\,d\theta^2$ is the metric used throughout optimization. The flat cylindrical metric $ds^2 = d\psi^2 + d\theta^2$, which corresponds to treating $(\psi, \theta)$ as independent Euclidean coordinates, is never induced during SVGD or at any other point during training. All distances, gradients, and updates are computed with respect to the intrinsic spherical geometry, so the line element is preserved by construction.

\subsection{How does calculating $r$ scale for large datasets?}
The orbital potential $r$ is computed deterministically from the seed hierarchy rather than learned by gradient descent, which makes its computation cheap and stable even on very large taxonomies. Concretely, the radius assignment requires three passes over the graph: a breadth-first traversal from the root to compute the depth $D(e)$ of each node, a single subtree aggregation pass to compute the descendant count $N_{\text{desc}}(e)$, and a final min-max normalization over the resulting raw radii. Each pass visits every node and edge a constant number of times, so the overall complexity is $\mathcal{O}(|\mathcal{N}| + |\mathcal{E}|)$, which reduces to $\mathcal{O}(N)$ for tree-structured taxonomies and remains linear in $|\mathcal{N}| + |\mathcal{E}|$ for DAGs. In practice this step takes well under a second on the largest benchmarks we evaluate (MeSH, Verb), and the resulting radii can be cached and reused across training epochs without recomputation.  We deliberately avoid learning $r$ as a free parameter. A learned radial coordinate would introduce three additional difficulties. First, without an explicit anchor, gradient-based optimization admits a trivial solution in which all radii collapse to a single value, requiring a separate regularizer (e.g., a depth-prediction auxiliary loss) to prevent it. Second, parent-child consistency must be enforced explicitly: a learned $r$ provides no inherent guarantee that parents lie at smaller radii than their children, so additional inequality constraints or pairwise margin losses would be needed at every edge in the hierarchy. Third, jointly optimizing $r$ alongside the angular coordinates couples the radial and directional gradients, which we observed empirically to destabilize training on larger DAGs such as MeSH and to slow convergence even on small single-parent trees.  The deterministic formulation sidesteps all three issues. Depth and subtree size are unambiguous, globally consistent, and computable in linear time, and they directly encode the structural intuition that broader subsumers should occupy outer orbits while specific leaves lie near the periphery. The downside is that $r$ cannot adapt to the data, but this is by design: the radial coordinate is intended to carry purely structural information, while the learned angular embedding $\mathbf{z}$ is responsible for all semantic adaptation. This separation of concerns is what allows the coupled orbital gate at inference time to filter candidates efficiently without introducing any additional learned parameters.

\subsection{What are the different roles of $r$ and $\psi$ considering vMF does encode a degree of hierarchy via probabilistic learning?}
 The radial coordinate $r$ encodes global ordinality, it serves as a discrete level separator, with larger radii corresponding to deeper, more specific nodes, effectively stratifying the retrieval search space. However, as a scalar magnitude, $r$ cannot encode geodesic orientation or semantic alignment. The latitudinal angle $\psi$ serves as the intrinsic structural axis on the manifold $\mathbb{S}^{d-1}$. By orienting concepts relative to the poles, $\psi$ enforces transitivity consistency, ensuring that parent-child chains align along a continuous geodesic path, a directional property that $r$ cannot capture. Finally, the vMF concentration $\kappa$ models semantic uncertainty. It governs the isotropic spread of the distribution around the mean $\boldsymbol{\mu}$; while broader concepts generally possess smaller $\kappa$, this parameter is non-directional. Unlike $\psi$, which defines where a concept lies on the hierarchical axis, $\kappa$ defines the precision of that placement.

\subsection{Why are $\boldsymbol{\mu}$ and $\kappa$ obtained by projecting $\mathbf{z}$? Can we not obtain them by spherical projection of PLM embeddings?}
The vMF parameters $(\boldsymbol{\mu}_i, \kappa_i)$ are the mean direction and concentration of a probability distribution on $\Sspace^{d-1}$, and the manifold on which this distribution is defined matters as much as the parameter values themselves. In \modelname, the relevant manifold is the \emph{learned} spherical embedding space, in which $\mathbf{z}$ lives, not the raw PLM feature space from which $\mathbf{z}$ was originally derived. Predicting $(\boldsymbol{\mu}, \kappa)$ from $\mathbf{z}$ ensures that the vMF distribution is parameterized on the same manifold where the geometric losses, the SVGD regularizer, and the inference-time retrieval all operate. We elaborate on three reasons why this design choice is essential rather than incidental.  \paragraph{Objective consistency.} The geometric triplet loss in Eq.~\eqref{eq:triplet} continuously reshapes $\mathbf{z}$ during training so that parent-child pairs align along geodesics on $\Sspace^{d-1}$. If $\boldsymbol{\mu}$ were instead obtained by projecting the original PLM embedding through a spherical layer, it would be tied to a fixed reference frame defined by the pretrained encoder, and the probabilistic loss would pull $\mathbf{z}$ back toward that frame on every iteration. This would place the geometric and probabilistic objectives in direct conflict: the geometric loss reorganizes embeddings according to hierarchical structure, while the probabilistic loss anchors them to PLM similarity. Empirically this manifests as oscillatory updates and distorted manifold geometry, where embeddings drift between the two attractors without converging to a coherent solution. Predicting $\boldsymbol{\mu}$ from $\mathbf{z}$ aligns the two objectives: $\boldsymbol{\mu}$ moves with $\mathbf{z}$ as training reshapes the manifold, so the vMF distribution always remains centered in a geometrically meaningful location.  \paragraph{Co-adaptation of the mean direction.} The probabilistic component is not a passive observer of the geometric component; it modifies $\boldsymbol{\mu}$ to encode parent-child containment via the asymmetric KL divergence in Eq.~\eqref{eq:vmf_kl}. For this asymmetry to be informative, the optimal $\boldsymbol{\mu}$ for a node typically diverges from its geometric position $\mathbf{z}$ (this is precisely what the ablation in Table~\ref{tab:vmf_ablation} confirms: forcing $\boldsymbol{\mu} = \mathbf{z}$ degrades performance). A $\boldsymbol{\mu}$ derived from PLM features cannot adapt in this way, since the PLM features are frozen and carry no notion of the hierarchical containment relationships discovered during training. By predicting $\boldsymbol{\mu}$ from $\mathbf{z}$ via a learnable spherical projection $f_{\text{sphere}}(\,\cdot\,;\Theta_{ \boldsymbol{\mu}})$, the model can adjust the distributional center relative to the geometric position based on the role each node plays in the learned hierarchy: a parent's $\boldsymbol{\mu}$ can shift to encompass its children's directions, while a leaf's $\boldsymbol{\mu}$ can stay close to $\mathbf{z}$.  \paragraph{Why $\kappa$ must live on the same manifold.} The concentration $\kappa$ measures the directional spread of the distribution around $\boldsymbol{\mu}$ on $\Sspace^{d-1}$, and its units are intrinsically tied to the geometry of that sphere. The KL divergence in Eq.~\eqref{eq:vmf_kl} couples $\kappa$ with inner products $\boldsymbol{\mu}_c^\top \boldsymbol{\mu}_p$ and Bessel-ratio terms $\mathcal{A}_d(\kappa)$ that are defined relative to the manifold on which the distribution lives. Predicting $\kappa$ from PLM features would break this coupling: the resulting $\kappa$ would describe spread in a different geometric space (the PLM feature space), and the relation between $\kappa$ and the actual angular dispersion of points on the learned sphere would no longer be well-defined. This decoupling would render the asymmetric parent-child relationship in the KL objective meaningless, since $\kappa_p < \kappa_c$ would no longer correspond to broader directional support around the learned $\boldsymbol{\mu}_p$. Predicting $\kappa$ from $\mathbf{z}$ ensures that it scales with the same axis along which $\boldsymbol{\mu}$ is defined and along which the geometric loss measures dispersion.  \paragraph{Summary.} Both $\boldsymbol{\mu}$ and $\kappa$ are parameters of a distribution defined on the learned spherical manifold, and they must therefore be predicted from a representation that lives on that manifold. Deriving them from $\mathbf{z}$ (i) keeps the geometric and probabilistic objectives aligned, (ii) allows $\boldsymbol{\mu}$ to co-adapt with the hierarchy as the manifold reshapes during training, and (iii) preserves the geometric meaning of $\kappa$ as directional dispersion on the sphere where $\mathbf{z}$ resides. Deriving them from raw PLM embeddings would satisfy none of these properties and, as observed in our ablations, substantially degrades both retrieval and ranking performance.

\subsection{Doesn't the exponential volume growth of hyperbolic spaces make it naturally superior for hierarchical data?}
Hyperbolic geometry is indeed an attractive ambient space for hierarchies in principle. The volume of a ball of radius $r$ in $\mathbb{H}^d$ grows as $\sinh^{d-1}(r)$, which is exponential in $r$, so a tree with branching factor $b$ at depth $\ell$ embeds near-isometrically into a hyperbolic ball of radius proportional to $\ell\log b$. By contrast, Euclidean balls grow only polynomially, forcing tree embeddings to suffer multiplicative distortion at deep levels. From a representational standpoint, this makes hyperbolic spaces a natural geometric match for hierarchical data, and several prior works have demonstrated strong empirical results in shallow regimes \cite{poincare, ma2021hyperexpan}. The difficulty, and the reason \modelname\ does not adopt hyperbolic geometry, lies not in representational capacity but in optimization. We illustrate this using the Poincaré ball model, where the issues are most explicit.  \paragraph{Boundary singularity of the metric.} The Poincaré ball $\mathbb{B}^d = \{x \in \mathbb{R}^d : \|x\| < 1\}$ is equipped with the conformal metric \[ \lambda_x = \frac{2}{1 - \|x\|^2}, \qquad ds^2 = \lambda_x^2\,dx^2. \] The conformal factor $\lambda_x$ is bounded near the origin but diverges as $\|x\| \to 1$, so the actual hyperbolic step size corresponding to a fixed Euclidean step grows without bound near the boundary. In hierarchical data this is precisely the region used to encode the leaves of the taxonomy, where the model places the most specific concepts. The numerical conditioning of the metric is therefore worst exactly where the representational benefit of hyperbolic geometry is supposed to be largest.  \paragraph{Vanishing Riemannian gradients.} Optimization in $\mathbb{B}^d$ uses the Riemannian gradient, which relates to the Euclidean gradient by \[ \nabla_R L = \frac{1}{\lambda_x^2}\,\nabla_E L = \frac{(1 - \|x\|^2)^2}{4}\,\nabla_E L. \] This rescaling is the inverse of the conformal blow-up: as $\|x\| \to 1$, the factor $(1 - \|x\|^2)^2$ vanishes quadratically, suppressing the effective step size for any node placed near the boundary. Deep nodes therefore receive vanishingly small updates even when their Euclidean gradients are large. In practice this manifests as optimization stagnation at depth, where leaf representations cease to update meaningfully despite continued training. This is the opposite of what is required for hierarchical learning, which needs strong gradient signal precisely at deep levels to disambiguate fine-grained sibling nodes.  \paragraph{Numerical instability of the distance objective.} The closed-form Riemannian gradient of the standard hyperbolic distance objective is \[ \nabla_R L(x) = \frac{1}{(1 - \|y\|^2)\sqrt{\gamma^2 - 1}} \left[(1 - \|x\|^2)(x - y) + \|x - y\|^2\,x\right], \qquad \gamma = 1 + \frac{2\|x - y\|^2}{(1 - \|x\|^2)(1 - \|y\|^2)}. \] Two failure modes are visible in this expression. First, when both $x$ and $y$ are near the boundary, the denominator $(1 - \|x\|^2)(1 - \|y\|^2)$ in $\gamma$ collapses toward zero, producing very large $\gamma$ and ill-conditioned divisions inside $\sqrt{\gamma^2 - 1}$. Second, the leading factor $1/[(1 - \|y\|^2)\sqrt{\gamma^2 - 1}]$ amplifies floating-point noise to the point where typical 32-bit precision is insufficient for stable training near the boundary. Most hyperbolic frameworks mitigate this by clipping points away from the boundary or by constraining all embeddings to a sub-ball, but both interventions forfeit precisely the exponential capacity that motivated using hyperbolic geometry in the first place.  \paragraph{Why hyperspherical geometry is preferable here.} \modelname\ sidesteps these issues by working on $\Sspace^{d-1}$, which is compact, has bounded curvature, and admits a globally well-conditioned Riemannian structure. The exponential map and its inverse are numerically stable everywhere on the sphere, the Riemannian gradient does not vanish at any point, and there is no boundary near which optimization breaks down. Hierarchical depth is encoded explicitly via the orbital potential $r$ rather than implicitly through proximity to a boundary, so the radial signal is decoupled from the optimization geometry. The trade-off is that \modelname\ does not benefit from the exponential volume growth of hyperbolic spaces; in return, it gains globally stable optimization and a clean separation between hierarchical structure (radial) and semantic content (angular). For the taxonomies we consider, where depths are modest and stable training is essential, this trade-off is favorable, as evidenced by the consistent gains over hyperbolic baselines (HyperExpan) in Tables~\ref{tab:single_parent}--\ref{tab:image_similarity}.

\subsection{How does \modelname\ handle the entanglement of semantic similarity and hierarchical depth compared to the Hyperboloid model?}
The Lorentz (hyperboloid) model and \modelname\ both encode hierarchies using a radial coordinate alongside an angular coordinate, but they differ fundamentally in whether these two coordinates are independently optimizable. In the Lorentz model, hierarchical depth and semantic similarity are coupled at the level of the metric itself; in \modelname, they are decoupled by construction. We make this distinction precise below.  \paragraph{Coupling in the Lorentz model.} A node $u$ in the $d$-dimensional hyperboloid model $\mathcal{L}^d$ is represented in the ambient Minkowski space $\mathbb{R}^{d+1}$ as a vector $u = (u_0, u_1, \dots, u_d)$ subject to $\langle u, u \rangle_{\mathcal{L}} = -1$, where the Minkowski inner product is $\langle u, v \rangle_{\mathcal{L}} = -u_0 v_0 + \sum_{i=1}^{d} u_i v_i$. The geodesic distance is \[ d_{\mathcal{L}}(u, v) = \operatorname{arcosh}\!\left(-\langle u, v \rangle_{\mathcal{L}}\right). \] Decomposing each embedding in polar form as $u_0 = \cosh r_u$ (encoding hierarchical depth) and $(u_1, \dots, u_d) = \sinh r_u\, \mathbf{a}_u$ with $\|\mathbf{a}_u\| = 1$ (encoding semantic direction), the Minkowski inner product expands to \[ \langle u, v \rangle_{\mathcal{L}} = -\cosh r_u \cosh r_v + \sinh r_u \sinh r_v \cos\theta, \] where $\theta$ is the angular separation between $\mathbf{a}_u$ and $\mathbf{a}_v$. Substituting back into the distance formula yields \[ d_{\mathcal{L}}(u, v) = \operatorname{arcosh}\!\left( \cosh r_u \cosh r_v - \sinh r_u \sinh r_v \cos\theta \right). \]  \paragraph{Why this coupling is problematic.} The expression above shows that the semantic angular term $\cos\theta$ enters the distance only through the multiplicative factor $\sinh r_u \sinh r_v$. This produces three optimization pathologies. First, the contribution of semantic alignment to the total distance scales with the depths of both nodes, so the same angular disagreement $\Delta\theta$ between two leaves contributes far more to the loss than the same $\Delta\theta$ between two nodes near the root. The model is therefore implicitly biased toward resolving angular disagreements at depth even when the structurally important disagreements occur at shallower levels. Second, gradient updates to the radial coordinate $r$ depend on the angular separation $\theta$ via the same multiplicative factor, so any update intended to refine hierarchical depth is simultaneously modulated by the current semantic orientation. The two signals cannot be optimized independently: a step taken to correct depth inadvertently rotates the embedding, and a step taken to correct semantics inadvertently changes the depth. Third, near the root ($r \to 0$), the factor $\sinh r$ vanishes and the angular term contributes nothing to the distance, so semantic relationships between shallow concepts receive negligible gradient signal. This effectively forces the model to push all semantically meaningful content to depth, even when the underlying taxonomy does not warrant it.  \paragraph{How \modelname\ decouples the two signals.} \modelname\ avoids this entanglement by representing depth and direction in geometrically independent components rather than coupling them through a single metric. The radial coordinate $r$ is computed deterministically from the seed hierarchy (Section~\ref{subsec:radius}) and acts as a structural signal external to the embedding manifold itself. The directional component $\mathbf{z}$ lies on the unit hypersphere $\Sspace^{d-1}$ and is optimized purely by angular losses (the geodesic triplet loss in Eq.~\eqref{eq:triplet} and the vMF--KL objective in Eq.~\eqref{eq:vmf_kl}), neither of which involves $r$. There is no multiplicative factor analogous to $\sinh r_u \sinh r_v$ in any of \modelname's training objectives, so the angular gradient signal has the same magnitude regardless of where in the hierarchy a node sits, and updates to one coordinate do not cross-modulate updates to the other.  \paragraph{Where the two signals re-couple.} The two coordinates do interact, but only at inference time and through an explicit, controlled mechanism rather than implicitly through the training metric. The coupled orbital gate in Eq.~\eqref{eq:coupled_gate} accepts a candidate when its angular similarity exceeds a threshold modulated by the radial mismatch $\Delta r$. This is a deliberate, interpretable coupling: the threshold tightens when query and candidate share similar depth and relaxes when their depths differ, but the underlying angular embedding $\mathbf{z}$ is never reshaped by $r$. The training geometry remains depth-agnostic, while the inference rule selectively combines the two signals according to a closed-form parabolic boundary that the user can introspect and modify.  \paragraph{Summary.} The Lorentz model entangles hierarchy and semantics through the metric itself, producing depth-dependent angular gradients, angle-dependent radial updates, and a vanishing semantic signal near the root. \modelname\ avoids these failure modes by encoding depth in a deterministic radial coordinate that lives outside the angular training objectives, and by re-coupling the two signals only through an explicit inference-time gate. The result is that hierarchy and semantics can be optimized independently during training and combined deliberately at retrieval, rather than being forced to share a single coupled metric throughout.

\subsection{How is the stability of the Bessel function handled in the computation of KL divergence of vMF distributions?}

The vMF--KL objective in Eq.~\eqref{eq:vmf_kl} depends on two quantities that are mathematically clean but numerically delicate: the log-partition function $\log C_d(\kappa)$, which contains a modified Bessel function of the first kind $I_{d/2-1}(\kappa)$, and the mean resultant length $\mathcal{A}_d(\kappa) = I_{d/2}(\kappa)/I_{d/2-1}(\kappa)$, which scales the gradients of the divergence with respect to both $\boldsymbol{\mu}$ and $\kappa$. Direct evaluation of either term fails in the regimes where the model spends most of its training budget. We address this through a piecewise asymptotic scheme that preserves accuracy in the low-concentration regime, retains numerical stability in the high-concentration regime, and yields bounded, non-vanishing gradients throughout.  \paragraph{Why direct evaluation fails.} The Bessel function $I_\nu(\kappa)$ grows roughly as $e^\kappa / \sqrt{2\pi\kappa}$ for large $\kappa$, so its value overflows standard 32-bit floating-point precision well before $\kappa$ reaches values that the model produces in practice (e.g., $\kappa \approx 50$--$100$ for narrow leaf concepts). The ratio $\mathcal{A}_d(\kappa)$ is even more problematic: both the numerator and denominator individually overflow, but their ratio approaches $1$ smoothly. Naively computing the ratio as a quotient of two overflowing quantities produces NaNs and breaks training, even though the limiting value is well-behaved. In high dimensions ($d/2 - 1$ is large for typical embedding sizes), these issues are amplified because both the order $\nu$ and the argument $\kappa$ participate in the asymptotic regime simultaneously.  \paragraph{Piecewise approximation of $\log I_{d/2-1}(\kappa)$.} We split the evaluation at a fixed threshold $\beta$ chosen to lie well below the floating-point overflow boundary. For $\kappa \leq \beta$ we compute $\log(I_{d/2-1}(\kappa) + \epsilon)$ directly using the series representation, which converges rapidly in the low-concentration regime and produces accurate values up to the cutoff. For $\kappa > \beta$ we replace the exact log-Bessel term with its leading-order uniform asymptotic expansion, \[ \log I_{d/2-1}(\kappa) \approx \kappa - \tfrac{1}{2}\log(2\pi\kappa), \] which is the standard large-$\kappa$ expansion valid when $\kappa \gg \nu$. The crucial property is that the asymptotic form expresses the dominant exponential growth additively rather than multiplicatively: the $\kappa$ term cancels analytically against the $\kappa$ term inside $\log C_d(\kappa)$ when the two are combined in the KL divergence, so no intermediate quantity ever exceeds floating-point range. The $\epsilon$ added inside the logarithm in the low-$\kappa$ branch protects against $I_{d/2-1}(0) = 0$ for $d > 2$, which would otherwise produce $-\infty$ at initialization when $\kappa$ is small.  \paragraph{Closed-form approximation of $\mathcal{A}_d(\kappa)$.} The mean resultant length $\mathcal{A}_d(\kappa)$ appears as a multiplicative factor on the gradient of the KL divergence with respect to $\kappa$, so its value directly controls the magnitude of the optimization signal. Computing it as a Bessel ratio is numerically untenable for the reasons above. Instead, we use the closed-form approximation \[ \mathcal{A}_d(\kappa) \approx 1 - \frac{d-1}{2\kappa}, \] which is the leading term of the asymptotic expansion of the Bessel ratio for large $\kappa$ and remains accurate over the range of $\kappa$ values encountered in training. This $\mathcal{A}_d(\kappa) \in (0, 1)$ for all $\kappa > (d-1)/2$, and crucially has a non-vanishing derivative with respect to $\kappa$: $\partial \mathcal{A}_d / \partial \kappa = (d-1)/(2\kappa^2)$, which decays smoothly rather than collapsing to zero. The KL gradient with respect to $\kappa$ therefore remains informative across the entire entropy spectrum, from broad parents ($\kappa \approx 1$) to narrow leaves ($\kappa \gg 1$).  \paragraph{Auxiliary safeguards.} Two further measures protect the optimization in practice. First, $\kappa$ is produced by a Softplus activation, $\kappa = \mathrm{Softplus}(\mathbf{w}_\kappa^\top \mathbf{z} + b_\kappa)$, which guarantees $\kappa > 0$ but is unbounded above; we clip its output to a fixed maximum (typically chosen to keep $\kappa$ within the regime where the asymptotic approximations remain accurate). Second, in settings where bounded $\kappa$ is preferred, we replace Softplus with a scaled sigmoid that maps inputs to a fixed interval, which sacrifices a small amount of expressiveness in exchange for guaranteed numerical safety. These safeguards have no measurable effect on retrieval performance in our ablations but eliminate the rare training divergences that otherwise occur when an unconstrained $\kappa$ drifts into the overflow regime mid-training.  \paragraph{Summary.} The vMF--KL divergence is implemented through three coordinated choices: a piecewise approximation of $\log I_{d/2-1}(\kappa)$ that handles low and high $\kappa$ in their respective natural regimes, a closed-form approximation of $\mathcal{A}_d(\kappa)$ that guarantees bounded and informative gradients, and activation clipping on $\kappa$ itself as a final safeguard. Together these ensure that the probabilistic objective trains stably across the full range of concentrations the model encounters, without sacrificing the asymmetric containment behavior that the KL divergence is supposed to encode.

\subsection{How is \modelname{} fundamentally different from ComplEx or RotatE?}
ComplEx and RotatE are widely used relational embedding models, and both are capable of capturing antisymmetric relations in a knowledge graph. They are, however, designed for the general link-prediction setting rather than for hierarchical taxonomy expansion specifically, and three structural differences separate them from \modelname. We discuss each in turn.  \paragraph{Geometry of the embedding space.} ComplEx \cite{trouillon2016complex} represents entities and relations as vectors in $\mathbb{C}^d$ and scores triples via the real part of a Hermitian product $\mathrm{Re}(\langle e_h, r, \overline{e_t}\rangle)$, where the bar denotes complex conjugation. RotatE \cite{sun2019rotate} embeds entities in $\mathbb{C}^d$ as well and models relations as element-wise rotations $e_t = e_h \circ r$ with $|r_i| = 1$. Both are defined on flat ambient spaces (Euclidean or its complex extension), and both train under standard distance- or margin-based losses with no intrinsic notion of curvature. \modelname, in contrast, operates on the curved hyperspherical manifold $\Sspace^{d-1}$, where geodesic distance is the natural similarity measure and angular geometry is preserved by every update through the Riemannian exponential map. The flat-space geometry of ComplEx and RotatE provides no mechanism by which hierarchical depth can be encoded directly: any depth signal must be inferred indirectly from clusters of relation-specific vectors rather than being represented as a coordinate of the embedding itself.  \paragraph{Disentanglement of hierarchy from semantics.} The second and more substantive difference concerns what each model treats as a primitive of the representation. In ComplEx and RotatE, hierarchy is not a first-class object: there is no coordinate explicitly designated to encode the level of a concept in a taxonomy, and parent-child relationships are represented through the same relation embeddings used for arbitrary binary predicates. A model trained on a hypernym dataset and on a co-author dataset uses the same machinery for both, with the relation vector or rotation expected to absorb the difference. In a hierarchical expansion setting, this is undesirable: depth and semantic relatedness are not interchangeable signals, and the model must balance them implicitly through the geometry of the relation embeddings. \modelname\ separates these signals by construction. The radial coordinate $r$ encodes hierarchical depth deterministically from the seed structure (Section~\ref{subsec:radius}); the angular coordinate $\mathbf{z} \in \Sspace^{d-1}$ encodes semantic content and is optimized purely by angle-aware losses; and the two are combined only at inference through the explicit orbital gate in Eq.~\eqref{eq:coupled_gate}. Hierarchy and semantics therefore travel on independent axes during training and are coupled deliberately at retrieval, rather than being entangled in a single relation-specific operator.  \paragraph{Point estimates vs. distributional representations.} A third difference is that ComplEx and RotatE produce deterministic point estimates: each entity is represented by a single vector without any associated notion of uncertainty or semantic spread. In a hierarchy, however, broad categories such as "Animal" should naturally exhibit higher directional uncertainty than specific leaves such as "Sumatran tiger," since they encompass a larger semantic neighborhood. A point-estimate model has no mechanism to express this asymmetry; two concepts of very different breadth are represented by vectors of the same kind, and their containment relationship can only be inferred through the relative positions of neighboring entities. \modelname\ models each concept as a \emph{von Mises--Fisher distribution} on $\Sspace^{d-1}$, parameterized by a learnable mean direction $\boldsymbol{\mu}_i$ and a learnable concentration $\kappa_i$. The concentration $\kappa_i$ provides exactly the missing notion of granularity: broad parents are pushed toward smaller $\kappa$, narrow children toward larger $\kappa$, and the asymmetric vMF--KL objective in Eq.~\eqref{eq:vmf_kl} explicitly encodes containment by requiring the parent distribution to be directionally aligned with the child while having lower concentration ($\kappa_p < \kappa_c$). This asymmetry is unavailable to ComplEx and RotatE without modifying their core formulation.  \paragraph{Optimization stability and global structure.} A fourth, more practical difference is that ComplEx and RotatE rely on the standard Euclidean optimization toolkit and have no mechanism to prevent the global pathologies that arise on high-dimensional embedding spaces, most notably representation collapse and equator concentration. \modelname\ uses an anisotropic spherical SVGD regularizer (Section~\ref{subsec:svgd}) that explicitly counteracts these effects by combining a pole-attracting structural score with a vMF-kernel repulsion, producing the structured latitudinal and uniform longitudinal distributions shown in Figure~\ref{fig:angle_distributions}. This kind of manifold-aware regularization is not part of the ComplEx or RotatE training pipeline.  \paragraph{Summary.} ComplEx and RotatE are designed for general relational reasoning on flat ambient spaces and represent entities as deterministic point estimates, with hierarchy treated as just another binary relation. \modelname\ targets hierarchical taxonomy expansion directly: it operates on a curved hyperspherical manifold, separates hierarchical depth (radial) from semantic content (angular) by construction, models each concept as a distribution rather than a point so that semantic granularity becomes an explicit parameter, and regularizes the embedding distribution globally via SVGD. These choices reflect a different design objective and are not minor tweaks of the ComplEx or RotatE recipe.

\section{Benchmark Datasets}
\label{app:dataset}

\begin{table}[h]
\centering
\caption{Statistics of benchmark datasets. ${|\mathcal{N}^0|}$ and ${|\mathcal{E}^0|}$ denote the number of nodes and edges in the seed taxonomy, while ${|D|}$ is the taxonomy depth. For WordNet, values are averaged across 114 sub-taxonomies.}
\begin{tabular}{lrrr}
\toprule
\textbf{Dataset} & \boldmath$|\mathcal{N}^0|$ & \boldmath$|\mathcal{E}^0|$ & \boldmath$|D|$ \\
\cmidrule(lr){1-4}
\multicolumn{4}{c}{\textbf{Single-parent taxonomies}} \\
\cmidrule(lr){1-4}
SemEval-Env      & 261  & 261   & 6  \\
SemEval-Sci      & 429  & 452   & 8  \\
WordNet      & 20.5 & 19.5  & 3  \\
\addlinespace[2pt]
\cmidrule(lr){1-4}
\multicolumn{4}{c}{\textbf{Multi-parent taxonomies}} \\
\cmidrule(lr){1-4}
SemEval-Food     & 1486 & 1533  & 8  \\
MeSH             & 9710 & 10498 & 12 \\
Verb            & 13936 & 13407 &  12\\
\bottomrule
\end{tabular}
\label{tab:dataset}
\end{table}

As discussed in Sections \ref{subsec:exp_single_parent}, \ref {subsec:exp_multi_parent}, \ref{subsec:exp_multi_modal}, we evaluate \modelname\ on three hierarchical inference settings covering \textbf{single-parent} trees, \textbf{multi-parent} DAG taxonomies, and a \textbf{multimodal} image-to-label hierarchy. Across all settings, we adopt the standard \emph{attach-to-seed} protocol from prior taxonomy expansion work \cite{rank_chunk,jiang2022taxoenrich,taxoexpan,ma2021hyperexpan}: we form a \emph{seed} hierarchy by withholding 20\% of leaf nodes as queries, train using only relations observed in the seed, and at test time attach each query by ranking candidate parents drawn from the seed taxonomy. We ensure that each query’s gold parent(s) remain in the seed, so evaluation measures attachment quality rather than missing candidates. Table~\ref{tab:dataset} shows the dataset statistics.

\textbf{Single-parent taxonomies.}
We evaluate single-parent taxonomy expansion on three public benchmarks: {Environment} (EN), {Science} (SCI), and {WordNet}. Each benchmark provides a human-curated tree-structured hierarchy where every node (except the root) has exactly one parent, aligning with the classical taxonomy completion setting.

\textbf{Multi-parent taxonomies.}
We evaluate multi-parent taxonomy expansion on three benchmarks: {SemEval-Verb} (Verb), {SemEval-Food} (Food), and {MeSH}. {SemEval-Verb} contains a verb taxonomy derived from SemEval-2016 Task 14 \cite{jurgens-pilehvar-2016-semeval}, which itself is a hierarchy of verbs from WordNet~3.0. {SemEval-Food} is taken from the SemEval taxonomy extraction/evaluation benchmarks introduced by \citet{bordea-etal-2015-semeval}. {MeSH} is constructed from the Medical Subject Headings controlled vocabulary \cite{lipscomb2000medical}, a curated biomedical concept hierarchy where nodes may have multiple parents.

\textbf{Multimodal hierarchy (images).}
For multimodal evaluation, we consider an image-to-label ranking task on the CUB-200-2011 dataset. Following standard practice, we sample a subset of 20 classes and apply an 80:20 train--test split within this subset. Each image is embedded using the standard \texttt{open\_clip} preprocessing pipeline with a CLIP ViT-H/14 encoder pretrained on LAION-2B, yielding dense image representations. To construct label embeddings, for each class label $l \in \mathcal{L}$ we form a prompt of the form ``A label of \{class name\}'' and encode it with the same CLIP text encoder. This yields a fixed label set for retrieval, enabling direct evaluation of hierarchical inference from visual inputs.

\section{Evaluation Metrics}
\label{app:eval_metrics}

As discussed in Sections \ref{subsec:exp_single_parent}, \ref {subsec:exp_multi_parent}, \ref{subsec:exp_multi_modal}, we use the following evaluation metrics. Let $\mathcal{C}$ denote the set of query nodes (test instances). For each query $i\in\mathcal{C}$, let $\mathcal{Y}_i$ be the set of gold parent labels and $\hat{\mathcal{Y}}^{(k)}_i$ be the set of top-$k$ predicted labels.

\textbf{Hit@k.}
Hit@k measures the fraction of queries for which a correct label appears among the top-$k$ predictions:
\begin{equation}
\mathrm{Hit@}k \;=\; \frac{1}{|\mathcal{C}|}\sum_{i=1}^{|\mathcal{C}|}\mathbb{I}\!\left(\mathcal{Y}_i \cap \hat{\mathcal{Y}}^{(k)}_i \neq \emptyset\right).
\label{eq:hitk}
\end{equation}
For the single-parent case (i.e., $|\mathcal{Y}_i|=1$), this is equivalent to checking whether the unique gold parent is in $\hat{\mathcal{Y}}^{(k)}_i$.

\textbf{Recall@k.}
Recall@k measures the proportion of gold labels recovered within the top-$k$ predictions:
\begin{equation}
\mathrm{Recall@}k \;=\; \frac{1}{|\mathcal{C}|}\sum_{i=1}^{|\mathcal{C}|}
\frac{\left|\{y \in \mathcal{Y}_i : y \in \hat{\mathcal{Y}}^{(k)}_i\}\right|}{|\mathcal{Y}_i|}.
\label{eq:recallk}
\end{equation}
Note that when $|\mathcal{Y}_i|=1$, $\mathrm{Recall@}k$ reduces to $\mathrm{Hit@}k$.

\textbf{Mean Rank (MR).}
Let $\mathrm{rank}_i(y)$ denote the rank position (1 is best) of label $y$ in the model’s sorted list for query $i$. In multi-parent hierarchies, we use the best-ranked gold parent:
\begin{equation}
\mathrm{MR} \;=\; \frac{1}{|\mathcal{C}|}\sum_{i=1}^{|\mathcal{C}|}\min_{y\in\mathcal{Y}_i}\mathrm{rank}_i(y).
\label{eq:mr}
\end{equation}

\textbf{Mean Reciprocal Rank (MRR).}
\begin{equation}
\mathrm{MRR} \;=\; \frac{1}{|\mathcal{C}|}\sum_{i=1}^{|\mathcal{C}|}\frac{1}{\min_{y\in\mathcal{Y}_i}\mathrm{rank}_i(y)}.
\label{eq:mrr}
\end{equation}

\textbf{Wu \& Palmer (Wu\&P).} For single-parent trees, we additionally report Wu\&P similarity between the predicted parent $c_1$ and gold parent $c_2$:
\begin{equation}
\mathrm{Wu\&P}(c_1,c_2) \;=\; \frac{2\cdot \mathrm{depth}(\mathrm{LCA}(c_1,c_2))}
{\mathrm{depth}(c_1)+\mathrm{depth}(c_2)},
\label{eq:wup}
\end{equation}
where $\mathrm{depth}(\cdot)$ is the distance from the root and $\mathrm{LCA}(\cdot,\cdot)$ is the lowest common ancestor.

\section{Baselines}
\label{app:baselines}

As discussed in Sections \ref{subsec:exp_single_parent}, \ref {subsec:exp_multi_parent}, \ref{subsec:exp_multi_modal}, we compare \modelname\ against a broad set of baselines used across our \textbf{text} (single-/multi-parent) and \textbf{multimodal} (image) settings, covering contextual encoders, taxonomy-expansion models, relational KGE methods, and geometric/non-Euclidean embeddings.
\begin{itemize}[leftmargin=*,noitemsep]
    \item \textbf{BERT+MLP} \cite{devlin2019bert} encodes term surface forms with BERT and applies an MLP to score parent--child (hypernym) relations.
    \item \textbf{TaxoExpan} \cite{shen2020taxoexpan} represents anchor nodes by encoding their ego-networks with a GNN and scores candidate parent--child pairs via a log-bilinear feed-forward layer.
    \item \textbf{Arborist} \cite{manzoor2020expanding} models heterogeneous edge semantics and optimizes a large-margin ranking objective with a dynamically adapting margin.
    \item \textbf{TMN} \cite{zhang2021taxonomy} introduces a triplet matching network to find the appropriate pairs for a given query concept consisting of a single primal scorer and multiple auxiliary scorers.
    \item \textbf{STEAM} \cite{yu2020steam} samples mini-paths from the existing taxonomy and formulates a node attachment prediction task between anchor mini paths and query terms.
    \item \textbf{TransE} \cite{bordes2013translating} learns translational relational embeddings and scores links with a distance-based energy function.
    \item \textbf{RotatE} \cite{sun2019rotate} models relations as rotations in complex space, capturing diverse relational patterns through phase-based transformations.
    \item \textbf{HAKE} \cite{hake} encodes hierarchical structure using polar-coordinate embeddings, disentangling semantic similarity from hierarchical level to model asymmetric relations.
    \item \textbf{HyperExpan} \cite{ma2021hyperexpan} performs taxonomy expansion with hyperbolic representations to better capture hierarchical geometry and long-range ancestor--descendant structure
    \item \textbf{ConE} \cite{zhang2021cone} embeds concepts as Cartesian products of two-dimensional cones where, the intersection and union of cones naturally model the conjunction and disjunction operations.
    \item \textbf{Box} \cite{boxtaxo} learns box/region embeddings that capture partial order via overlap and containment, serving as a strong geometric baseline.
    \item \textbf{Gumbel Box} \cite{gumbel} extends box embeddings with a Gumbel-based relaxation to improve optimization and robustness under uncertainty.
    \item \textbf{CLIP-1} \cite{radford2021learning} uses a CLIP image--text encoder as a direct vision baseline by ranking class labels using CLIP similarity scores.
    \item \textbf{CLIP-2} \cite{radford2021learning} is a stronger CLIP-based baseline (e.g., different backbone / pooling / prompting variant) that likewise ranks labels by CLIP similarity.
\end{itemize}

\section{Ablation Studies}
\label{app:ablation}
As discussed in Section \ref{sec:experiments}, we provide detailed  explanations and results of all experiments conducted. 
\subsection{Experiments on SVGD Kernels}
We perform a comprehensive ablation study to validate the design choices of the \modelname\ regularization scheme. The choice of kernel $k(\mathbf{z}, \mathbf{z}')$ dictates the topology of the repulsive forces between embeddings, which is critical for preventing mode collapse on the compact hyperspherical manifold. We first evaluate two Euclidean baselines: the Radial Basis Function (RBF), which relies on chordal distances, and the heavy-tailed Inverse Multi-Quadratic (IMQ) kernel, which tests the efficacy of long-range repulsive forces. We then assess the benefit of manifold-consistent interactions by evaluating the von Mises-Fisher (vMF) kernel, which ensures that particle interactions respect geodesic distances. Finally, we evaluate the full formulation combining the vMF repulsive kernel with the proposed structural score function $\nabla \log p_{\text{struct}}$. This final ablation isolates the contribution of the pole-attracting prior, verifying that uniformity alone (via repulsion) is insufficient without the explicit hierarchical breaking of symmetry provided by the structural score. As shown in Table~\ref{tab:kernel_ablation}, the results clearly indicate that the vMF kernel combined with the structural score function provides superior regularization compared to the standard RBF or IMQ kernels, empirically validating the necessity of both manifold-aware interactions and an explicit hierarchical prior.
\begin{table}[t]
\centering
\caption{\textbf{Ablation Study on SVGD Kernels.} Performance comparison of different kernel functions on MeSH and WordNet Verb datasets. \textbf{H@K}: Hit@K, \textbf{R@K}: Recall@K. The best scores are marked in \textbf{bold}.}
\label{tab:kernel_ablation}

\small
\setlength{\tabcolsep}{5pt}
\renewcommand{\arraystretch}{1.2}

\begin{tabular}{l ccccc c ccccc}
\toprule
& \multicolumn{5}{c}{\textbf{MeSH}} & & \multicolumn{5}{c}{\textbf{WordNet Verb}} \\
\cmidrule{2-6} \cmidrule{8-12}
\textbf{Kernel} & \textbf{H@1} & \textbf{H@5} & \textbf{R@1} & \textbf{R@5} & \textbf{MRR} & & \textbf{H@1} & \textbf{H@5} & \textbf{R@1} & \textbf{R@5} & \textbf{MRR} \\
\midrule
Radial (RBF) & 31.12 & 54.18 & 25.07 & 47.24 &37.40 & & 13.10 & 36.54 & 13.07 & 35.65 & 23.92 \\
IMQ          & 31.01 & 55.28 & 24.98 & 48.42 & 34.10 & & 13.71 & 35.41 & 11.23 & 34.72 & 24.31 \\
vMF          & 30.23 & 54.27 & 23.25 & 44.80 &34.73 & & 11.36 & 32.45 & 10.32 & 29.32 & 22.60 \\
\rowcolor{gray!20}
\textbf{vMF + Score} & \textbf{33.59} & \textbf{62.66} & \textbf{27.03} & \textbf{51.95} & \textbf{38.72} & & \textbf{15.05} & \textbf{39.20} & \textbf{15.01} & \textbf{38.77} & \textbf{26.34} \\
\rowcolor{green!20}
$\% \uparrow$ 
& +7.94 & +13.35 & +7.82 & +7.29 & +3.53
& & +9.78 & +7.28 & +14.85 & +8.76 & +8.35 \\

\bottomrule
\end{tabular}
\end{table}
\subsection{Experiments on vMF with constant $\kappa$  and $\boldsymbol{\mu}$ in Probabilistic Learning}
In \modelname, the probabilistic embedding for a concept is parameterized by vMF$(\boldsymbol{\mu},\kappa)$, where both the mean direction $\boldsymbol{\mu}$ and the concentration $\kappa$ are predicted as functions of embedding $\mathbf{z}$ via learnable spherical projections. We perform experiments to isolate the contribution of the adaptive method. First, we keep $\boldsymbol{\mu}$ constant by setting it as $z$. This ablation evaluates whether the probabilistic centroid of a concept  needs to diverge from its geometric position on the manifold. Second, we also keep $\kappa$ constant to a global scalar constant C. In our experiment we set $C=0.5$. This experiment tests the hypothesis that hierarchical concepts exhibit varying degrees of semantic granularity. We show that capturing semantic diversity between concepts is essential for representing the varying volume of concepts across hierarchical levels.  Table \ref{tab:vmf_ablation}\ describes the results of vMF experiments on MeSH and Verb. The results indicate that keeping both $\boldsymbol{\mu}$ and $\kappa$ performs better than keeping one of the parameters constant. Comparing the constraints reveals distinct failure modes: the performance drop in Identity $\boldsymbol{\mu}$ suggests that the optimal probabilistic centroid must diverge from the geometric coordinate $\mathbf{z}$ to correct for structural regularization artifacts. Conversely, the $\kappa$ baseline performs slightly worse because it enforces uniform semantic volume, reducing the model's capability of distinguishing between broad, high-entropy parent concepts and narrow, low-entropy leaf nodes.
\begin{table*}[t]
\centering
\caption{\textbf{Ablation on Probabilistic Parameterization.} We evaluate the impact of learning adaptive distributions versus fixed parameters. \textbf{Constant $\kappa$}: Fixed concentration $\kappa=0.5$. \textbf{Identity $\boldsymbol{\mu}$}: Mode fixed to geometric embedding $\boldsymbol{\mu}=\mathbf{z}$. Best scores are marked in \textbf{bold}.}
\label{tab:vmf_ablation}

\small
\setlength{\tabcolsep}{5pt}
\renewcommand{\arraystretch}{1.2}

\begin{tabular}{l ccccc c ccccc}
\toprule
& \multicolumn{5}{c}{\textbf{MeSH}} & & \multicolumn{5}{c}{\textbf{WordNet Verb}} \\
\cmidrule{2-6} \cmidrule{8-12}
\textbf{Configuration} & \textbf{H@1} & \textbf{H@5} & \textbf{R@1} & \textbf{R@5} & \textbf{MRR} & & \textbf{H@1} & \textbf{H@5} & \textbf{R@1} & \textbf{R@5} & \textbf{MRR} \\
\midrule
Constant $\kappa$ ($\kappa=0.5$) & 31.57 & 62.32 & 26.69 & 51.76 & 38.22 & & 13.21 & 35.93 & 12.32 & 35.16 & 24.76 \\
Identity $\boldsymbol{\mu}$ ($\boldsymbol{\mu}=\mathbf{z}$) & 31.14 & 60.78 & 25.07 & 52.39 & 35.57 & & 13.20 & 35.72 & 13.12 & 35.47 & 23.65 \\
\rowcolor{gray!20}
\textbf{Learnable $\boldsymbol{\mu}$ and $\kappa$} & \textbf{33.59} & \textbf{62.66} & \textbf{27.03} & \textbf{53.66} & \textbf{38.91} & & \textbf{15.05} & \textbf{39.20} & \textbf{15.01} & \textbf{38.77} & \textbf{26.34} \\
\rowcolor{green!20}
$\% \uparrow$
& +6.40 & +0.55 & +1.27 & +2.42& +1.80
& & +13.93& +9.10 & +14.41& +9.30 & +6.38 \\

\bottomrule
\end{tabular}
\end{table*}
\subsection{Experiments on Target Distribution Anchoring in SVGD}
In the proposed SVGD, the target score function $\nabla\log p(\mathbf{z})$ comprises two competing vector fields: a structural drift and semantic alignment. The alignment term is defined as $\nabla_\mathbf{z}\log p_\text{align}(\mathbf{z})=\kappa_{\text{align}}\boldsymbol{\mu}$ where $\boldsymbol{\mu}$ is the predicted probabilistic center. To assess the necessity of this learnable anchor, we set the model of the local target distribution to be the current geometric position of the embedding itself. This experiment tests whether the regularization needs an external semantic reference. Table \ref{tab:svgd_anchor_ablation} consists of experiments with learnable and fixed anchors $\boldsymbol{\mu}$ on MeSH and Verb, indicating that learnable anchors outperform fixed semantic anchors. This is probably because if the target is centered at $\boldsymbol{z}$, the alignment gradient becomes proportional to $\mathbf{z}$. Crucially, the projection of this vector onto the tangent space is zero. Without this restoring force, the structural force $\nabla\log p_\text{struct}$ and the repulsive kernel $\nabla k$ dominate the dynamics. We hypothesize that this leads to semantic drift where embeddings satisfy hierarchical depth constraints but lose their angular distinctiveness, resulting in a slight degradation in performance. 
\begin{table*}[t]
\centering
\caption{\textbf{Ablation on Target Anchoring in SVGD.} We compare the proposed learned semantic anchor against a self-targeting baseline where the target mode is fixed to the current embedding ($\boldsymbol{\mu}=\mathbf{z}$). The best scores are marked in \textbf{bold}.}
\label{tab:svgd_anchor_ablation}

\small
\setlength{\tabcolsep}{5pt}
\renewcommand{\arraystretch}{1.2}

\begin{tabular}{l ccccc c ccccc}
\toprule
& \multicolumn{5}{c}{\textbf{MeSH}} & & \multicolumn{5}{c}{\textbf{WordNet Verb}} \\
\cmidrule{2-6} \cmidrule{8-12}
\textbf{Configuration} & \textbf{H@1} & \textbf{H@5} & \textbf{R@1} & \textbf{R@5} & \textbf{MRR} & & \textbf{H@1} & \textbf{H@5} & \textbf{R@1} & \textbf{R@5} & \textbf{MRR} \\
\midrule
Self-Targeting
& 30.89 & 61.57 & 24.87 & 52.31 & 36.63 
& & 11.97 & 35.01 & 11.75 & 33.50 & 22.24 \\

\rowcolor{gray!20}
\textbf{Learned Anchor}                 
& \textbf{33.59} & \textbf{62.66} & \textbf{27.03} & \textbf{54.97} & \textbf{38.65} 
& & \textbf{15.05} & \textbf{39.20} & \textbf{15.01} & \textbf{38.77} & \textbf{26.34} \\
\rowcolor{green!20}
$\% \uparrow$
& +8.74 & +1.77& +8.68 & +5.09 & +5.51
& & +25.74 & +11.97 & +27.74& +15.73 & +18.44 \\

\bottomrule
\end{tabular}
\end{table*}

\subsection{Component Analysis of the SVGD Gradient Field}
The SVGD update vector $\phi(z)$ is composed of two distinct gradient fields: the drift field derived from the target distribution (weighted by $\kappa_{\text{align}}$) and the repulsive field derived from the kernel gradients (weighted by $\kappa_{\text{repel}}$). To decouple their contributions, we evaluate the model by selectively masking these terms in the update equation. The results demonstrate that removing the repulsive term leads to a  performance drop i.e MeSH H@1 decreases from $33.59$ to $23.82$. Mathematically, when $\kappa_{\text{repel}} \rightarrow 0$, the Stein operator degenerates into standard gradient ascent on the log-density $\log p(z)$. Without the kernel gradient $\nabla_{z'} k(z', z)$ acting as a dispersive force, the particles collapse toward the modes of the target distribution (the semantic anchors $\mu$). This results in a degenerate embedding distribution with minimal variance along the longitudinal manifold $\Sspace^{d-2}$, effectively reducing the rank of the representation and causing indistinguishability among sibling nodes. Now, removing the alignment term also degrades performance, though less severely. In this regime, optimization is driven solely by the structural score and kernel repulsion. While the repulsive component ensures uniform coverage of the sphere, the absence of the alignment tether $\kappa_{\text{align}} \mu$ allows embeddings to drift geodesically away from their semantic initialization. This semantic drift implies that although the geometric structure remains well-formed (i.e., high manifold coverage), the correspondence to the input feature space is distorted, leading to suboptimal retrieval performance. Table \ref{tab:svgd_components} explains these results in detail.
\begin{table*}[t]
\centering
\caption{\textbf{Ablation on SVGD Components.} We analyze the contribution of the Alignment Drift ($\kappa_{\text{align}}$) and Kernel Repulsion ($\kappa_{\text{repel}}$). The \textbf{Full Model} utilizes both. Best scores are marked in \textbf{bold}. }
\label{tab:svgd_components}

\small
\setlength{\tabcolsep}{5pt}
\renewcommand{\arraystretch}{1.2}

\begin{tabular}{l ccccc c ccccc}
\toprule
& \multicolumn{5}{c}{\textbf{MeSH}} & & \multicolumn{5}{c}{\textbf{WordNet Verb}} \\
\cmidrule{2-6} \cmidrule{8-12}
\textbf{Configuration} & \textbf{H@1} & \textbf{H@5} & \textbf{R@1} & \textbf{R@5} & \textbf{MRR} & & \textbf{H@1} & \textbf{H@5} & \textbf{R@1} & \textbf{R@5} & \textbf{MRR} \\
\midrule
w/o Repulsion ($\kappa_{\text{repel}}=0$) & 23.82 & 46.70 & 19.20 & 43.25 & 29.82 & & 12.79 & 33.88 & 12.77 & 32.91 & 23.00 \\
w/o Alignment ($\kappa_{\text{align}}=0$) & 30.90 & 53.60 & 24.89 & 46.79 & 34.44 & & 12.18 & 32.65 & 11.86 & 31.88 & 21.83 \\
w/o Both & 32.25 & 58.88 & 26.85 & 50.95 & 36.23 & & 14.12 & 35.00 & 13.77 & 32.57 & 23.21 \\
\rowcolor{gray!20}
\textbf{Full Model} & \textbf{33.59} & \textbf{62.66} & \textbf{27.03} & \textbf{51.95} & \textbf{37.85} & & \textbf{16.27} & \textbf{39.71} & \textbf{16.24} & \textbf{39.63} & \textbf{27.03} \\
\rowcolor{green!20}
$\% \uparrow$
& +4.16 & +6.4 & +0.67 & +1.96 & +4.47
& & +15.23 & +13.46& +17.93& +20.45 & +16.46 \\

\bottomrule
\end{tabular}
\end{table*}

\subsection{Sensitivity Analysis on Alignment Concentration $\kappa_{\text{align}}$}
The parameter $\kappa_{\text{align}}$ governs the magnitude of the drift term derived from the semantic target distribution, $\nabla \log p_{\text{align}}(z) = \kappa_{\text{align}} \mu$, which balances the optimization objective by ensuring that embeddings remain semantically consistent with their learned anchors $\mu$, while the structural score and kernel gradients redistribute them geometrically.  We evaluate the robustness of the model by varying $\kappa_{\text{align}} \in \{1,2,3,4,5\}$ on MeSH. As shown in Table~\ref{tab:kappa_align}, the method exhibits stability across this range, indicating that the regularization framework is not brittle with respect to hyperparameter tuning. Lower values (e.g., $\kappa_{\text{align}} = 1$) yield slightly reduced retrieval performance, suggesting that a weak alignment signal allows the structural gradients to overly distort the semantic placement. Conversely, higher values (e.g., $\kappa_{\text{align}} = 5$) constrain the embeddings tightly to the mean $\mu$, potentially limiting the geometric adjustments required for optimal hierarchical separation. Empirically, $\kappa_{\text{align}} = 3$ provides the optimal trade-off, maximizing ranking metrics (MRR) while maintaining high recall.
\begin{table}[ht]
\centering
\caption{\textbf{Sensitivity Analysis of $\kappa_{\text{align}}$ on MeSH.} Shading intensity reflects performance degradation relative to the best value in each column i.e darker gray implies higher degradation.}
\label{tab:kappa_align}
\small
\setlength{\tabcolsep}{8pt}
\renewcommand{\arraystretch}{1.2}
\begin{tabular}{c ccccc}
\toprule
\textbf{$\kappa_{\text{align}}$} & \textbf{H@1} & \textbf{H@5} & \textbf{R@1} & \textbf{R@5} & \textbf{MRR} \\
\midrule
1 & \cellcolor{gray!15}$33.25$ & \cellcolor{gray!20}$61.01$ & $\mathbf{27.79}$ & \cellcolor{gray!25}$51.54$ & \cellcolor{gray!20}$38.36$ \\
2 & \cellcolor{gray!40}$32.32$ & $\mathbf{62.02}$ & \cellcolor{gray!35}$26.33$ & $\mathbf{54.12}$ & \cellcolor{gray!15}$38.40$ \\
3 & \cellcolor{gray!10}$33.71$ & \cellcolor{gray!10}$61.80$ & \cellcolor{gray!15}$27.15$ & \cellcolor{gray!15}$52.57$ & $\mathbf{38.77}$ \\
4 & \cellcolor{gray!15}$33.60$ & \cellcolor{gray!20}$61.01$ & \cellcolor{gray!15}$27.06$ & \cellcolor{gray!20}$52.31$ & \cellcolor{gray!35}$38.21$ \\
5 & $\mathbf{34.61}$ & \cellcolor{gray!35}$60.00$ & $27.88$ & \cellcolor{gray!40}$51.04$ & \cellcolor{gray!15}$38.42$ \\
\bottomrule
\end{tabular}
\end{table}
\subsection{Sensitivity Analysis on Repulsion Strength $\kappa_{\text{repel}}$}
The parameter $\kappa_{\text{repel}}$ controls the strength of the kernel-based repulsive interactions within the SVGD update, which are responsible for enforcing geometric diversity among embeddings on the hypersphere. Specifically, it scales the influence of the vMF kernel $k(z', z) = \exp\!\left(\kappa_{\text{repel}} \, z'^{\top} z \right)$
thereby determining the magnitude of the repulsive force that prevents particle collapse and promotes uniform angular coverage. We examine the effect of varying $\kappa_{\text{repel}} \in \{0.5, 1.5, 2.5, 3.5, 4.5\}$ on MeSH. As reported in Table~\ref{tab:kappa_repel}, performance improves consistently as $\kappa_{\text{repel}}$ increases, reflecting stronger enforcement of angular separation and improved utilization of longitudinal capacity. Lower values yield insufficient repulsion, leading to partial particle clustering and degraded retrieval metrics. Conversely, excessively large values begin to saturate gains, indicating diminishing returns once sufficient geometric diversity is achieved. Empirically, $\kappa_{\text{repel}} \approx 3.5$--$4.5$ provides the best trade-off between separation and semantic stability, yielding peak MRR and recall across datasets.
\begin{table}[ht]
\centering
\caption{\textbf{Sensitivity Analysis of $\kappa_{\text{repel}}$ on MeSH.} Darker shading indicates increased performance degradation from the optimal repulsion strength.}
\label{tab:kappa_repel}
\small
\setlength{\tabcolsep}{8pt}
\renewcommand{\arraystretch}{1.2}
\begin{tabular}{c ccccc}
\toprule
\textbf{$\kappa_{\text{repel}}$} & \textbf{H@1} & \textbf{H@5} & \textbf{R@1} & \textbf{R@5} & \textbf{MRR} \\
\midrule
0.5 & \cellcolor{gray!35}$28.54$ & \cellcolor{gray!40}$53.93$ & \cellcolor{gray!30}$22.99$ & \cellcolor{gray!45}$45.34$ & \cellcolor{gray!45}$33.01$ \\
1.5 & \cellcolor{gray!45}$26.40$ & \cellcolor{gray!30}$56.40$ & \cellcolor{gray!45}$21.45$ & \cellcolor{gray!35}$47.78$ & \cellcolor{gray!35}$33.61$ \\
2.5 & \cellcolor{gray!25}$29.32$ & \cellcolor{gray!15}$58.65$ & \cellcolor{gray!25}$23.62$ & \cellcolor{gray!15}$50.14$ & \cellcolor{gray!20}$35.57$ \\
3.5 & \cellcolor{gray!15}$31.57$ & $\mathbf{60.22}$ & \cellcolor{gray!10}$25.43$ & $\mathbf{51.58}$ & \cellcolor{gray!10}$36.93$ \\
4.5 & $\mathbf{34.61}$ & \cellcolor{gray!10}$60.00$ & $\mathbf{27.87}$ & \cellcolor{gray!10}$51.04$ & $\mathbf{37.99}$ \\
\bottomrule
\end{tabular}
\end{table}
\subsection{Ablation on Geometric and Probabilistic Loss Components}
To quantify the individual contributions of the geometric regularization and probabilistic alignment objectives, we evaluate the model under three configurations: (i) removing the geometric loss (w/o Geometric), (ii) removing the probabilistic vMF-based loss (w/o Probabilistic), and (iii) employing both components jointly. The geometric loss enforces strict manifold consistency by forcing the alignment of parent and child directions by minimizing the geodesic distance on the sphere, while the probabilistic component aligns embeddings with semantic anchors via the vMF distribution. As shown in Table \ref{tab:loss_ablation}, removing either component results in  performance degradation across both MeSH and Verb datasets. Excluding the geometric component leads to the model not being consistent with the geometry, whereas excluding the probabilistic objective induces semantic drift despite maintaining geometric spread. The full model consistently outperforms both ablated variants, confirming that accurate hierarchical representation requires the joint optimization of geometric structure and probabilistic semantic alignment.
\subsection{Ablation on Orbital Gate Potential}
To quantify the orbital potential component, we conduct experiments with and without the gated potential on the overall objective and on each loss component and mention them in table \ref{tab:decoupling_ablation}. The gate consistently improves performance, confirming gains stem from decoupling rather than auxiliary accumulation. Improvements in experiments with geometric loss only indicate better hierarchical structuring and vMF-only gains reflect superior semantic alignment. Combining both yields optimal results. Further, even without the orbital gate, \modelname\ outperforms Euclidean baselines, showing learned geometry inherently captures hierarchy. 

\begin{table}[t]
\centering
\caption{\textbf{Ablation Study on Loss Components across MeSH and Verb datasets.} Best results are marked in \textbf{bold}.}
\label{tab:loss_ablation}
\small
\begin{tabular}{l c c c c c c c c c c}
\toprule
\multirow{2}{*}{\textbf{Configuration}} 
& \multicolumn{5}{c}{\textbf{MeSH}} 
& \multicolumn{5}{c}{\textbf{WordNet Verb}} \\
\cmidrule(lr){2-6} \cmidrule(lr){7-11}
& \textbf{H@1} & \textbf{H@5} & \textbf{R@1} & \textbf{R@5} & \textbf{MRR} 
& \textbf{H@1} & \textbf{H@5} & \textbf{R@1} & \textbf{R@5} & \textbf{MRR} \\
\midrule
w/o Geometric      & 28.31 & 54.00 & 23.26 & 43.98 & 33.71 & 9.31  & 26.41 & 9.29  & 26.35 & 17.07 \\
w/o Probabilistic  & 28.20 & 54.94 & 22.35 & 45.88 & 33.19 & 11.37 & 27.23 & 10.38 & 27.17 & 18.61 \\
w/o Both           & 32.25 & 58.88 & 26.85 & 50.95 & 36.23 & 14.12 & 35.00 & 13.77 & 32.57 & 23.21 \\
\rowcolor{gray!15}
\textbf{Full Model} & \textbf{33.59} & \textbf{62.66} & \textbf{27.03} & \textbf{51.95} & \textbf{37.85} 
                    & \textbf{16.27} & \textbf{39.71} & \textbf{16.24} & \textbf{39.63} & \textbf{27.03} \\
\rowcolor{green!15}
\% $\uparrow$       & +4.16 & +6.40 & +0.67 & +1.96 & +4.47 
                    & +15.23 & +13.46 & +17.93 & +20.45 & +16.46 \\
\bottomrule
\end{tabular}
\end{table}
\begin{table}[t]
\centering
\caption{
\textbf{Effect of orbital potential gating.}
Gating consistently improves retrieval quality across datasets and objective variants, showing that adaptive structural modulation strengthens both geometric and directional representations.
Best results are shown in \textbf{bold}.
}
\label{tab:decoupling_ablation}

\small
\setlength{\tabcolsep}{4pt}
\renewcommand{\arraystretch}{1}

\resizebox{0.45\columnwidth}{!}{
\begin{tabular}{llcccc}
\toprule

\multirow{2}{*}{\textbf{Setting}} &
\multirow{2}{*}{\textbf{Variant}} &
\multicolumn{2}{c}{\textbf{Recall@1}} &
\multicolumn{2}{c}{\textbf{MRR}} \\

\cmidrule(lr){3-4}
\cmidrule(lr){5-6}

&
& \textbf{Score} & $\Delta$
& \textbf{Score} & $\Delta$ \\

\midrule

\multicolumn{6}{c}{\textit{Full model}} \\
\midrule

Environment
& ungated
& 46.3
& 
& 56.3
& \\

Environment
& gated
& \textbf{51.9}
& +5.6
& \textbf{60.8}
& +4.5 \\

\addlinespace[2pt]

MeSH
& ungated
& 24.6
&
& 36.5
& \\

MeSH
& gated
& \textbf{27.1}
& +2.5
& \textbf{37.7}
& +1.2 \\

\midrule

\multicolumn{6}{c}{\textit{Objective ablation on MeSH}} \\
\midrule

Geometric only
& ungated
& 20.5
&
& 30.7
& \\

Geometric only
& gated
& \textbf{22.8}
& +2.3
& \textbf{34.2}
& +3.5 \\

\addlinespace[2pt]

vMF only
& ungated
& 21.8
&
& 32.2
& \\

vMF only
& gated
& \textbf{23.1}
& +1.3
& \textbf{33.4}
& +1.2 \\

\bottomrule
\end{tabular}
}
\end{table}
\section{Analysis of $\theta$ and $\psi$ Distributions of \modelname}
\label{app:analysis_distributions}
As discussed in Section \ref{subsec:svgd}, we analyze the distributions of $\theta$ and $\psi$ on MeSH for all candidates with and without SVGD in figures \ref{fig:angle_distributions} and \ref{fig:no_svgd_distribution} respectively. In Figure \ref{fig:no_svgd_distribution} the latitudinal angles show extreme kurtosis, indicating that optimization degenerates into a narrowband leaving polar regions empty, effectively nullifying radial capacity. Likewise, the longitudinal coordinate fails to achieve uniformity by clustering at the boundaries and thus limiting semantic separability. In contrast, figure \ref{fig:angle_distributions} demonstrates a complete utilization of the hyperspherical volume. Despite the varying available surface area at different depths, the repulsive forces successfully maximize entropy within the semantic subspace, ensuring that sibling concepts remain distinguishable across all levels. 
\begin{figure*}[t]
    \centering
    \includegraphics[width=\textwidth]{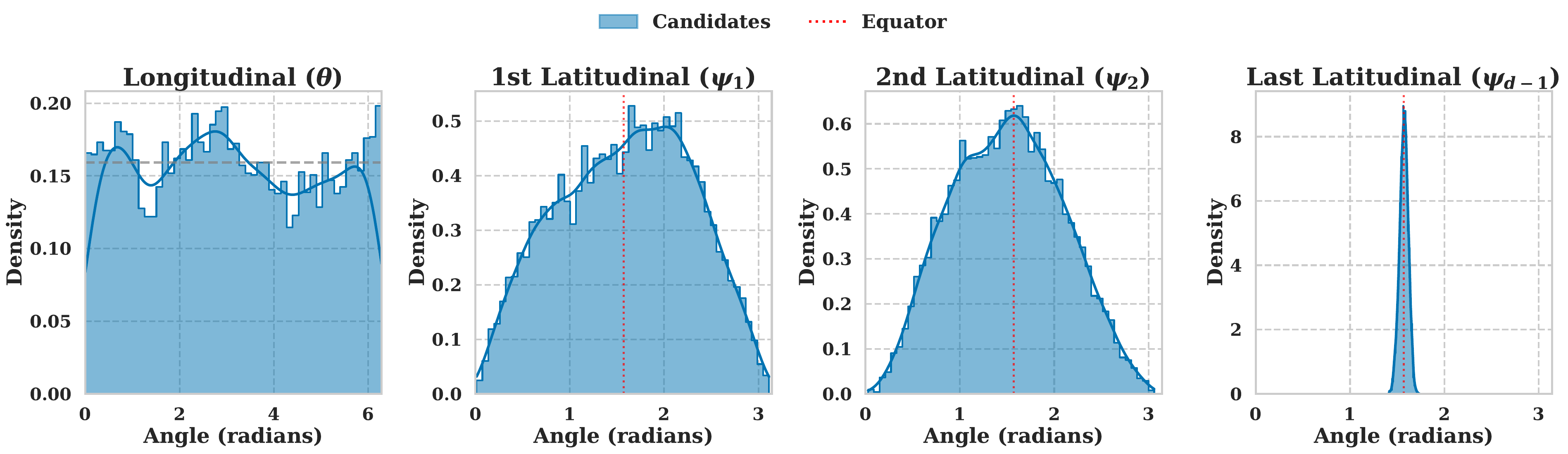}
    \caption{\textbf{Global Distributional Analysis with SVGD on MeSH.} Histograms showing the frequency of learned Angular coordinates ($\theta, \psi$) for all concepts in the DAG. The uniform spread of the Longitudinal Angle (leftmost) confirms that the SVGD regularizer successfully prevents collapse, while the Latitudinal angles exhibit structural peaking consistent with hierarchical depth.}
    \label{fig:angle_distributions}
\end{figure*}
\begin{figure*}[t]
    \centering
    \includegraphics[width=\textwidth]{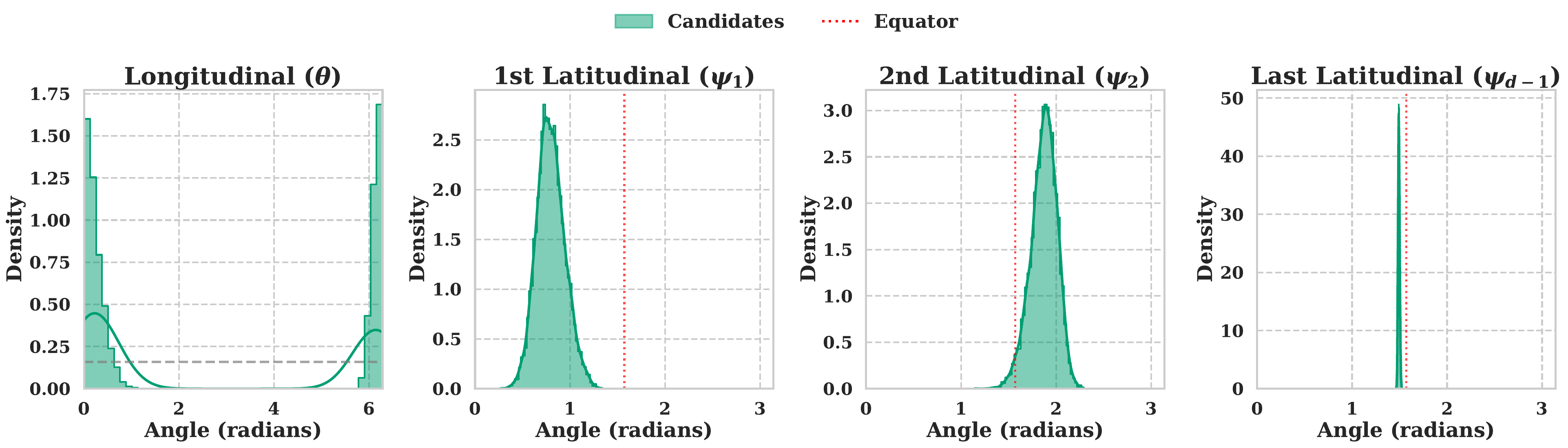} 
    \caption{\textbf{Evidence of Equatorial Collapse on MeSH.} 
    Global angle distributions learned \emph{without} the proposed SVGD regularization. 
    In the absence of the structural score and repulsive kernel, the latitudinal angles (rightmost plots) exhibit a sharp concentration of probability mass around the equator. 
    Unlike the \modelname\ model which populates the poles to encode depth, this unregularized model suffers from the high-dimensional concentration of measure phenomenon, where embeddings degenerate into a narrow orthogonal band, effectively wasting the volume of the hypersphere.}
    \label{fig:no_svgd_distribution}
\end{figure*}

\section{Prior and Related Works}
\label{app:related-works}

As discussed in Section \ref{sec:intro}, prior and related works to polar embeddings are as follows.

\textbf{Concept learning using Polar Coordinates.}
Prior polar and angular embedding approaches typically rely on coordinate-wise optimization rather than intrinsic manifold geometry. \cite{iwamoto2021polar} learn longitudinal and latitudinal angles through distance-based losses defined directly on angular coordinates and enforce constraints via modulus operations. While effective in practice, this procedure treats the angular domain as a flat parameter space and induces geometric distortions that ignore the curvature of the hypersphere. \cite{atzeni-etal-2023-polar} extend polar formulations to hierarchical representation learning by modeling concepts as box regions on the hypersphere; however, their latitudinal coordinate $\psi$ is generated through a scaled sigmoid transformation, which does not correspond to a valid hyperspherical parameterization and breaks intrinsic manifold consistency. HAKE~\cite{hake} introduces a polar-coordinate embedding framework that decomposes entity representations into modulus and phase components, where the modulus captures hierarchical depth and the phase models relational semantics. Although HAKE successfully encodes hierarchical ordering, it operates within a Euclidean polar space and relies on hard geometric constraints, limiting its ability to fully exploit intrinsic manifold structure and curvature-aware optimization. \cite{gregucci2023link} proposed an attention-based framework that integrates the query representations from multiple diverse knowledge graph embedding models into a unified embedding. By learning a relation-specific attention mechanism, their model can dynamically weight the contribution of each base model for a given query, allowing it to adapt to heterogeneous patterns. Furthermore, they extend this combination to non-Euclidean spaces by projecting the unified query onto a Poincaré ball to better model hierarchical structure. However, it inherits the geometric priors of its constituent models, hierarchy is primarily captured by projecting a fused Euclidean vector into a hyperbolic space, rather than being an intrinsic, disentangled component of the representation itself. Further, it does not explicitly regularize the global distribution of embeddings to prevent pathologies like representation collapse or anisotropy, relying instead on the implicit geometries of the base models. \textit{\textbf{In contrast, \modelname\ adopts an intrinsic hyperspherical formulation in which embeddings evolve directly on the manifold through tangent-space projections and exponential maps, preserving the spherical line element throughout optimization. By coupling vMF-based semantic alignment with geometry-aware SVGD dynamics and an explicit uniform occupancy prior, \modelname\ achieves principled manifold-consistent hierarchical representations while avoiding the coordinate distortions and heuristic constraints present in prior polar approaches.}}

\textbf{Hyperspherical Learning.}
\cite{mettes2019hyperspherical} proposed a prototype-based framework where, instead of learning the positions of class centroids, they are fixed a priori on the hypersphere to guarantee maximal angular separation. The learning task is thus simplified to optimizing a direct mapping from inputs to these static targets via a squared cosine loss.  \cite{loshchilov2024ngpt} constrains  all vectors including token embeddings, linear projection weights, attention matrices and hidden states to lie on a hypersphere during learning by normalization after each operation and \cite{karagodin2025normalization} studies normalization schemes on token representations as interacting particles on the sphere, revealing how they influence clustering dynamics and representation collapse. \cite{trosten2023hubs} formally shows that uniformly distributing embeddings on the unit hypersphere eliminates hubness by ensuring both zero mean and vanishing density gradients across all tangent directions. \cite{bendada2025exploring} introduced a technique that leverages hyperspherical embedding vectors and sampling from the von Mises–Fisher distribution to efficiently explore large action spaces. Unlike Boltzmann exploration, which scales poorly with the number of actions, it uses hyperspherical sampling to approximate exploration probabilities while requiring only nearest neighbor operations. \cite{zhang2025angular} learns angular representations on the hypersphere. Their anchor-free SpherePair loss uses cosine similarity to enforce pairwise constraints, theoretically guaranteeing that clusters form an equidistant regular simplex. \textit{\textbf{Building on these ideas, \modelname\ combines intrinsic hyperspherical encoding, vMF-based semantic alignment, and geometry-preserving tangent-space updates with a uniform manifold prior to ensure unbiased, structured representation learning.}}

\section{Cartesian-Angular Mapping}
\label{app:cartesian_angular_mapping}
\begin{definition}[Cartesian-to-Angular Mapping]
\label{def:mapping}
There exists a mapping $\Phi: \Sspace^{d-1} \setminus \mathcal{S} \to [0, \pi]^{d-2} \times [0, 2\pi)$ transforming a Cartesian vector $\mathbf{z} = (z_1, \dots, z_d)$ into angular coordinates $\boldsymbol{\phi} = (\psi_1, \dots, \psi_{d-2}, \theta)$. This mapping is defined recursively:
\begin{align*}
    \psi_i &= \arccos\left( \frac{z_i}{\sqrt{\sum_{j=i}^d z_j^2}} \right), \quad 1 \le i \le d-2 \\
    \theta &= \begin{cases} 
        \arccos\left( \frac{z_{d-1}}{\sqrt{z_{d-1}^2 + z_d^2}} \right) & \text{if } z_d \ge 0 \\
        2\pi - \arccos\left( \frac{z_{d-1}}{\sqrt{z_{d-1}^2 + z_d^2}} \right) & \text{if } z_d < 0
    \end{cases}
\end{align*}
\end{definition}
 $\theta,\psi$ represent the longitudinal and latitudinal angles, respectively. $\theta$ is defined on the full circle, while $\psi_i$ is defined on the half circle, where $i$ represents components from 1 to $d-2$. 

\section{Theoretical Limitations of Angular Parameterization}
\label{app:learning_angles}
In this section, we formally analyze the geometric deficiencies inherent in prior polar embedding methods. We categorize these limitations into three distinct pathologies that arise when optimizing angular coordinates directly.
 \subsection{Coordinate Singularities}\label{sec:singularity}
 The mapping from angular parameters to the spherical manifold is not a global diffeomorphism. It contains singularities where the coordinate system collapses.
\begin{definition}[Local Identifiability of the Embedding Space]
\label{def:local_identifiability}
Let $\mathcal{Z}$ be the geometric embedding space and $\{P_\theta : \theta \in \mathcal{Z}\}$ be the family of model distributions. The model is said to be locally identifiable at $\theta_0 \in \mathcal{Z}$ if there exists an open neighborhood $U \subseteq \mathcal{Z}$ containing $\theta_0$ such that:
\begin{equation}
    P_{\theta_1} = P_{\theta_2} \implies \theta_1 = \theta_2, \quad \forall \theta_1, \theta_2 \in U.
\end{equation}
\end{definition}

\begin{theorem}[Jacobian Singularity of Angular Coordinates]
\label{theo:jacobian_sing}
Let $  \Psi: [0,\pi]^{d-2} \times [0, 2\pi) \to \Sspace^{d-1}$ be the coordinate transformation mapping angles $\boldsymbol{\phi} = (\psi_1, \dots, \psi_{d-1})$ to a Cartesian vector $\mathbf{z}$. The Jacobian matrix $\mathbf{J}_\Psi(\boldsymbol{\psi}) = \frac{\partial \mathbf{z}}{\partial \boldsymbol{\psi}}$ becomes singular whenever any latitudinal angle becomes $\psi_k \in \{0, \pi\}$ for $1 \le k \le d-2$.
\end{theorem}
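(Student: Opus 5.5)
The plan is to show that $\mathbf{J}_\Psi$, a $d\times(d-1)$ matrix, loses full column rank. I will do this by computing the Gram matrix $\mathbf{G}=\mathbf{J}_\Psi^\top\mathbf{J}_\Psi$, which is the pullback of the round metric $g$ to the coordinates $\boldsymbol{\phi}=(\psi_1,\dots,\psi_{d-2},\theta)$. I write the parametrization as the inverse of Definition~\ref{def:mapping}:
\[
z_1=\cos\psi_1,\qquad z_i=\Big(\prod_{j<i}\sin\psi_j\Big)\cos\psi_i \quad (2\le i\le d-2),
\]
\[
z_{d-1}=\Big(\prod_{j\le d-2}\sin\psi_j\Big)\cos\theta,\qquad z_d=\Big(\prod_{j\le d-2}\sin\psi_j\Big)\sin\theta .
\]
The key structural fact is the recursion $\mathbf{z}=(\cos\psi_1,\ \sin\psi_1\,\mathbf{w})$. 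Here $\mathbf{w}\in\Sspace^{d-2}$ is the same parametrization in the remaining angles $(\psi_2,\dots,\psi_{d-2},\theta)$.

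Using this recursion, I will prove by induction on $d$ that the columns of $\mathbf{J}_\Psi$ are mutually orthogonal. The base case is $d=2$, the circle parametrized by $\theta$ alone. For the inductive step, write
\[
\partial_{\psi_1}\mathbf{z}=(-\sin\psi_1,\ \cos\psi_1\,\mathbf{w}),\qquad \partial_{\phi}\mathbf{z}=(0,\ \sin\psi_1\,\partial_\phi\mathbf{w})
\]
for every later coordinate $\phi$. Differentiating $\|\mathbf{w}\|^2=1$ gives $\langle\mathbf{w},\partial_\phi\mathbf{w}\rangle=0$. This makes $\partial_{\psi_1}\mathbf{z}$ orthogonal to all other columns. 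The induction hypothesis then gives orthogonality among the columns $\partial_\phi\mathbf{w}$, and hence among the corresponding columns of $\mathbf{J}_\Psi$.

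The same recursion yields the squared column norms:
\[
\|\partial_{\psi_1}\mathbf{z}\|^2=1,\qquad \|\partial_\phi\mathbf{z}\|^2=\sin^2\psi_1\,\|\partial_\phi\mathbf{w}\|^2 .
\]
Consequently $\mathbf{G}=\mathrm{diag}(g_{11},\dots,g_{d-1,d-1})$, with $g_{kk}=\prod_{j<k}\sin^2\psi_j$ for $k\le d-2$ and $g_{\theta\theta}=\prod_{j\le d-2}\sin^2\psi_j$. This recovers the line element quoted in the FAQ. Multiplying the diagonal entries,
\[
\det\mathbf{G}=\prod_{k=1}^{d-2}\sin^{2(d-1-k)}\psi_k .
\]
The Jacobian has full column rank exactly when $\det\mathbf{G}\neq0$. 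Every exponent $2(d-1-k)$ is at least $2$ for $1\le k\le d-2$. So $\det\mathbf{G}=0$, and $\mathbf{J}_\Psi$ is rank-deficient (the map is not an immersion), whenever some $\psi_k\in\{0,\pi\}$.

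For a concrete witness, observe that if $\sin\psi_k=0$ then every column indexed by a coordinate after $\psi_k$ vanishes identically; the $\theta$ column always qualifies since $k\le d-2$. This exhibits a nontrivial null vector $\mathbf{J}_\Psi\mathbf{e}_\theta=\mathbf{0}$. As an optional remark, I would connect this to Definition~\ref{def:local_identifiability}: varying $\theta$ leaves $\mathbf{z}$ fixed, so local identifiability fails there.

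The only step needing care is the orthogonality induction: indices must be tracked correctly, and the last coordinate $\theta$ must be handled with its full-circle range. Everything else is routine differentiation.
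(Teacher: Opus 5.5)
Your proof is correct and follows the same route as the paper: identify $\sqrt{\det(\mathbf{J}_\Psi^\top\mathbf{J}_\Psi)}$ with $\prod_{k=1}^{d-2}\sin^{d-1-k}\psi_k$, note that every exponent is positive, and conclude that it vanishes, so $\mathbf{J}_\Psi$ loses full column rank, whenever some $\sin\psi_k=0$. The paper simply quotes the spherical volume element, whereas you derive the diagonal Gram matrix via the orthogonality induction and also exhibit the explicit kernel vector $\mathbf{J}_\Psi\mathbf{e}_\theta=\mathbf{0}$, which makes your argument self-contained.
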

\begin{proof}
The mapping of angular coordinates to the rectangular coordinates on the unit hypersphere $\Sspace^{d-1}$ is as follows,
\begin{align}
\label{eq:volume_ele}
z_1 &= \cos(\psi_1), \\[4pt]
z_2 &= \sin(\psi_1)\cos(\psi_2), \\[4pt]
&\ \vdots \\[4pt]
z_k &= \left(\prod_{j=1}^{k-1} \sin(\psi_j)\right)\cos(\psi_k), \\[6pt]
&\ \vdots \\[6pt]
z_d &= \left(\prod_{j=1}^{d-2} \sin(\psi_j)\right)\sin(\theta).
\end{align}
The volume element of a unit hypersphere $\Sspace^{d-1} \subset \R^{d}$ is given by,
\begin{align}
d\operatorname{Vol}_{S^{d-1}}
=
\sin^{d-2}(\psi_1)\,
\sin^{d-3}(\psi_2)\,
\cdots\,
\sin(\psi_{d-2})\,
d\psi_1\, d\psi_2 \cdots d\psi_{d-2}\, d\theta
\end{align}
From, \eqref{eq:volume_ele}, the Jacobian determinant factor is obtained as follows,
\begin{align}
    \label{eq:determinant}
    \sqrt{\det(\mathbf{J}^T{\mathbf{J}})}=\prod_{j=1}^{d-2}\sin^{d-1-j}\psi_j
\end{align}
The singularity of the transformation  between Cartesian and angular coordinates is determined by the rank of the Jacobian $\mathbf{J}_\Psi$. For the mapping to be full rank, the volume element must be non-zero. For any latitudinal coordinate $\psi_k\in \{0,\pi\}$, $\sin \psi_k=0$. The exponent of the power series in \eqref{eq:determinant} is $p=d-1-k$ where $k$ ranges between $0\leq k\leq d-2$. Therefore, the minimum value of the exponent $p_{\text{min}}$ is,
\begin{align}
    p_{\text{min}}=(d-1)-(d-2)=1
\end{align}
This means $\sin\psi$ always exists in the product. Substituting $\sin\psi_k=0$ in the determinant term,
\begin{align}
\sqrt{\det\!\left(\mathbf{J}^{\top} \mathbf{J}\right)}
=\left(
\cdots
\bigl(\underbrace{\sin(\psi_k)}_0\bigr)^{\,d-1-k}
\cdots
\right) =0
\end{align}
Therefore, the Jacobian $\mathbf{J}_\Psi$ loses full column rank indicating that angular sampling biases embeddings.
\end{proof}
Theorem $\ref{theo:jacobian_sing}$ plays a significant role in learning polar coordinates. At points of singularity, variations in specific angular parameters result in zero displacement in the embedding space, rendering the space no longer locally identifiable.

Further, if the Jacobian $\mathbf{J}_\Psi(\boldsymbol{\phi}) = \frac{\partial \mathbf{z}}{\partial \boldsymbol{\phi}}$ is singular, it projects the gradients into the null space effectively nullifying the learning signal of the model parameters associated with the loss functions, causing the training process to suffer from numeric instabilities.
 \subsection{Superficial Similarity}\label{sec:superficial}

A common heuristic for polar embedding learning involves optimizing a separable loss function on the angular coordinates. Let $\mathcal{L}_{\text{sep}}$ be a composite objective (e.g., using Welsch loss $\rho$) that minimizes the latitudinal difference and the shortest longitudinal wrap-around difference independently:
\begin{equation}
\begin{aligned}
\mathcal{L}_{\text{sep}}(\boldsymbol{\phi}_i, \boldsymbol{\phi}_j)
    = \rho\!\left(\,|\psi_i - \psi_j|\,\right) 
     + \rho\!\left(\min\!\left(|\theta_i - \theta_j|,\; 2\pi - |\theta_i - \theta_j|\right)\right)
\end{aligned}
\end{equation}
Optimizing angles directly as a combination of losses for longitude and latitude is analogous to minimizing distances on a parameter grid rather than on a spherical manifold. This inherently defines the geometry of a flat Euclidean lattice wrapped along one axis. This cylindrical model holds true at the equator, where $\psi=\frac{\pi}{2}$. The manifold is maximally wide, and Euclidean approximations are valid, $\sin{\psi}\approx1$. However, this model collapses as $\psi$ approaches the poles. On a sphere, the grid lines of the longitude converge at the poles, whereas on a cylinder, the lines run parallel and equidistant to each other. Therefore, the model ends up attempting to make large gradient updates for small semantic differences at the poles, resulting in negligible improvement in performance and true semantic alignment of concepts. We show this formally in statements \ref{thm:cylindrical_bias} and \ref{cor:polar_distortion}.
\begin{theorem}[Implicit Cylindrical Geometry of Angular Differences]
\label{thm:cylindrical_bias}
The separable objective function $\mathcal{L}_{\text{sep}}$ implies an underlying Riemannian manifold $\manifold_{\text{param}}$ equipped with a flat Euclidean metric tensor $\mathbf{G}_{\text{param}} = \mathbf{I}$. This geometry corresponds to a rectilinear grid on the parameter domain $[0, \pi] \times [0, 2\pi)$.

Topologically, due to the periodicity of $\theta$, this manifold is isomorphic to a cylinder $\mathcal{C} = [0, \pi] \times \Sspace^1$ with zero Gaussian curvature. This stands in direct contradiction to the intrinsic geometry of the hypersphere $\Sspace^{d-1}$, which possesses constant positive curvature and a metric tensor coupled by the sine of the latitude: $ds^2 = d\psi^2 + \sin^2(\psi)d\theta^2$.
\end{theorem}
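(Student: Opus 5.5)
We would prove Theorem~\ref{thm:cylindrical_bias} by extracting the metric that $\mathcal{L}_{\text{sep}}$ implicitly uses, and then comparing its curvature with that of the round metric. The argument is local: a loss built from a kernel $\rho$ of a distance defines a geometry through the second-order behaviour of that distance near the diagonal.

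\textbf{Step 1: the implied distance.} Write $\Delta\psi=\psi_i-\psi_j$ and $\Delta\theta=\theta_i-\theta_j$. The separable objective penalises only $|\Delta\psi|$ and the wrap-around quantity $\delta_\theta=\min(|\Delta\theta|,2\pi-|\Delta\theta|)$. The second of these is exactly the intrinsic distance on $\Sspace^1=\mathbb{R}/2\pi\mathbb{Z}$. Hence $\mathcal{L}_{\text{sep}}$ is a function of the coordinate pair $(|\Delta\psi|,\delta_\theta)$. These are precisely the componentwise distances of the product space $[0,\pi]\times\Sspace^1$, which carries the product metric $d\psi^2+d\theta^2$.

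\textbf{Step 2: the metric tensor.} For the Welsch kernel, $\rho(t)=t^2/(2c^2)+O(t^4)$. Expanding $\mathcal{L}_{\text{sep}}$ for infinitesimally close points gives $\mathcal{L}_{\text{sep}}\approx \tfrac{1}{2c^2}(d\psi^2+d\theta^2)$. This is the Hessian at the diagonal, and it is the standard way a divergence-like loss induces a metric. Up to the constant factor, it identifies $\mathbf{G}_{\text{param}}=\mathbf{I}$.

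The crucial observation is that this tensor is independent of $(\psi,\theta)$. The loss sees the same coordinate increments everywhere, with no $\sin\psi$ weighting. This gives the rectilinear grid on $[0,\pi]\times[0,2\pi)$. Identifying $\theta=0$ with $\theta=2\pi$ is forced by the $\min$-wrap, and it yields the cylinder $\mathcal{C}=[0,\pi]\times\Sspace^1$.

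\textbf{Step 3: curvature comparison.} For a two-dimensional metric $ds^2=d\psi^2+G(\psi)^2d\theta^2$, the Gaussian curvature is $K=-G''/G$. With $G\equiv1$ we get $K=0$, so the parameter geometry is flat.

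For the sphere, we restrict the round metric through the parametrisation in the proof of Theorem~\ref{theo:jacobian_sing}. Specifically, we use the $(\psi_1,\theta)$ slice, which plays the role of the latitude/longitude picture of Appendix~\ref{app:faq}. This gives $ds^2=d\psi^2+\sin^2\psi\,d\theta^2$, with $G=\sin\psi$, so $K=-(-\sin\psi)/\sin\psi=1>0$.

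Gaussian curvature is an isometry invariant (Theorema Egregium). The two metrics therefore cannot be locally isometric anywhere, which is the claimed contradiction. We would also note that the two tensors agree only where $\sin^2\psi=1$, the equator, and that the ratio of longitudinal lengths diverges as $\psi\to0,\pi$. This connects the result to Corollary~\ref{cor:polar_distortion}.

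\textbf{Main obstacle.} The delicate point is Step 2: making precise the phrase ``the objective implies a Riemannian manifold.'' A loss is not a metric. We would fix this by defining the implied metric as the Hessian of $\mathcal{L}_{\text{sep}}(\boldsymbol{\phi},\cdot)$ at the diagonal.

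This definition runs into one technical snag. At $|\Delta\theta|=\pi$ the wrap-around minimum is non-smooth, but this does not affect the local Hessian.

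A second point needs handling for $d>3$. There we work on the two-dimensional slice spanned by one latitudinal angle and $\theta$, which is the setting the statement's domain $[0,\pi]\times[0,2\pi)$ indicates. We also observe that the full round metric has the warped form $d\psi_1^2+\sin^2\psi_1(\cdots)$, so the same curvature mismatch persists in higher dimensions.

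We also interpret ``isomorphic to a cylinder'' as diffeomorphic-and-isometric to $[0,\pi]\times\Sspace^1$ with the product metric. This is immediate once the metric is identified.
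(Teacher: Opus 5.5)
Your proposal is correct and follows essentially the same route as the paper. Both arguments take a second-order Taylor expansion of $\mathcal{L}_{\text{sep}}$ near the diagonal, which yields the constant metric $d\psi^2+d\theta^2$; the periodic identification then gives a flat cylinder, which is contrasted with the round metric. The paper does the expansion for a general $C^2$ kernel with $\rho(0)=\rho'(0)=0$, while you do it for Welsch. Your explicit computation $K=-G''/G$ (giving $0$ versus $1$), combined with the Theorema Egregium, makes the final ``contradiction'' step rigorous, whereas the paper only asserts the positive curvature and the absence of $\sin\psi$ coupling.
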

\begin{proof}
Consider the separable loss to be,
\begin{align}
\mathcal{L}_{\text{sep}}(\boldsymbol{\phi}_i, \boldsymbol{\phi}_j)
    &= \rho\!\left(\,|\psi_i - \psi_j|\,\right)+ \rho\!\left(\min\!\left(|\theta_i - \theta_j|,\; 2\pi - |\theta_i - \theta_j|\right)\right)
\end{align}
where the parameterization $\phi=(\psi,\theta)\in[0,\pi]\times[0,2\pi)$. Assume $\rho$ is smooth and strictly increasing with $\rho(0)=0$. Let $\phi=(\psi,\theta)$ and $\phi'=(\psi+d\psi,\theta+d\theta)$. For sufficiently small displacements $d\psi,d\theta$, we have
\[
\min\!\big(|\theta-\theta'|,\,2\pi-|\theta-\theta'|\big)=|d\theta|.
\]
Assume $\rho$ is twice continuously differentiable with $\rho(0)=0$ and
$\rho'(0)=0$. Its second-order Taylor expansion about $0$ is
\begin{align}
\rho(x)=\frac{1}{2}\rho''(0)\,x^2+o(x^2)
\end{align}
Applying this expansion to each coordinate difference yields
\begin{align}
\rho(|d\psi|)
&=\frac{1}{2}\rho''(0)\,(d\psi)^2+o\!\left((d\psi)^2\right)\\
\rho(|d\theta|)
&=\frac{1}{2}\rho''(0)\,(d\theta)^2+o\!\left((d\theta)^2\right)
\end{align}
Therefore, for infinitesimally close points,
\begin{align}
\mathcal L_{\mathrm{sep}}(\phi,\phi+d\phi)
=
\frac{1}{2}\rho''(0)\Big((d\psi)^2+(d\theta)^2\Big)
+o\!\left(\|d\phi\|^2\right)
\end{align}
Up to the positive scalar factor $\frac{1}{2}\rho''(0)$ the induced squared line element is therefore,
\begin{align}
\label{eq:cylinder_metric}
    ds^2=d\psi^2+d\theta^2
\end{align}
Hence, the metric tensor on $\mathcal{M}_{\text{param}}$ is,
\begin{align}
    \mathbf{G}_{\text{param}}=\begin{pmatrix}
1 & 0 \\
0 & 1
\end{pmatrix}
= \mathbf{I}.
\end{align}
which is constant and diagonal. The Riemann curvature tensor is identically zero. Therefore, the induced geometry is flat. Since the coordinate $\theta$ is periodic with period $2\pi$, the parameter space is topologically equivalent to the cylinder. \eqref{eq:cylinder_metric} shows that the coordinate $\psi\in[0,\pi]$ moves along the height from $0$ to $\pi$. Geometrically, this indicates a band with periodic boundary conditions in one direction, resembling a finite cylinder,  
\begin{align*}
    \mathcal{M}_\text{param}\cong[0,\pi]\times\Sspace^1
\end{align*}
A cylinder is locally isometric to the Euclidean plane and has zero Gaussian curvature. By contrast the metric of the hypersphere $\Sspace^{d-1}$ in spherical coordinates is given by,
\begin{align}
    ds^2=d\psi^2+\sin ^2(\psi)d\theta^2
\end{align}
which explicitly couples longitudinal and latitudinal directions and yields constant positive curvature. Since $\mathcal{L}_\text{sep}$ induces a metric without such coupling, it cannot recover the intrinsic geometry of $\Sspace^{d-1}$.
\end{proof}
\begin{corollary}[Unbounded Distortion at the Poles]
\label{cor:polar_distortion}
The mismatch between the induced cylindrical metric and the true spherical metric results in a distortion ratio $\mathcal{D}$ that diverges at the singularities. Let $\delta_{\mathcal{L}}$ be the gradient magnitude induced by the loss and $\delta_{\text{geo}}$ be the true geodesic gradient for a longitudinal displacement $d\theta$:
\begin{equation}
    \lim_{\psi \to 0} \mathcal{D}(\psi) = \lim_{\psi \to 0} \frac{\delta_{\mathcal{L}}}{\delta_{\text{geo}}} \propto \lim_{\psi \to 0} \frac{1}{\sin(\psi)} = \infty
\end{equation}
Thus, the optimization landscape treats the poles not as points, but as expanded circles of circumference $2\pi$, forcing the model to minimize distances that do not exist on the manifold.
\end{corollary}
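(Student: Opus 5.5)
The plan is to make the corollary precise by comparing, for a purely longitudinal perturbation $d\theta$ at fixed latitude $\psi$ (taking $d=3$ for concreteness; in higher dimensions $\sin\psi$ becomes the product $\prod_j\sin\psi_j$ and the argument is unchanged), two magnitudes. The first is the length assigned to the perturbation by the metric that $\mathcal{L}_{\text{sep}}$ induces. The second is the true geodesic length on $\Sspace^{d-1}$. Their ratio is $\mathcal{D}(\psi)$. Both quantities come directly from Theorem~\ref{thm:cylindrical_bias}.

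\textbf{Step 1: the loss-induced length.} By the Taylor expansion in the proof of Theorem~\ref{thm:cylindrical_bias}, $\mathcal{L}_{\text{sep}}(\boldsymbol{\phi},\boldsymbol{\phi}+d\boldsymbol{\phi}) = \tfrac12\rho''(0)\big((d\psi)^2+(d\theta)^2\big)+o(\|d\boldsymbol{\phi}\|^2)$. Setting $d\psi=0$, the loss-induced length is $\delta_{\mathcal{L}}=c\,|d\theta|$ with $c=\sqrt{\rho''(0)/2}$, and this does not depend on $\psi$.

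\textbf{Step 2: the geodesic length.} From the spherical line element $ds^2=d\psi^2+\sin^2\psi\,d\theta^2$, the same perturbation has length $\delta_{\text{geo}}=\sin\psi\,|d\theta|$. This can be checked directly from Proposition~\ref{prop:geodesic_sphere}. The two points $\mathbf{z}(\psi,\theta)$ and $\mathbf{z}(\psi,\theta+d\theta)$ satisfy $\langle\mathbf{z},\mathbf{z}'\rangle = \cos^2\psi+\sin^2\psi\cos d\theta$. Expanding $\arccos$ near $1$ then gives $d_{\manifold}=\sin\psi\,|d\theta|+o(d\theta)$.

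\textbf{Step 3: the ratio and its limit.} The ratio is $\mathcal{D}(\psi)=\delta_{\mathcal{L}}/\delta_{\text{geo}}=c/\sin\psi$, which diverges as $\psi\to0$ (and likewise as $\psi\to\pi$). To read this in gradient terms, I would use the chain rule through the Jacobian of Theorem~\ref{theo:jacobian_sing}. Its $\theta$-column has norm $\sin\psi$. Converting a displacement that $\mathcal{L}_{\text{sep}}$ treats as unit-size into a displacement on the manifold therefore costs a factor $1/\sin\psi$, so the same ratio governs the gradients.

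The geometric remark follows from the same computation. At $\psi=0$, every $\theta$ maps to the single point $\mathbf{p}_N$, so $\delta_{\text{geo}}=0$. The cylindrical metric, however, still assigns the circle $\{0\}\times\Sspace^1$ a length of $2\pi c$. The pole is therefore expanded into a circle, and the loss penalizes differences in $\theta$ there even though those differences correspond to no displacement on the manifold.

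\textbf{Main obstacle.} The delicate point is interpreting the statement's "gradient magnitude" consistently. At $\psi=0$ the ratio is literally $0/0$ before the limit is taken. I would resolve this by defining $\mathcal{D}$ as the ratio of the first-order coefficients in $|d\theta|$ for each $\psi\in(0,\pi)$, and only then taking the limit.
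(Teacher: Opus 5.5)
Your proposal is correct and is essentially the paper's argument. Fix $\psi>0$, compare the length $|d\theta|$ that the flat metric induced by $\mathcal{L}_{\text{sep}}$ assigns to a purely longitudinal displacement with its spherical length $\sin\psi\,|d\theta|$, and let the ratio $1/\sin\psi$ diverge; the paper phrases this as the absence of any finite bi-Lipschitz constant. The differences are minor: you get the gradient reading from the $\theta$-column of the chart Jacobian (so $\mathcal{L}_{\text{sep}}$ changes at rate $\partial_\theta\mathcal{L}_{\text{sep}}/\sin\psi$ per unit arc length) rather than from the paper's inverse-metric computation with $g^{-1}=\mathrm{diag}(1,\sin^{-2}\psi)$, and you explicitly verify the ``pole as a circle of length $2\pi$'' remark, which the paper leaves implicit in the cylinder picture of Theorem~\ref{thm:cylindrical_bias}.
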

\begin{proof}
    We know that a map $f:(X,d_X)\to(Y,d_Y)$ is bi-Lipschitz if there exist constants
$0<c\le C<\infty$ such that
\[
c\,d_X(x,x')\le d_Y(f(x),f(x'))\le C\,d_X(x,x')
\quad\forall x,x'\in X.
\]

Fix $\psi>0$ and consider two points differing only in longitude,
\[
p_\psi=(\psi,0),
\qquad
q_\psi=(\psi,\varepsilon),
\quad 0<\varepsilon\ll1.
\]
Under the cylindrical metric,
\[
d_{\mathrm{cyl}}(p_\psi,q_\psi)=\varepsilon.
\]
Under the spherical metric,
\[
d_{\mathrm{sphere}}(p_\psi,q_\psi)=\sin\psi\,\varepsilon.
\]

Hence, the distortion ratio is
\[
\frac{d_{\mathrm{cyl}}(p_\psi,q_\psi)}
     {d_{\mathrm{sphere}}(p_\psi,q_\psi)}
=
\frac{1}{\sin\psi}.
\]
As $\psi\to0$ (or $\psi\to\pi$), $\sin\psi\to0$ and the ratio diverges. Therefore,
no finite Lipschitz constant exists. Now, let $L(\psi,\theta)$ be a smooth loss function. The Riemannian gradient under a
metric $g$ is given by
\[
\nabla_g L = g^{-1}\nabla L.
\]

For the spherical metric,
\[
g_{\mathrm{sphere}}^{-1}
=
\begin{pmatrix}
1 & 0\\
0 & \sin^{-2}\!\psi
\end{pmatrix},
\]
so the longitudinal component of the gradient is
\[
(\nabla_{\mathrm{sphere}} L)_\theta
=
\frac{1}{\sin^2\!\psi}\,\partial_\theta L.
\]

Under the cylindrical metric,
\[
g_{\mathrm{cyl}}^{-1}=\mathbf I,
\qquad
(\nabla_{\mathrm{cyl}} L)_\theta=\partial_\theta L.
\]

As $\psi\to0$, the spherical geometry enforces
$\partial_\theta L\to0$ since all longitudinal directions coincide at the pole.
However, the cylindrical metric assigns a constant norm to $\partial_\theta L$.
Consequently, the effective gradient amplification relative to the true
spherical geometry scales as
\[
\|\nabla_{\mathrm{cyl}} L\|
\;\sim\;
\frac{1}{\sin\psi}\,
\|\nabla_{\mathrm{sphere}} L\|,
\]
which diverges at the poles. This mismatch causes spurious, unbounded gradient updates in the longitudinal
direction near $\psi=0,\pi$, resulting in gradient explosion.
\end{proof}
\subsection{Topological Mismatch of Modulus Constraints}
A naive approach to strictly enforcing polar domains involves applying modulus constraints during optimization, specifically, restricting longitudinal coordinates via modulus operations. However, this imposes a discontinuous "wrapping" operation on the optimization landscape. Consequently, the gradient descent update rule implies a Euclidean step followed by a coordinate reset:
\begin{align}
\theta_{t+1} &\leftarrow \left( \theta_t - \eta \nabla_\theta \mathcal{L} \right) \bmod 2\pi \\
\psi_{t+1} &\leftarrow \left( \psi_t - \eta \nabla_\psi \mathcal{L} \right) \bmod \pi
\end{align}
where $\eta$ is the learning rate and $\mathcal{L}$ denotes the loss function. While these updates technically bound the parameters within angular domains, the optimization dynamic remains fundamentally Euclidean. The algorithm traverses a flat, periodic hyper-rectangle rather than respecting the continuous curvature of the hyperspherical manifold $\Sspace^{d-1}$.
\section{Proof of Theorems}
\label{app:proofs}
In this section, we present proofs for all theorems, propositions and corollaries. We state each theorem before elucidating the proof.
\subsection{Proof of Proposition \ref{prop:geodesic_sphere}}
\label{proof:isometry}
\begin{theorem}
Let $\manifold = (\Sspace^{d-1}, \metric)$ be the Riemannian manifold embedded in $\R^d$, defined by the constraint set $\Sspace^{d-1} = \{ \mathbf{z} \in \R^d : \|\mathbf{z}\|_2 = 1 \}$, where $\metric$ denotes the canonical round metric induced by the Euclidean inner product $\langle \cdot, \cdot \rangle_{\R^d}$. The geodesic distance $d_\manifold: \manifold \times \manifold \to \R_{\ge 0}$ between any two points $\mathbf{z}_i, \mathbf{z}_j \in \manifold$ is given by:
\begin{equation}
    d_\manifold(\mathbf{z}_i, \mathbf{z}_j) = \arccos(\langle \mathbf{z}_i, \mathbf{z}_j \rangle).
\end{equation}
Thus, the optimization of intrinsic geometric relationships on $\manifold$ is isometric to the optimization of cosine similarities in the ambient space $\R^d$ subject to $\|\mathbf{z}\|_2 = 1$.
\end{theorem}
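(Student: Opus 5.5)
The plan is to prove the two inequalities $d_\manifold(\mathbf{z}_i,\mathbf{z}_j)\le \arccos\langle \mathbf{z}_i,\mathbf{z}_j\rangle$ and $d_\manifold(\mathbf{z}_i,\mathbf{z}_j)\ge \arccos\langle \mathbf{z}_i,\mathbf{z}_j\rangle$ separately. Here $d_\manifold$ is the infimum of lengths $L(\gamma)=\int_0^1\|\dot\gamma(t)\|_2\,dt$ over piecewise-$C^1$ curves $\gamma:[0,1]\to\Sspace^{d-1}$ joining the two points. The length uses the Euclidean norm because $\metric$ is induced by $\langle\cdot,\cdot\rangle_{\R^d}$. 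Set $\alpha=\arccos\langle \mathbf{z}_i,\mathbf{z}_j\rangle\in[0,\pi]$; this is well defined since $|\langle \mathbf{z}_i,\mathbf{z}_j\rangle|\le 1$ by Cauchy--Schwarz.

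\textbf{Upper bound: an explicit great-circle arc.} If $\alpha=0$ the points coincide and there is nothing to prove. If $\alpha\in(0,\pi)$, let $\mathbf{v}=(\mathbf{z}_j-\langle\mathbf{z}_i,\mathbf{z}_j\rangle\mathbf{z}_i)/\sin\alpha$. This is a unit vector orthogonal to $\mathbf{z}_i$, i.e.\ the normalized tangent-space projection $\mathrm{proj}_{\mathbf{z}_i}$ used in Section~\ref{subsec:projection}. Define $\gamma(t)=\cos(t\alpha)\mathbf{z}_i+\sin(t\alpha)\mathbf{v}$, which is exactly $\exp_{\mathbf{z}_i}(t\alpha\mathbf{v})$. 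One checks directly that $\|\gamma(t)\|=1$, $\gamma(0)=\mathbf{z}_i$, $\gamma(1)=\mathbf{z}_j$, and $\|\dot\gamma\|\equiv\alpha$, so $L(\gamma)=\alpha$. If $\alpha=\pi$, then $\mathbf{z}_j=-\mathbf{z}_i$; take any unit $\mathbf{v}\perp\mathbf{z}_i$ (this needs $d\ge2$) and use the same curve.

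\textbf{Lower bound: the main step.} Take an arbitrary admissible curve $\gamma$ and consider $f(t)=\arccos\langle\gamma(t),\mathbf{z}_i\rangle$. The key inequality is that $f$ is Lipschitz along $\gamma$ with $|f'(t)|\le\|\dot\gamma(t)\|$ wherever $f$ is differentiable. The plan for this is to decompose $\dot\gamma=a\,\mathbf{z}_i+\mathbf{w}$ with $\mathbf{w}\perp\mathbf{z}_i$, and use $\langle\gamma,\dot\gamma\rangle=0$, which follows from differentiating $\|\gamma\|^2=1$. Writing $c=\langle\gamma,\mathbf{z}_i\rangle$, one has $f'=-\dot c/\sqrt{1-c^2}$. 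The component of $\dot\gamma$ that changes $c$ is its projection onto the unit tangent vector at $\gamma$ pointing toward $\mathbf{z}_i$, namely $(\mathbf{z}_i-c\gamma)/\sqrt{1-c^2}$. Cauchy--Schwarz then gives $|\dot c|\le\sqrt{1-c^2}\,\|\dot\gamma\|$.

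Integrating gives $\alpha=f(1)-f(0)\le\int_0^1|f'|\,dt\le L(\gamma)$. Taking the infimum over $\gamma$ gives $d_\manifold\ge\alpha$, and combined with the upper bound, $d_\manifold(\mathbf{z}_i,\mathbf{z}_j)=\arccos\langle\mathbf{z}_i,\mathbf{z}_j\rangle$.

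\textbf{Expected obstacle.} The delicate point is the non-differentiability of $\arccos$ at $\pm1$, that is, when $\gamma$ passes through $\pm\mathbf{z}_i$. I would handle this by working with $f$ as an absolutely continuous function: the set of times where $c=\pm1$ contributes nothing, because there $f$ attains an extremum, or equivalently one can apply the estimate on open intervals and pass to a limit. As an alternative, one can avoid the issue by noting $\sqrt{1-c^2}\ge0$ and proving $|f(t)-f(s)|\le L(\gamma|_{[s,t]})$ via the chord inequality $\|\mathbf{x}-\mathbf{y}\|=2\sin(\arccos\langle\mathbf{x},\mathbf{y}\rangle/2)$ applied on fine partitions.

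Finally, the ``isometric optimization'' remark follows because $\arccos$ is a strictly decreasing bijection on $[-1,1]$. Hence orderings under $d_\manifold$ and under $\langle\cdot,\cdot\rangle$ are reversed exactly, as noted after Proposition~\ref{prop:geodesic_sphere}.
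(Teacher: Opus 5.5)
Your proof is correct, and it takes a genuinely different and more complete route than the paper.

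\textbf{What the paper does.} It asserts, as a known fact, that the shortest path lies on the great circle cut out by the plane through the origin spanned by $\mathbf{z}_i,\mathbf{z}_j$. It then only computes that arc's length as $L=r\theta=\theta$ with $\cos\theta=\langle\mathbf{z}_i,\mathbf{z}_j\rangle$. In effect it establishes only your upper bound. The minimality of great-circle arcs, which is the real content of the statement, is assumed rather than proved.

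\textbf{What your argument adds.} Your lower bound supplies exactly that missing piece. The identity $\dot c=\langle\dot\gamma,\mathbf{z}_i-c\gamma\rangle$, together with $\|\mathbf{z}_i-c\gamma\|=\sqrt{1-c^2}$, shows that $\arccos\langle\cdot,\mathbf{z}_i\rangle$ changes no faster than arc length along any curve. This is a calibration argument, so the result follows directly from the definition of $d_\manifold$ as an infimum of lengths. Your explicit construction $\gamma(t)=\exp_{\mathbf{z}_i}(t\alpha\mathbf{v})$ also covers the cases $\mathbf{z}_j=\pm\mathbf{z}_i$, where the paper's ``two-dimensional plane'' is not well defined. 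You also correctly flag that $d\ge 2$ is needed. The paper's route buys only brevity and geometric intuition.

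\textbf{One refinement at the endpoints.} Absolute continuity of $f$ is not available a priori, since $\arccos$ is not Lipschitz at $\pm1$. So rely on your second suggestion, made concrete as follows.
\begin{itemize}
\item Let $t_0$ be the last time with $\gamma(t_0)=\mathbf{z}_i$, and let $s$ be the first time after $t_0$ with $f(s)=\alpha$.
\item Then $0<f<\alpha\le\pi$ on $(t_0,s)$, so $|c|<1$ there, and the fundamental theorem of calculus applies on compact subintervals.
\item Exhausting $(t_0,s)$ by such subintervals gives $\alpha\le\int_{t_0}^{s}\|\dot\gamma\|\,dt\le L(\gamma)$.
\end{itemize}
Your chord-and-partition alternative is less self-contained. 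Comparing $f$ at consecutive partition points already requires the triangle inequality for the angular distance, which is essentially what you are proving.
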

\begin{proof}
Consider two vectors $\mathbf{z}_i,\mathbf{z}_j$, such that, $\|\mathbf{z}_i\|=\|\mathbf{z}_j\|=1$. We know that $\mathbf{z}_i,\mathbf{z}_j$ and the origin span a two-dimensional plane passing through the origin. The intersection of the plane and the hypersphere is a standard circle of radius $r=1$ known as the Great Circle. The shortest distance between two points on a sphere always lies along the Great Circle connecting them. Let $\theta$ be the angle subtended by $\mathbf{z}_i$ and $\mathbf{z}_j$ at the origin. Now, on a circle of radius $r$, the length of the arc $L$ is given by,
\begin{align}
    L=r\theta
\end{align}
but, we know that $r=1$, therefore,
\begin{align}
    \label{eq:arc_length}
    L=\theta
\end{align}
Now, the inner product of two vectors in $\R^{d}$ is given by,
\begin{align}
    \langle \mathbf{z}_i, \mathbf{z}_j \rangle &= \|\mathbf{z}_i\| \|\mathbf{z}_j\| \cos \theta \\
\intertext{Since $\|\mathbf{z}_i\| = \|\mathbf{z}_j\| = 1$, this simplifies to:}
    \langle \mathbf{z}_i, \mathbf{z}_j \rangle &= \cos \theta \\
    \theta &= \arccos(\langle \mathbf{z}_i, \mathbf{z}_j \rangle)
\end{align}
From \eqref{eq:arc_length}, we can conclude that,
\begin{align}
    d_\mathcal{M}=\theta=\arccos(\langle \mathbf{z}_i, \mathbf{z}_j \rangle).
\end{align}
\end{proof}
\subsection{Proof of Theorem \ref{thm:concentration}}
\label{proof:concentration}
\begin{theorem}
Let $\mathbf{z}$ be a random vector distributed uniformly on the unit hypersphere $\Sspace^{d-1}$ equipped with the uniform surface probability measure $\sigma$. For any fixed reference axis $\mathbf{u} \in \Sspace^{d-1}$ (e.g., the North pole), let $h(\mathbf{z}) = \langle \mathbf{z}, \mathbf{u} \rangle$ be the projection of $\mathbf{z}$ onto that axis.

For any $\epsilon > 0$, the probability that $\mathbf{z}$ resides in the "polar cap" defined by a distance $\epsilon$ from the equator is bounded by:
\begin{equation}
    \sigma\left( \left\{ \mathbf{z} \in \Sspace^{d-1} : |\langle \mathbf{z}, \mathbf{u} \rangle| \ge \epsilon \right\} \right) \le 2 \exp\left( -\frac{d \epsilon^2}{2} \right)
\end{equation}
\end{theorem}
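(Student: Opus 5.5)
By rotation invariance of $\sigma$ (the same $\SO(d)$-invariance of inner products used in Definition~\ref{thm:welsch_invariance}), we may take $\mathbf{u}=\mathbf{p}_N$ without loss of generality. The event $\{|\langle\mathbf{z},\mathbf{u}\rangle|\ge\epsilon\}$ is the disjoint union of the northern cap $C_\epsilon=\{z_d\ge\epsilon\}$ and its antipodal image $\{z_d\le-\epsilon\}$. These two caps have equal measure under the reflection $z_d\mapsto -z_d$. So it suffices to show $\sigma(C_\epsilon)\le \exp(-d\epsilon^2/2)$ and then apply a union bound; this is where the factor $2$ comes from. For $\epsilon>1$ the cap is empty and there is nothing to prove, so we assume $0<\epsilon\le 1$.

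The plan is the classical cone argument from Ball's notes, which yields exactly the exponent $d\epsilon^2/2$. Let $K_\epsilon=\{t\mathbf{z}: t\in[0,1],\ \mathbf{z}\in C_\epsilon\}$ be the solid cone from the origin over the cap. Since surface area and volume of cones over subsets of the sphere scale together, we have $\sigma(C_\epsilon)=\mathrm{vol}(K_\epsilon)/\mathrm{vol}(B^d)$. The task therefore reduces to enclosing $K_\epsilon$ in a Euclidean ball of radius $\rho$, which gives $\sigma(C_\epsilon)\le\rho^d$.

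We split into two cases.

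\textbf{Case $\epsilon\le 1/\sqrt2$.} Take the ball centred at $\epsilon\mathbf{u}$ with radius $\sqrt{1-\epsilon^2}$. We check that it contains the base disk $\{z_d=\epsilon,\ \|\mathbf{z}\|\le1\}$, the remaining boundary of the cap, and the apex $\mathbf{0}$. The apex is included because $\epsilon\le\sqrt{1-\epsilon^2}$ in this range. By convexity the ball then contains all of $K_\epsilon$. This gives $\sigma(C_\epsilon)\le(1-\epsilon^2)^{d/2}\le e^{-d\epsilon^2/2}$, using $1-x\le e^{-x}$.

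\textbf{Case $1/\sqrt2<\epsilon\le1$.} Use instead the ball centred at $\mathbf{u}/(2\epsilon)$ with radius $1/(2\epsilon)$. This ball passes through the origin and through the rim circle $\{z_d=\epsilon,\ \|\mathbf{z}\|=1\}$, and it contains the cap. The result is $\sigma(C_\epsilon)\le(2\epsilon)^{-d}$. It remains to verify the scalar inequality $\log(2\epsilon)\ge\epsilon^2/2$ on $(1/\sqrt2,1]$. The function $\log(2\epsilon)-\epsilon^2/2$ is concave, so it suffices to check the endpoints: the values are $\approx0.347-0.25>0$ at $\epsilon=1/\sqrt2$ and $\approx0.693-0.5>0$ at $\epsilon=1$.

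The main obstacle is the containment step, $K_\epsilon\subseteq$ ball, in each case. I would verify it by writing a cap point as $\mathbf{z}=(\sqrt{1-s^2}\,\mathbf{w},s)$ with $s\ge\epsilon$ and $\|\mathbf{w}\|=1$, computing the squared distance to the chosen centre as an explicit function of $s$, and checking that it is maximised at $s=\epsilon$. Convexity of balls then extends the containment from the cap and the apex to the whole cone.

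One simpler route should be avoided here. Working directly with the marginal density of $z_d$, which is proportional to $(1-t^2)^{(d-3)/2}$, only gives exponent $(d-3)\epsilon^2/2$ without extra work. For that reason I would use the cone argument.
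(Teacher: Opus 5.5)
Your proof is correct, and it takes a different route from the paper's. The paper slices the sphere to get the marginal density $P(t)\propto(1-t^2)^{(d-3)/2}$ of $h(\mathbf{z})$. It then replaces $(d-3)/2$ by $d/2$ ``for large $d$'', applies $1-t^2\le e^{-t^2}$ to both integrals of the cap ratio, and rewrites them via $\operatorname{erf}$. Finally it lets $\operatorname{erf}(\sqrt{d/2})\to 1$ and closes with $\operatorname{erfc}(x)\le e^{-x^2}$. That route shows the Gaussian profile of the latitude and connects to the $\sin^{d-2}\psi$ area element the paper reuses for the SVGD structural prior. As written, however, it is an asymptotic approximation rather than a bound valid for each $d$. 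Replacing the exponent $(d-3)/2$ by the larger $d/2$ decreases the numerator, and using $(1-t^2)^{d/2}\le e^{-dt^2/2}$ in the denominator increases it. Neither step points in the direction an upper bound on the ratio needs; the exponent change is exactly the $(d-3)$ loss you anticipated, which the paper absorbs by approximation rather than bounding. Your cone-and-ball argument (Ball's cap lemma) needs no integrals and gives the stated constant for every $d$ and every $\epsilon$. The containment you single out as the main obstacle does go through. For a cap point at height $s\ge\epsilon$, the squared distances to the centres $\epsilon\mathbf{u}$ and $\mathbf{u}/(2\epsilon)$ are $1-2s\epsilon+\epsilon^2$ and $1-s/\epsilon+1/(4\epsilon^2)$. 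Both are decreasing in $s$, so they are at most $1-\epsilon^2$ and $1/(4\epsilon^2)$ respectively, with the apex handled as you describe. The scalar inequality is easier than the concavity check suggests: $\log(2\epsilon)-\epsilon^2/2$ has derivative $1/\epsilon-\epsilon\ge 0$ on $(0,1]$, so its positivity at $\epsilon=1/\sqrt2$ alone suffices. One small attribution point: the reduction to $\mathbf{u}=\mathbf{p}_N$ rests on the rotation invariance of the surface measure $\sigma$ (together with invariance of inner products), not on the Welsch-loss invariance statement.
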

\begin{proof}
We can use the concentration of measure in higher-dimensional statistics to prove this theorem. Consider a fixed reference axis $\mathbf{u}\in \Sspace^{d-1}$ where $h(\mathbf{z})=\langle \mathbf{z},\mathbf{u}\rangle$  is the projection of $z$ onto the reference axis. Let the measure to be found be $\sigma(|h(\mathbf{z})|>\epsilon)$ for any $\epsilon>0$. Consider slicing the sphere $\Sspace^{d-1}$ with hyperplanes perpendicular to the axis of $h(\mathbf{z})$ at some value $t$. The intersection of the sphere $\Sspace^{d-2}$ and the plane $z_1=t$ is a lower dimensional sphere $\Sspace^{d-2}$ with radius $r=\sqrt{1-t^2}$. The surface area $A$ of the slice has the following relation,
\begin{align}
\label{eq:area_element}
    A(t)\propto (\sqrt{1-t^2})^{d-3}
\end{align}
Therefore, the probability density for the coordinate $h(z)$ is proportional to the area defined in \eqref{eq:area_element} as follows,
\begin{align}
\label{eq:prob_sphere}
    P(t)\propto(1-t^2)^\frac{{d-3}}{2}
\end{align}
The detailed derivation of \eqref{eq:prob_sphere} by change of variables is given in Appendix \ref{app:derive_area}. The probability of falling in the polar cap $t\geq\epsilon$ is the ratio of the area of the cap to the total area. 
\begin{align}
P(t\geq\epsilon)=\frac{\int_\epsilon^1(1-t^2)^\frac{d-3}{2}dt}{\int_{-1}^1(1-t^2)^\frac{d-3}{2}dt}
\end{align}
Now, we have the inequality $1-x\leq e^{-x}$, which can be written as $1-t^2\leq e^{-t^2}$ by taking $x=t^2$ for our convenience. Further, for large $d$, we can approximate $\frac{d-3}{2}$ as $\frac{d}{2}$. Therefore, we can write, 
\begin{align}
    (1-t^2)^\frac{{d}}{2}\leq e^{-\frac{dt^2}{2}}
\end{align}
So, now we have the following bound,
\begin{align}
    P(t\geq\epsilon)=\frac{\int_{\epsilon}^1\exp(\frac{-dt^2}{2})dt}{\int_{-1}^{1}\exp({\frac{-dt^2}{2}})dt}
\end{align}
We can express the above integral using the error function
\begin{align}
\label{eq:error_function}
    \int\exp(\frac{-dt^2}{2})dt=\sqrt{\frac{\pi}{2d}}\text{erf}(t\sqrt{\frac{d}{2}})
\end{align}
We derive \eqref{eq:error_function} in appendix \ref{app:error}. Therefore, the integral now becomes,
\begin{align}
    P(t \ge \varepsilon)
=
\frac{
\operatorname{erf}\!\left(\sqrt{\frac{d}{2}}\right)
-
\operatorname{erf}\!\left(\varepsilon \sqrt{\frac{d}{2}}\right)
}{
2\,\operatorname{erf}\!\left(\sqrt{\frac{d}{2}}\right)
}
\end{align}
As $d$ becomes large, $\operatorname{erf}({\sqrt{\frac{d}{2}}})\rightarrow1$
\begin{align}
    P(t\geq\epsilon)\approx\frac{1}{2}[1-\operatorname{erf(\epsilon\sqrt{\frac{d}{2}})}]
\end{align}
We know,
\begin{align}
    \operatorname{erfc}(x)=1-\operatorname{erf}(x)
\end{align}
Therefore,
\begin{align}
     P(t\geq\epsilon)\approx\frac{1}{2}\operatorname{erfc}(\epsilon\sqrt{\frac{d}{2}})
\end{align}
Now, for large $x$, $\operatorname{erfc}(x)\leq\exp(-x^2)$, therefore, we can obtain, 
\begin{align}
    P(t\geq\epsilon)\leq\exp(\frac{-d\epsilon^2}{2})
\end{align}
On accounting for both the North pole cap and South pole cap, i.e $P(h(\mathbf{z})\geq\epsilon)$ and $P(h(\mathbf{z})\leq-\epsilon)$, since the distribution is symmetric, we can finally say that,
\begin{align}
      P(|h(\mathbf{z})|\geq\epsilon) \leq 2\exp\!\left(-\frac{d\epsilon^2}{2}\right)
\end{align}

\end{proof}
\subsection{Proof of Theorem \ref{thm:wlln_collapse}}
\label{proof:wlln_collapse}
\begin{theorem}
Let the uniform distribution on $\Sspace^{d-1}$ be generated by normalizing a standard multivariate Gaussian vector $\mathbf{x} \sim \mathcal{N}(\mathbf{0}, \mathbf{I}_d)$, such that $\mathbf{z} = \mathbf{x} / \|\mathbf{x}\|_2$. Let $\mathbf{u} \in \Sspace^{d-1}$ be any fixed reference axis (e.g., the polar axis).

By the Weak Law of Large Numbers (WLLN), as the dimension $d \to \infty$, the projection of a random embedding $\mathbf{z}$ onto $\mathbf{u}$ converges in probability to zero:
\begin{equation}
    \langle \mathbf{z}, \mathbf{u} \rangle \xrightarrow{P} 0
\end{equation}
\end{theorem}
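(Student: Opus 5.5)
By rotational invariance of the standard Gaussian, I would first reduce to the case $\mathbf{u}=\mathbf{e}_1$. For any $\mathbf{R}\in\SO(d)$ we have $\mathbf{R}\mathbf{x}\overset{d}{=}\mathbf{x}$ and $\|\mathbf{R}\mathbf{x}\|_2=\|\mathbf{x}\|_2$. Choosing $\mathbf{R}$ with $\mathbf{R}^\top\mathbf{e}_1=\mathbf{u}$ gives
\[
\langle \mathbf{z},\mathbf{u}\rangle=\frac{\langle \mathbf{R}\mathbf{x},\mathbf{e}_1\rangle}{\|\mathbf{R}\mathbf{x}\|_2}\overset{d}{=}\frac{x_1}{\|\mathbf{x}\|_2}.
\]
This also confirms that $\mathbf{z}$ is uniform on $\Sspace^{d-1}$, consistent with Theorem~\ref{thm:concentration}. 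So it suffices to show $x_1/\|\mathbf{x}\|_2\xrightarrow{P}0$.

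Next I would split the ratio as
\[
\frac{x_1}{\|\mathbf{x}\|_2}=\frac{1}{\sqrt d}\, x_1\cdot\Bigl(\frac{1}{d}\sum_{i=1}^d x_i^2\Bigr)^{-1/2}.
\]
The coordinates $x_i^2$ are i.i.d.\ with mean $1$ and finite variance $2$. The Weak Law of Large Numbers (Chebyshev suffices) gives $\frac1d\sum_i x_i^2\xrightarrow{P}1$. Since $t\mapsto t^{-1/2}$ is continuous at $1$, the continuous mapping theorem gives $\bigl(\frac1d\sum_i x_i^2\bigr)^{-1/2}\xrightarrow{P}1$. Separately, $x_1\sim\mathcal{N}(0,1)$ has a law that does not depend on $d$, so $x_1/\sqrt d\xrightarrow{P}0$; for instance, $\Pr(|x_1|/\sqrt d>\epsilon)=\Pr(|x_1|>\epsilon\sqrt d)\to0$. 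Slutsky's lemma (the product of a sequence tending to $0$ in probability and one tending to $1$ in probability) then gives the claim.

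The one subtle point is that the underlying vector changes with $d$, so I will work with a triangular array. I would build all $\mathbf{x}^{(d)}$ as the first $d$ entries of a single i.i.d.\ $\mathcal{N}(0,1)$ sequence. Since convergence in probability here only concerns marginal laws at each $d$, this coupling is harmless. The step needing the most care is the WLLN/continuous-mapping step, where the denominator could be near zero. I would handle it directly with the event $\{\frac1d\|\mathbf{x}\|_2^2\ge \tfrac12\}$, whose probability tends to $1$ by Chebyshev. On this event $|\langle\mathbf{z},\mathbf{u}\rangle|\le \sqrt2\,|x_1|/\sqrt d$, which yields the explicit bound
\[
\Pr(|\langle\mathbf{z},\mathbf{u}\rangle|>\epsilon)\le \Pr(|x_1|>\epsilon\sqrt{d/2})+\frac{8}{d}\to0.
\]
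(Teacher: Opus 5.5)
Your proof is correct and follows essentially the paper's argument: the paper likewise writes $\langle\mathbf{z},\mathbf{u}\rangle = (N/\sqrt d)\cdot(\tfrac1d\|\mathbf{x}\|_2^2)^{-1/2}$. There $N=\langle\mathbf{x},\mathbf{u}\rangle\sim\mathcal{N}(0,1)$ is obtained directly as a unit-norm linear combination of Gaussians rather than by your rotation to $\mathbf{e}_1$, and the paper concludes by the WLLN, continuous mapping, and Slutsky. Your explicit Chebyshev bound $\Pr(|x_1|>\epsilon\sqrt{d/2})+8/d$ is a valid quantitative refinement that the paper does not give.
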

\begin{proof}
   Let $\mathbf{x} \sim \mathcal{N}(\mathbf{0}, \mathbf{I}_d)$ be a standard multivariate Gaussian vector where $x_1, \dots, x_d$ are independent and identically distributed (i.i.d.) with $x_i \sim \mathcal{N}(0, 1)$. The random vector on the sphere is given by the normalization $\mathbf{z} = \frac{\mathbf{x}}{\|\mathbf{x}\|_2}$.
    
    Let $\mathbf{u} \in \mathbb{S}^{d-1}$ be an arbitrary fixed unit vector. We seek to analyze the convergence of the projection $Y_d = \langle \mathbf{z}, \mathbf{u} \rangle$. We can express this projection as:
    \begin{equation}
        \langle \mathbf{z}, \mathbf{u} \rangle = \frac{\langle \mathbf{x}, \mathbf{u} \rangle}{\|\mathbf{x}\|_2} = \frac{\sum_{i=1}^d x_i u_i}{\sqrt{\sum_{i=1}^d x_i^2}}.
    \end{equation}
    \\
    Consider the numerator term $N = \langle \mathbf{x}, \mathbf{u} \rangle = \sum_{i=1}^d u_i x_i$. Since the $x_i$ are independent Gaussian variables, their linear combination $N$ is also a Gaussian random variable.
    We calculate the mean and variance of $N$:
    \begin{align*}
        \mathbb{E}[N] &= \sum_{i=1}^d u_i \mathbb{E}[x_i] = 0, \\
        \text{Var}(N) &= \sum_{i=1}^d u_i^2 \text{Var}(x_i) = \sum_{i=1}^d u_i^2 (1) = \|\mathbf{u}\|_2^2.
    \end{align*}
    Since $\mathbf{u}$ is a unit vector, $\|\mathbf{u}\|_2^2 = 1$. Therefore, regardless of the dimension $d$, the numerator is distributed as a standard normal variable:
    \[ N \sim \mathcal{N}(0, 1). \]
    Thus, the numerator is bounded in probability (it is $O_p(1)$).
    \\
    Consider the squared norm in the denominator, denoted as $D^2 = \|\mathbf{x}\|_2^2 = \sum_{i=1}^d x_i^2$. The terms $x_i^2$ are i.i.d. variables following a Chi-squared distribution with 1 degree of freedom, having expected value $\mathbb{E}[x_i^2] = 1$.
    
    We apply the Weak Law of Large Numbers (WLLN) to the average of these terms. As $d \to \infty$:
    \[ \frac{1}{d} \sum_{i=1}^d x_i^2 \xrightarrow{P} \mathbb{E}[x_1^2] = 1. \]
    By the Continuous Mapping Theorem, taking the square root gives:
    \[ \sqrt{\frac{1}{d} \|\mathbf{x}\|_2^2} \xrightarrow{P} \sqrt{1} = 1. \]
    \\
    We can rewrite the original expression for the projection by dividing both the numerator and the denominator by $\sqrt{d}$:
    \[ \langle \mathbf{z}, \mathbf{u} \rangle = \frac{N}{\sqrt{d} \cdot \sqrt{\frac{1}{d} \|\mathbf{x}\|_2^2}}. \]
    
    As $d \to \infty$:
    \begin{enumerate}
        \item The term $\frac{N}{\sqrt{d}}$ converges to 0 in probability because $N$ is a finite random variable ($\mathcal{N}(0,1)$) and $\sqrt{d} \to \infty$.
        \item The term $\sqrt{\frac{1}{d} \|\mathbf{x}\|_2^2}$ converges to 1 in probability.
    \end{enumerate}
    
    Finally, by Slutsky's Theorem, the product of these converging terms is:
    \[ \langle \mathbf{z}, \mathbf{u} \rangle = \left( \frac{N}{\sqrt{d}} \right) \cdot \left( \frac{1}{\sqrt{\frac{1}{d} \|\mathbf{x}\|_2^2}} \right) \xrightarrow{P} 0 \cdot \frac{1}{1} = 0. \]
    Thus, the projection converges in probability to zero.
\end{proof}
\section{Additional Discussion on SVGD}\label{app:decomposed_score}
As discussed in section \ref{subsec:svgd}, the behavior of the regularization is governed by the topology of the target score function, $s(\mathbf{z})=\nabla_z\log p(\mathbf{z})$. This score function is interpreted as a superposition of physical forces designed to perform work against the inherent entropic barriers of the hypersphere. We define the effective potential energy of a concept embedding space $\mathbf{z}$ as the negative log likelihood of the target distribution,
\begin{align}
U_{\text{total}}(\mathbf{z}) = -\log p(\mathbf{z}) = \underbrace{-\log p_{\text{struct}}(\mathbf{z})}_{U_\text{polar}} + \underbrace{-\log p_{\text{align}}(\mathbf{z})}_{U_\text{semantic}}
\end{align}
Gradient descent on this landscape subjects the particle to a conservative force $\mathbf{F}=-\nabla U_\text{total}(\mathbf{z})$. The equilibrium state of the system is determined by the interplay of these potentials against the curvature of the manifold.
\subsection{Structural Drift}
The structural component of the score function is derived from a prior favoring the poles $\mathbf{p}$. We know that the surface area element of a thin sheet that slices a sphere scales as,
\begin{align*}
    d\text{Area}(\psi)\propto\sin^{d-2}(\psi)d\psi
\end{align*}
From theorem \ref{thm:concentration}, we can establish that for large $d$, $\sin^{d-2}(\psi)$ becomes an extremely sharp peak at the equator. Therefore, if points are sampled randomly, they will never be at the poles, i.e, high/low hierarchy levels. Now, to represent hierarchy effectively, we need the distribution of an embedding $\mathbf{z}$ occupying a longitudinal coordinate $\theta$ to be Uniform. Therefore, we must introduce a prior $p_\text{struct}(\mathbf{z})$ that cancels out the geometric concentration factor $\sin^{d-2}\psi$. 
So, the target prior density must be inversely proportional to the surface area element,
\begin{align*}
    p_\text{struct}(\mathbf{z})\propto\frac{1}{\sin^{d-2}(\psi)}
\end{align*}
Now, we derive the structural score for this prior. Along the polar axis, $z_d=\cos\psi$. Note that $\psi$ denotes a derived polar angle with respect to a chosen structural axis, not one of the hyperspherical chart variables $\psi_1,\psi_2,...\psi_{d-2}$. Therefore, we can obtain,
\begin{align*}
    \sin\psi=\sqrt{1-z_d^2}
\end{align*}
Substituting the above expression in the ratio,
\begin{align*}
    p_\text{struct}(z_d)\propto(1-z_d^2)^{\frac{-k}{2}}
\end{align*}
where $k$ is a scaling factor. On taking the log likelihood,
\begin{align}
    \log p_\text{struct}(z_d)=-\frac{k}{2}\log(1-z_d^2)+C
\end{align}
On taking the derivative of the log likelihood with respect to $z_d$,
\begin{align}
  \frac{\partial}{\partial z_d}\log p_\text{struct}(z_d)
  = -\frac{k}{2}\cdot\frac{1}{1-z_d^2}\cdot(-2z_d)
  = k\cdot\frac{z_d}{1-z_d^2}
\end{align}
In our implementation, we absorb $k$ into the kernel weights and add $\epsilon$ to the denominator for numerical stability. This yields,
\begin{align}
    \nabla_{z_d}\log p_\text{struct}=\frac{z_d}{1-z_d^2+\epsilon}
\end{align}
Since we align the $\mathbf{p}_N$ with the basis vector, \[
\mathbf{e}_d = 
\begin{bmatrix}
0 & \cdots & 0 & 1
\end{bmatrix}^\top
\]
we obtain the final gradient vector as,
\begin{align}
\nabla_z \log p_{\text{struct}}(\mathbf{z})^{\top}
=
\begin{bmatrix}
0 & \cdots & 0 & \dfrac{z_d}{1 - z_d^2 + \epsilon}
\end{bmatrix}
\end{align}
This gradient acts as a Pole Attracting Force Field. At the Northern Hemisphere, $z_d>0$, it pushes the particles closer to the North pole while at the Southern Hemisphere, $z_d<0$, it pushes the particles closer to the South pole. 
\subsection{Semantic Alignment}
The alignment component corresponds to the vMF likelihood centered at the pretrained embeddings $\boldsymbol{\mu}$.
\begin{align}
\mathbf{F}_{\text{sem}} = \nabla\mathbf{z} (\kappa_{\text{align}} \langle \mathbf{z}, \boldsymbol{\mu} \rangle) = \kappa_{\text{align}} \boldsymbol{\mu}
\end{align}
This acts as a spring connecting the learnable embedding $\mathbf{z}$ to its semantic anchor $\boldsymbol{\mu}$. As the structural gradient pulls $\mathbf{z}$ towards the pole to satisfy hierarchy constraints, the spring stretches. The force $\mathbf{F}_\text{sem}$ exerts a restoring pull to preserve semantic fidelity. This term sculpts local minima into the global gravitational landscape, ensuring that while the constellation of points drift towards the Poles, individual points maintain their relative constellations. 

\section{Derivation of KL Divergence of vMF Distributions}
\label{app:derivation_kl}
In this section, we derive the KL divergence between two vMF distributions. The PDF for a vMF distribution with mean direction $\boldsymbol{\mu}$ and concentration $\kappa$ is,
\begin{align}
  f(\mathbf{x}|\boldsymbol{\mu},\kappa)={C}_d(\kappa)\exp\!\big(\kappa\langle\mathbf{x},\boldsymbol{\mu}\rangle\big)
\end{align}
where $\mathbf{x}\in\Sspace^{d-1}$ is a unit vector and ${C}_d(\kappa)$ is a normalization constant. Let the child distribution be denoted as $f_c$ and the parent distribution be denoted as $f_p$. Both the distributions are parameterized by $\mu$ and $\kappa$ as $\mu_c,\kappa_c$ and $\mu_p,\kappa_p$ respectively. 
\begin{align}
      D_{KL}(f_c\|f_p)=\int_{\Sspace^{d-1}}f_c(\mathbf{x})\log\frac{f_c(\mathbf{x})}{f_p(\mathbf{x})}\,d\mathbf{x} = \mathbb{E}_{\mathbf{x}\sim f_c}[\log f_c(\mathbf{x})-\log f_p(\mathbf{x})]
\end{align}
Now, we can expand the terms as,
\begin{align}
\label{eq:vmf_c}
    \log f_c(\mathbf{x})=\log{C}_d(\kappa_c)+\kappa_c\langle\mathbf{x},\mu_c\rangle\\
    \label{eq:vmf_p}
    \log f_p(\mathbf{x})=\log{C}_d(\kappa_p)+\kappa_p\langle\mathbf{x},\mu_p\rangle
\end{align}
Subtracting \eqref{eq:vmf_c} and \eqref{eq:vmf_p}, we get,
\begin{align}
    \log f_c(\mathbf{x})-\log f_p(\mathbf{x})=\log{C}_d(\kappa_c)-\log{C}_d(\kappa_p)+\langle\mathbf{x},\kappa_c\mu_c-\kappa_p\mu_p\rangle
\end{align}
On taking the expectation of the above expression, we get,
\begin{align}
\label{eq:kl_v}
    D_{KL}=\log{C}_d(\kappa_c)-\log{C}_d(\kappa_p)+\langle\mathbb{E}_{\mathbf{x}\sim f_c}[\mathbf{x}],\kappa_c\mu_c-\kappa_p\mu_p\rangle
\end{align}
The $\mathbb{E}[\mathbf{x}]$ on the unit hypersphere $\Sspace^{d-1}$ is given by,
\begin{align}
\label{eq:expect_kl}
    \mathbb{E}[\mathbf{x}]=\mathcal{A}_d(\kappa)\mu
\end{align}
where,
\begin{align}
    \mathcal{A}_d(\kappa)=\frac{I_{\frac{d}{2}}(\kappa)}{I_{\frac{d}{2}-1}(\kappa)}
\end{align}
Substituting \eqref{eq:expect_kl} in \eqref{eq:kl_v}, we get, 
\begin{align}
    D_{KL}(f_c \| f_p) = \log {C}_d(\kappa_c) - \log {C}_d(\kappa_p) + \mathcal{A}_d(\kappa_c) \langle \mu_c, \kappa_c \mu_c - \kappa_p \mu_p \rangle
\end{align}
Since $\mu_c$ is a unit vector, $\langle \mu_c, \mu_c \rangle = 1$, which simplifies the expression to:
\begin{align}
    D_{KL}(f_c \| f_p) = \log {C}_d(\kappa_c) - \log {C}_d(\kappa_p) + \mathcal{A}_d(\kappa_c) ( \kappa_c - \kappa_p \langle \mu_c, \mu_p \rangle )
\end{align}

\section{Algorithm for Training and Coupled Orbital Inference}
\label{app:inference}
Algorithm \ref{alg:polaris_training} details the training process step by step and algorithm \ref{alg:coupled_inference} details the inference process. We calculate each node's radius based on its depth and the number of descendants, which is later used to rank candidates.
\begin{algorithm}[t]
\caption{Training Procedure for \modelname}
\label{alg:polaris_training}
\small
\begin{algorithmic}[1]
\REQUIRE Training taxonomy $\mathcal{T}$, encoder $f_\theta$, batch size $B$, learning rate $\eta$, SVGD temperature $\kappa$, margins $\gamma_{\text{geom}}, \gamma_{\text{prob}}$
\STATE Initialize encoder parameters $\theta$
\STATE Initialize spherical projection network $f_{\text{sphere}}$
\STATE Initialize probabilistic heads $\Theta_\mu, w_\kappa, b_\kappa$
\FOR{each training iteration}
    \STATE Sample mini-batch $\mathcal{B} = \{(p_i,c_i,n_i)\}_{i=1}^{B}$ of parent--child--negative triplets
    \FOR{each triplet $(p_i,c_i,n_i) \in \mathcal{B}$}
        \STATE Encode latent representations:
        $
        h_p,h_c,h_n \leftarrow f_\theta(p_i), f_\theta(c_i), f_\theta(n_i)
        $
        
  \STATE Project each of $h \in \{h_p,h_c,h_n\}$ onto the tangent space at the North pole:
        $
        u \leftarrow h - \langle h, \mathbf{p}_N \rangle \mathbf{p}_N
        $ 
        
        \STATE Map to hypersphere using exponential map:
        $
        y \leftarrow \exp_{\mathbf{p}_N}(u)
        $
        
        \STATE Apply spherical projection network and normalize:
        $
        z \leftarrow f_{\text{sphere}}(y)
        $
        
        \STATE Compute geodesic angular distances:
        $
        \theta_{pc} \leftarrow \arccos(\langle z_p, z_c \rangle), \quad
        \theta_{nc} \leftarrow \arccos(\langle z_n, z_c \rangle)
        $
        
        \STATE Compute robust geometric loss:
        $
        \mathcal{L}_{\text{geom}}
        \leftarrow
        \max\left(
        0,
        \gamma_{\text{geom}}
        +
        \mathcal{W}(\theta_{pc})
        -
        \mathcal{W}(\theta_{nc})
        \right)
        $
        
        \STATE Predict vMF distribution parameters:
        $
        \boldsymbol{\mu} \leftarrow f_{\text{sphere}}(z;\Theta_{\boldsymbol{\mu}}),
        \qquad
        \kappa \leftarrow \text{Softplus}(w_\kappa^\top z + b_\kappa)
        $
        
        \STATE Compute probabilistic containment loss:
        $
        \mathcal{L}_{\text{prob}}
        \leftarrow
        \max\left(
        0,
        \gamma_{\text{prob}}
        +
        D_{\mathrm{KL}}(\text{vMF}_c \,\|\, \text{vMF}_p)
        -
        D_{\mathrm{KL}}(\text{vMF}_c \,\|\, \text{vMF}_n)
        \right)
        $
    \ENDFOR
    
    \STATE Compute SVGD transport field over batch embeddings:
    $
    \phi(z_i)
    \leftarrow
    \frac{1}{B}
    \sum_{j=1}^{B}
    \left[
    k(z_j,z_i)\nabla_{z_j}\log p(z_j)
    +
    \nabla_{z_j}k(z_j,z_i)
    \right]
    $
    
    \STATE Project SVGD updates onto tangent space:
    $
    \phi^\top(z_i)
    \leftarrow
    \phi(z_i)
    -
    \langle \phi(z_i), z_i \rangle z_i
    $
    \STATE Compute SVGD loss over the batch: $\mathcal{L}_{\text{svgd}} \leftarrow \tfrac{1}{B}\sum_{i=1}^{B}\|\phi^\top(\mathbf{z}_i)\|_2$
    \STATE Compute total objective:
    $  \mathcal{L} \leftarrow \lambda_\text{geo}\mathcal{L}_{\text{geom}} + \lambda_{\text{prob}}\mathcal{L}_{\text{prob}} + \lambda_\text{svgd}\mathcal{L}_{\text{svgd}}
    $
    
    \STATE Compute Riemannian gradients on the hypersphere
    \STATE Update parameters using Riemannian Adam:
    $
    z^{(t+1)}
    \leftarrow
    \exp_{z^{(t)}}\left(
    -\eta \frac{m_t}{\sqrt{v_t}+\epsilon}
    \right)
    $
\ENDFOR
\STATE \textbf{Return} $\text{Trained Embedding Model \modelname}$
\end{algorithmic}
\end{algorithm}
\begin{algorithm}[h!]
   \caption{Coupled Orbital Inference}
   \label{alg:coupled_inference}
\begin{algorithmic}[1]
   \STATE {\bfseries Input:} Query encoder $f_q$, Candidate set $\mathcal{C}$, Depth Map $D(\cdot)$, Descendant Map $N(\cdot)$, Margin $\epsilon$, Strength $\gamma$
   \STATE {\bfseries Output:} Ranked List of Children
   \STATE Let $\mathbf{z}_q \leftarrow f_q(\text{query})$ projected to $\Sspace^{d-1}$
   \STATE {\bfseries Define Radius:} $R(node) \leftarrow D(node) + 1 + \frac{\log(1 + N(node))}{\log 2}$
   \STATE $r_q \leftarrow 1-\text{Normalize}(R(\text{query}))$
   \FOR{\textbf{each} candidate $c \in \mathcal{C}$}
       \STATE $\mathbf{z}_c \leftarrow \text{Embedding}(c)$
       \STATE $r_c \leftarrow 1-\text{Normalize}(R(c))$
       \STATE {Compute Intrinsic Similarity:}
       \STATE $\theta_{qc} \leftarrow \arccos(\langle \mathbf{z}_q, \mathbf{z}_c \rangle)$
       \STATE $\mathcal{S}_{\text{ang}} \leftarrow \cos(\theta_{qc})$ \COMMENT{Cosine similarity}
       
       \STATE {Compute Radial Coupling:}
       \STATE $\Delta r \leftarrow |r_q - r_c|$
       \STATE $\tau_{\text{thresh}} \leftarrow 1 - \gamma(\Delta r)^2$ \COMMENT{Dynamic Decision Boundary}
       
       \STATE {Apply Coupled Gate:}
       \IF{$\mathcal{S}_{\text{ang}} > \tau_{\text{thresh}}$}
           \STATE $\text{Score}_c \leftarrow \mathcal{S}_{\text{ang}}$
       \ELSE
           \STATE $\text{Score}_c \leftarrow 0$ \COMMENT{Outside valid orbital aperture}
       \ENDIF
   \ENDFOR
   \STATE \textbf{Return} $\text{Top-}k(\{\text{Score}_c\}_{c \in \mathcal{C}})$
\end{algorithmic}
\end{algorithm}
\section{Performance on the Full CUB-200-2011 Dataset}\label{app:full_birds}
As described in Section~\ref{app:dataset}, we evaluate multimodal performance using an image-to-label ranking formulation. To assess scalability and robustness beyond the reduced experimental setting, we additionally report results on the complete CUB-200-2011 dataset. Table~\ref{tab:cub_full} summarizes the corresponding performance comparisons. The results demonstrate that \modelname\ consistently outperforms strong geometric and hyperbolic baselines across all evaluation metrics on the full dataset. In particular, the improvements in Precision@1 and MRR indicate superior retrieval quality and ranking consistency, while the substantially lower MR reflects more accurate hierarchical alignment overall.

\begin{table}[ht]
\centering
\caption{\textbf{Results on the full CUB-200-2011 dataset.} First (\colorbox{orange!45}{ }), Second (\colorbox{orange!25}{ }), and Third (\colorbox{yellow!35}{ }) best models are highlighted. Performance is reported as $mean^{std}$.}
\label{tab:cub_full}
\small
\begin{tabular}{lccc}
\toprule
\textbf{Method} & \textbf{Precision@1} ($\uparrow$) & \textbf{MR} ($\downarrow$) & \textbf{MRR} ($\uparrow$) \\
\midrule
CLIP-2      & $10.2^{1.5}$ & $35.4^{3.2}$ & $21.5^{1.4}$ \\
HyperExpan  & $24.5^{1.8}$ & \cellcolor{orange!25}$14.1^{2.1}$ & $39.7^{1.7}$ \\
Box         & \cellcolor{yellow!35}$31.8^{2.3}$ & $17.6^{1.5}$ & \cellcolor{yellow!35}$41.2^{2.0}$ \\
Gumbel Box  & \cellcolor{orange!25}$35.2^{2.1}$ & \cellcolor{yellow!35}$15.4^{1.2}$ & \cellcolor{orange!25}$45.1^{1.8}$ \\
\midrule
\rowcolor{orange!45}
\textbf{\modelname{}} & $\mathbf{44.2^{1.2}}$ & $\mathbf{8.9^{1.0}}$ & $\mathbf{57.2^{0.3}}$ \\
\bottomrule
\end{tabular}
\end{table}
\section{Hyperparameter Analysis and Reproducibility}\label{app:hyperparameter_analysis}
In this section, we provide an analysis on key hyperparameters, namely, embedding dimension $d$ and Welsch loss scale parameter $c$. Also, table \ref{tab:hyperparameters} reports the hyperparameter configurations employed for all datasets on which \modelname{} was evaluated. These hyperparameters were used for most of the experiments conducted to validate \modelname.
\subsection{Embedding Dimension}
We conduct an analysis on the number of dimensions used to represent $\mathbf{z}$ on Science and Environment as shown in figure \ref{fig:dim_ablation}. For both datasets, performance peaks at $d=64$ to $d=128$ and degrades at higher dimensions. This indicates that the coupled orbital geometry captures hierarchical relationships without requiring very high dimensional latent spaces.
\subsection{Welsch Loss Parameter $c$}
We conduct experiments on the Welsch loss parameter $c$ as shown in figure \ref{fig:welsch_ablation}. The parameter $c$ controls the transition of the loss from quadratic to saturated behavior: for small residuals, the Welsch loss approximates the Mean Squared Error (MSE), while for large residuals the penalty saturates, effectively downweighting outliers. Therefore, $c$ acts as a robustness threshold that balances sensitivity to local geometric misalignment against stability under large structural deviations. As shown in ablations for science and environment,  moderate values of $c$ consistently yield improved performance across datasets, whereas excessively small or large values degrade performance due to over-robustification and insufficient noise suppression, respectively. Thereby a value between 0.4 and 0.7 is ideal for parameter $c$.  
\begin{figure*}[h!]
\centering

\begin{subfigure}[h!]{0.7\textwidth}
    \centering
    \includegraphics[width=\linewidth]{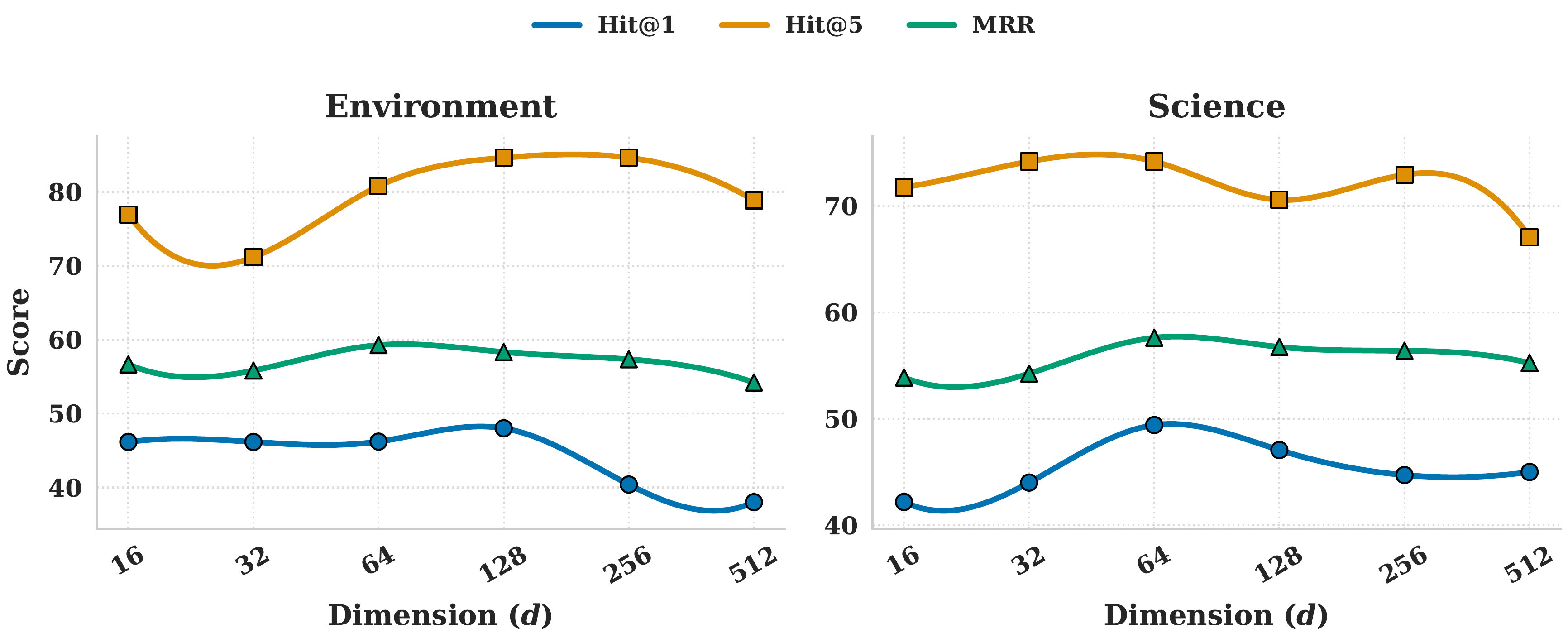}
    \caption{
    \textbf{Embedding dimension sensitivity.}
    Performance improves with larger latent dimensionality before gradually saturating at higher dimensions, indicating stable scaling behavior across datasets.
    }
    \label{fig:dim_ablation}
\end{subfigure}

\begin{subfigure}[h!]{0.7\textwidth}
    \centering
    \includegraphics[width=\linewidth]{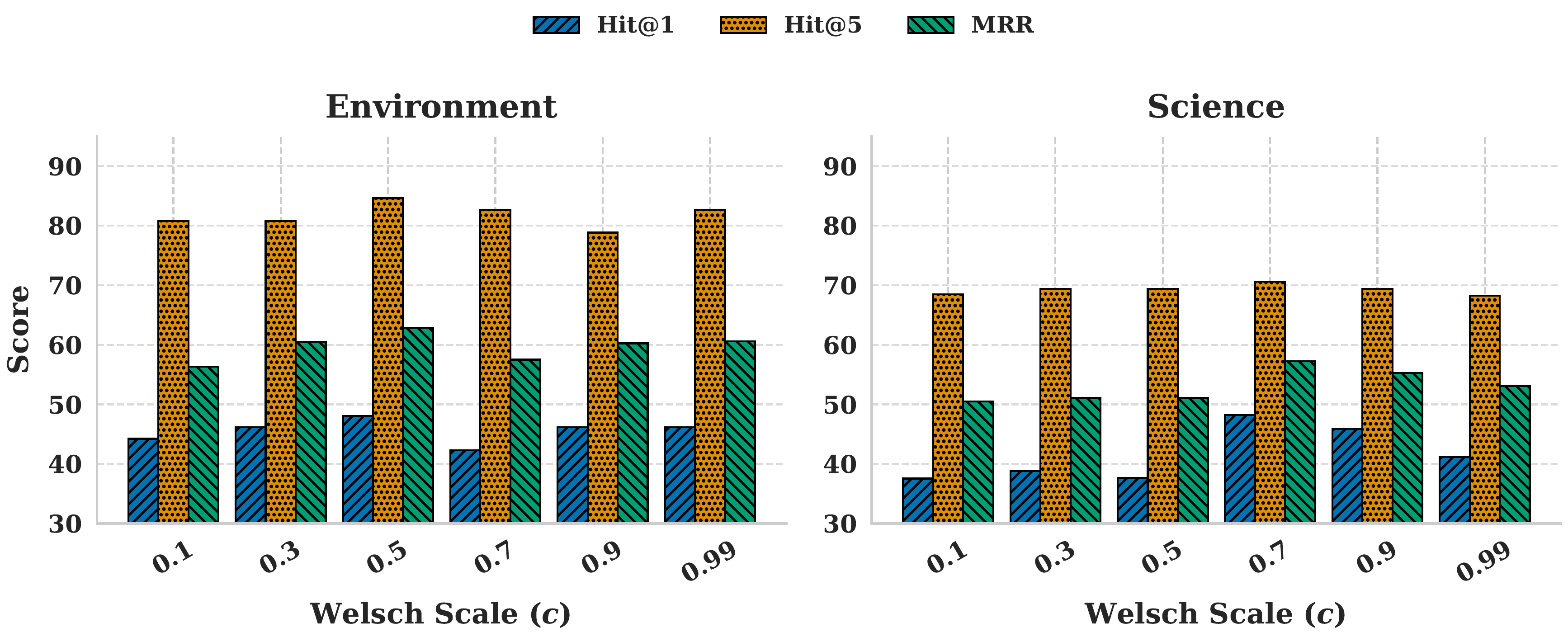}
    \caption{
    \textbf{Sensitivity to Welsch loss scale.}
    Moderate Welsch scale values consistently yield the strongest performance, while degradation away from the optimum remains gradual and stable.
    }
    \label{fig:welsch_ablation}
\end{subfigure}
\caption{
\textbf{Hyperparameter sensitivity analysis.}
Influence of embedding dimensionality ($d$) and Welsch loss scale ($c$) on retrieval performance across the Science and Environment datasets.
}
\label{fig:hyperparams}
\end{figure*}
\begin{table*}[t]
\centering
\caption{
\textbf{Dataset specific hyperparameter configurations used in \modelname{}.}
Most hyperparameters remain stable across datasets, indicating consistent optimization behavior and low sensitivity to extensive tuning.
}
\label{tab:hyperparameters}
\footnotesize
\renewcommand{\arraystretch}{1.03}
\setlength{\tabcolsep}{3.6pt}

\begin{tabular*}{\textwidth}{@{\extracolsep{\fill}}l|ccccccc}
\toprule

\textbf{Hyperparameter} &
\small\textbf{Science} &
\small\textbf{Environment} &
\small\textbf{WordNet} &
\small\textbf{MeSH} &
\small\textbf{Verb} &
\small\textbf{Food} &
\small\textbf{Birds} \\

\midrule

\multicolumn{8}{c}{\textit{Architecture}} \\
\midrule

Embedding dimension $d$
& 128 & 64 & 128 & 128 & 256 & 128 & 64 \\

Projection hidden size
& 64 & 64 & 64 & 64 & 64 & 64 & 64 \\

Projection depth
& 2 & 2 & 2 & 2 & 2 & 2 & 2 \\

\midrule

\multicolumn{8}{c}{\textit{Optimization}} \\
\midrule

PLM learning rate
& $9\times10^{-5}$
& $9\times10^{-5}$
& $9\times10^{-5}$
& $9\times10^{-5}$
& $9\times10^{-5}$
& $9\times10^{-5}$
& -- \\

Projection learning rate
& $1\times10^{-3}$
& $1\times10^{-3}$
& $1\times10^{-3}$
& $1\times10^{-3}$
& $1\times10^{-3}$
& $1\times10^{-3}$
& $1\times10^{-3}$ \\

Number of epochs
& 50 & 50 & 30 & 75 & 75 & 50 & 50 \\

Gradient accumulation steps
& 3 & 3 & 5 & 5 & 5 & 5 & 3 \\

Batch size
& 64 & 64 & 128 & 128 & 128 & 64 & 32 \\

Negative samples
& 50 & 50 & 10 & 20 & 20 & 20 & 5 \\

\midrule

\multicolumn{8}{c}{\textit{Probabilistic Alignment}} \\
\midrule

vMF margin
& 0.3 & 0.3 & 0.3 & 0.3 & 0.3 & 0.3 & 0.3 \\

Probabilistic weight
& 0.3 & 0.3 & 0.3 & 0.3 & 0.3 & 0.3 & 0.3 \\

\midrule

\multicolumn{8}{c}{\textit{SVGD Dynamics}} \\
\midrule

$\kappa_{\mathrm{align}}$
& 1.0 & 1.0 & 1.0 & 1.0 & 1.0 & 1.0 & 1.0 \\

$\kappa_{\mathrm{repel}}$
& 2.0 & 2.0 & 2.0 & 4.5 & 4.5 & 2.0 & 4.5 \\

SVGD weight
& 0.1 & 0.1 & 0.1 & 0.1 & 0.1 & 0.1 & 0.1 \\

\midrule

\multicolumn{8}{c}{\textit{Geometric Regularization}} \\
\midrule

Geometric margin
& 0.5 & 0.5 & 0.5 & 0.5 & 0.5 & 0.5 & 0.5 \\

Geometric weight
& 0.7 & 0.7 & 0.7 & 0.7 & 0.7 & 0.7 & 0.7 \\

Welsch loss scale $c$
& 0.7 & 0.4 & 0.4 & 0.4 & 0.4 & 0.4 & 0.4 \\
\bottomrule
\end{tabular*}
\end{table*}
\section{Robustness to Hierarchy Noise}\label{app:hierarchy_noise}
Since radius $r$ is calculated deterministically, we perform edge removal tests to show the robustness of \modelname\ to incomplete taxonomies. Table \ref{tab:robustness} demonstrates these results. We notice that performance degrades smoothly as corruption increases, avoiding abrupt collapse. \modelname\ is highly resilient to small perturbations (5\% to 15\% missing edges) and approaches a stable lower bound under severe sparsity, proving it leverages available structure without becoming brittle. 

\section{Time Complexity Analysis}\label{app:time}
In this section, we analyze the computational complexity and scalability characteristics of SVGD within \modelname\. In our implementation, particle interactions are restricted to the mini-batch \(B\), yielding a computational complexity of \(\mathcal{O}(B^2 d)\), where \(d\) denotes the embedding dimension. Since SVGD is applied locally within each batch rather than globally across the dataset, the method remains computationally tractable even at large scales. We further report empirical wall-clock evaluations on MeSH using a batch size of 256. The additional overhead introduced by SVGD is modest, contributing approximately 20 seconds per training epoch relative to the non-SVGD variant. Although \modelname{} requires roughly \(1.9\times\) the training time of the fastest baseline, TaxoExpan\cite{taxoexpan}, inference remains highly efficient with a latency of only 92\,ms per query, making the framework suitable for real-time deployment scenarios. Overall, the computational overhead introduced by SVGD is small relative to the substantial improvements in geometric stability, hierarchical consistency, and downstream predictive performance. Detailed efficiency comparisons are presented in Table~\ref{tab:efficiency}.
\begin{table}[t]
\centering
\caption{
\textbf{Robustness to structural corruption.}
Performance of \modelname\ under progressive random edge removal. \modelname{} exhibits smooth degradation under increasing graph corruption, indicating resilience to missing hierarchical relations. Retention denotes performance relative to the uncorrupted graph.
}
\label{tab:robustness}

\small
\renewcommand{\arraystretch}{1.05}
\setlength{\tabcolsep}{4pt}

\begin{tabular}{c|ccc|ccc}
\toprule

\multirow{2}{*}{\textbf{Removed}} &
\multicolumn{3}{c|}{\textbf{Environment}} &
\multicolumn{3}{c}{\textbf{WordNet}} \\

\cmidrule(lr){2-4}
\cmidrule(lr){5-7}

&
\textbf{Hit@1} &
\textbf{Retention} &
$\mathbf{\Delta\downarrow}$ &

\textbf{Hit@1} &
\textbf{Retention} &
$\mathbf{\Delta\downarrow}$ \\

\midrule

\textbf{0\%}
& \textbf{53.0}
& \textbf{100\%}
& \textbf{0.0}
& \textbf{26.5}
& \textbf{100\%}
& \textbf{0.0} \\

5\%
& 52.2
& 98.5\%
& 0.8
& 24.3
& 91.7\%
& 2.2 \\

10\%
& 51.0
& 96.2\%
& 2.0
& 23.3
& 87.9\%
& 3.2 \\

15\%
& 51.0
& 96.2\%
& 2.0
& 22.0
& 83.0\%
& 4.5 \\

20\%
& 50.0
& 94.3\%
& 3.0
& 20.7
& 78.1\%
& 5.8 \\

25\%
& 47.2
& 89.1\%
& 5.8
& 18.2
& 68.7\%
& 8.3 \\

30\%
& 45.2
& 85.3\%
& 7.8
& 16.4
& 61.9\%
& 10.1 \\

35\%
& 42.3
& 79.8\%
& 10.7
& 15.4
& 58.1\%
& 11.1 \\

40\%
& 40.4
& 76.2\%
& 12.6
& 15.0
& 56.6\%
& 11.5 \\

45\%
& 36.5
& 68.9\%
& 16.5
& 15.0
& 56.6\%
& 11.5 \\

\midrule

\rowcolor{black!4}

\textbf{50\%}
& \underline{32.7}
& \underline{61.7\%}
& \underline{20.3}
& \underline{12.6}
& \underline{47.5\%}
& \underline{13.9} \\

\bottomrule
\end{tabular}

\end{table}
\begin{table}[ht]
\centering
\caption{\textbf{Computational efficiency comparison.} \modelname\ maintains competitive training and inference overhead despite its complex geometric objectives.}
\label{tab:efficiency}
\small
\begin{tabular}{lcc}
\toprule
\textbf{Model} & \textbf{Inference Time} (ms/query) & \textbf{Training Time} (/Epoch) \\
\midrule
BERT+MLP        & 45 & 6m 25s \\
TaxoExpan      & 62 & 4m 40s \\
HyperExpan     & 78 & 6m 15s \\
Box Embeddings & 75 & 8m 35s \\
ConE           & 72 & 6m 50s \\
\midrule
\rowcolor{blue!5} \textbf{\modelname{}} & \textbf{92} & \textbf{9m 00s} \\
\modelname{} w/o SVGD & 92 & 8m 40s \\
\modelname{} w/o vMF  & 92 & 8m 47s \\
\bottomrule
\end{tabular}
\end{table}
\section{Statistical Tests}
\label{app:statistical_analysis}
We perform statistical tests to quantify the significance of \modelname's improvements across hierarchical expansion tasks. Table \ref{tab:fisher_test} reports aggregated $z$-scores and $p$-values, demonstrating significant gains over  baselines  for single parent, DAG expansion and multimodal hierarchies  with all $p$-values falling below the standard significance threshold of 0.05. This confirms that observed gains are not due to random chance and are statistically significant across all evaluated domains. 
\begin{table}
\centering
\caption{Statistical significance analysis using $z$-test across hierarchical expansion tasks.}
\label{tab:fisher_test}
\small
\begin{tabular}{llccc}
\toprule
\textbf{Task} & \textbf{Dataset} & $z$ & \textbf{$p$-value} \\
\midrule

\multirow{3}{*}{\textbf{Single-Parent Taxonomy Expansion}} 
& Science   & $9.90$ &$7.2e-09$  \\
& WordNet &  $8.14$& $2.1e-07$  \\
& Environment &$10.39$& $2.3e-12$  \\
\midrule

\multirow{3}{*}{\textbf{DAG Expansion}} 
& MeSH & $12.51$ & $3.1e-19$  \\
& Verb   & $4.26$ & $8.9e-03$  \\
& Semeval Food  & $3.45$  &$3.5e-02$  \\
\midrule

\textbf{Multimodal Hierarchy} 
& Birds &  $3.53$ & $9.7e-03$  \\
\bottomrule
\end{tabular}
\end{table}

\section{Additional Derivations required in Proofs}\label{app:derive_area}
In this section, we elaborate on derivations that we used in proofs.
\subsection{Detailed Derivation of Surface Area of Strip by Change of Variables}
Equation \eqref{eq:prob_sphere} is obtained by change of variables from angle $\psi$ to linear coordinate $t$. On a unit hypersphere, let $\psi$ be the latitudinal angle ranging from 0 to $\pi$. The surface area of a strip at angle $\psi$ is proportional to, 
\begin{align}
    A(\psi)\propto\sin ^{d-2}(\psi)
\end{align}
Therefore, the probability distribution with respect to the angle is 
\begin{align}
\label{eq:prob}
    P(\psi)d\psi\propto\sin^{d-2}(\psi)d\psi
\end{align}
Let the projection $h(z)=t$. The relationship between the linear coordinate $t$ and $\psi$ is $t=\cos\psi$. Now through change of variables, 
\begin{align}
    dt=-\sin\psi\,d\psi\\
    d\psi=\frac{-dt}{\sin\psi}
\end{align}
Since $\sin\psi=\sqrt{1-\cos^2\psi}$ and $\cos \psi=t$, we obtain,
\begin{align}
    d\psi=\frac{dt}{\sqrt{1-t^2}}
\end{align}
Substituting back in \eqref{eq:prob}, we obtain
\begin{align}
    P(t)dt\propto\frac{(\sqrt{1-t^2})^{d-2}}{\sqrt{1-t^2}}dt\\
    P(t)\propto(\sqrt{1-t^2})^{d-3}
\end{align}
\subsection{Solution of  $\int\exp{\frac{-dt^2}{2}}dt$}\label{app:error}
To solve this integral, we make use of the error function. The error function is defined as follows,
\begin{align}
    \label{eq:error_func}
    \operatorname{erf}(x)=\frac{2}{\sqrt{\pi}}\int_0^{x}\exp(-t^2)dt
\end{align}
Taking the derivative on both sides, 
\begin{align}
\label{eq:derivative_erf}
\frac{d}{dx}(\operatorname{erf}(x))=\frac{2}{\sqrt{\pi}}\exp{(-x^2)}
\end{align}
Let $I=\int\exp{\frac{-dt^2}{2}dt}$. Consider the substitution,
\begin{align}
    u=t\sqrt{\frac{d}{2}}\\
    \label{eq:subs_1}
    du=\sqrt{\frac{d}{2}} dt
\end{align}
Plugging \eqref{eq:subs_1} in the integral $I$, we get,
\begin{align}
    I=\sqrt\frac{2}{d}\int\exp-u^2du
\end{align}
From \eqref{eq:derivative_erf}, we can now write $I$ as,
\begin{align}
    I=\sqrt{\frac{\pi}{2d}}\int\frac{d}{du}(\operatorname{erf}(u))du
\end{align}
By taking the integral of the derivative, we get,
\begin{align}
    I=\sqrt{\frac{\pi}{2d}}\operatorname{erf}(u)
\end{align}
On substituting $u$, we finally have,
\begin{align}
    I=\sqrt{\frac{\pi}{2d}}\operatorname{erf}(t\sqrt{\frac{d}{2}})
\end{align}

\end{document}